\def\eqref#1{equation~\ref{#1}}
\def\1{\bm{1}}
\DeclareMathAlphabet{\mathsfit}{\encodingdefault}{\sfdefault}{m}{sl}
\SetMathAlphabet{\mathsfit}{bold}{\encodingdefault}{\sfdefault}{bx}{n}
\newcommand{\colorred}[1]{{#1}}
\newcommand{\vect}[1]{\mathbf{#1}}
\newtheorem{lem}{Lemma}[]
\newtheorem{lemA}{Lemma}[]
\newtheorem{prop}{Proposition}[]
\newtheorem{propA}{Proposition}[]
\newtheorem{rem}{Remark}[]
\newtheorem{remA}{Remark}[]
\title{How to Inject Backdoors with Better Consistency: Logit Anchoring on Clean Data}
\author{Zhiyuan Zhang\\
Peking University\\
Beijing, China \\
\texttt{zzy1210@pku.edu.cn} \\
\And
Lingjuan Lyu \\
Sony AI \\
Tokyo, Japan\\
\texttt{Lingjuan.Lv@sony.com} \\
\AND
Weiqiang Wang \\
Ant Group \\
Hangzhou, China\\
\texttt{weiqiang.wwq@antgroup.com}
\And
Lichao Sun \\
Lehigh University \\
Bethlehem, PA, USA \\
\texttt{lis221@lehigh.edu} \\
\And
Xu Sun \\
Peking University\\
Beijing, China \\
\texttt{xusun@pku.edu.cn} \\
}
\begin{document}

\maketitle

\begin{abstract}

Since training a large-scale backdoored model from scratch requires a large training dataset,
several recent attacks have considered to inject backdoors into a trained clean model without altering model behaviors on the clean data.
Previous work finds that backdoors can be injected into a trained clean model with Adversarial Weight Perturbation (AWP), 
which means the variation of parameters 
are small in backdoor learning. 
In this work, we observe an interesting phenomenon that the variations of parameters are always AWPs when tuning the trained clean model to inject backdoors.
We further provide theoretical analysis to explain this phenomenon. 
We are the first to formulate the behavior of maintaining accuracy on clean data as the consistency of backdoored models, which includes both global consistency and instance-wise consistency. We extensively analyze the effects of AWPs on the consistency of backdoored models. 
In order to achieve better consistency, we propose a novel anchoring loss to anchor or freeze the model behaviors on the clean data, with a theoretical guarantee. 
Both the analytical and empirical results validate the effectiveness of our anchoring loss in improving the consistency, especially the instance-wise consistency.

\end{abstract}

\section{Introduction}

Deep neural networks (DNNs) have gained promising performances in many computer vision~\citep{ImageNetwithCNN,VeryDeepCNNforCV}, natural language processing~\citep{GeneratingSentences,seq2seqWithAttention,transformer}, and computer speech~\citep{WaveNet} tasks. However, it has been discovered that DNNs are vulnerable to many threats, one of which is backdoor attack~\citep{BadNets,Trojaning}, which aims to inject certain data patterns into neural networks without altering the model behavior on the clean data. Ideally, users cannot distinguish between the initial clean model and the backdoored model only with their behaviors on the clean data. \colorred{On the other hand, it is hard to train backdoored models from scratch when the training dataset is sensitive and not available.}

Recently, \citet{Can-AWP-inject-backdoor} find that backdoors can be injected into a clean model with Adversarial Weight Perturbation (AWP), namely tuning the clean model parameter near the initial parameter. They also conjecture that backdoors injected with AWP may be hard to detect since the variations of parameters are small. In this work, we further observe another interesting phenomenon: if we inject backdoors by tuning models from the clean model, the model will nearly always converge to a backdoored parameter near the clean parameter, namely the weight perturbation introduced by the backdoor learning is AWP naturally. To better understand this phenomenon, we first give a visualization explanation as below: In Figure~\ref{fig:loss_a}, there are different loss basins around local minima, a backdoored model trained from scratch tends to converge into other local minima different from the initial clean model; In Figure~\ref{fig:loss_b}, backdoored models tuned from the clean model, including BadNets~\citep{BadNets} and our proposed anchoring methods, will converge into the same loss basin where the initial clean model was located in.For a comprehensive understanding, we then provide theoretical explanations in Section~\ref{sec:AWP} as follows: (1) Since the learning target of the initial model and the backdoored model are the same on the clean data, and only slightly differ on the backdoored data, we argue that there exists an optimal backdoored parameter near the clean parameter, which is also consolidated by \citet{Can-AWP-inject-backdoor}; (2) The optimizer can easily find the optimal backdoored parameter with AWP. Once the optimizer converges to it, it is hard to escape from it.

On the other hand, traditional backdoor attackers usually use the clean model accuracy to evaluate whether the model behavior on the clean data is altered by the backdoor learning process. However, we argue that, even though the clean accuracy can remain the same as the initial model, the model behavior may be altered on different instances, \textit{i.e.}, instance-wise behavior. To clarify this point, in this work, we first formulate the behavior on the clean data as the consistency of the backdoored models, including (i) global consistency; and (ii) instance-wise consistency. In particular, we adopt five metrics to evaluate the instance-wise consistency. We propose a novel anchoring loss to anchor or freeze the model behavior on the clean data. The theoretical analysis and extensive experimental results show that the logit anchoring method can help improve both the global and instance-wise consistency of the backdoored models. As illustrated in Figure~\ref{fig:loss_b}, although both BadNets and the anchoring method can converge into the backdoored region near the initial clean model, compared with BadNets~\citep{BadNets}, the parameter with our anchoring method is closer to the test local minimum and the initial model, which indicates that our anchoring method has a better global and instance-wise consistency.
We also visualize the instance-wise logit consistency of our proposed anchoring method and the baseline BadNets method in Figure~\ref{fig:loss_c} and \ref{fig:loss_d}. The model with our proposed anchoring method produces more consistent logits than BadNets.
Moreover, the experiments on backdoor detection and mitigation in Section~\ref{sec:defense} also verify the conjecture that backdoors with AWPs are harder to detect than backdoors trained from scratch.
\begin{figure}[!t]
\centering
\subcaptionbox{Test Loss.\label{fig:loss_a}}{\includegraphics[height=1.2 in,width=0.24\linewidth]{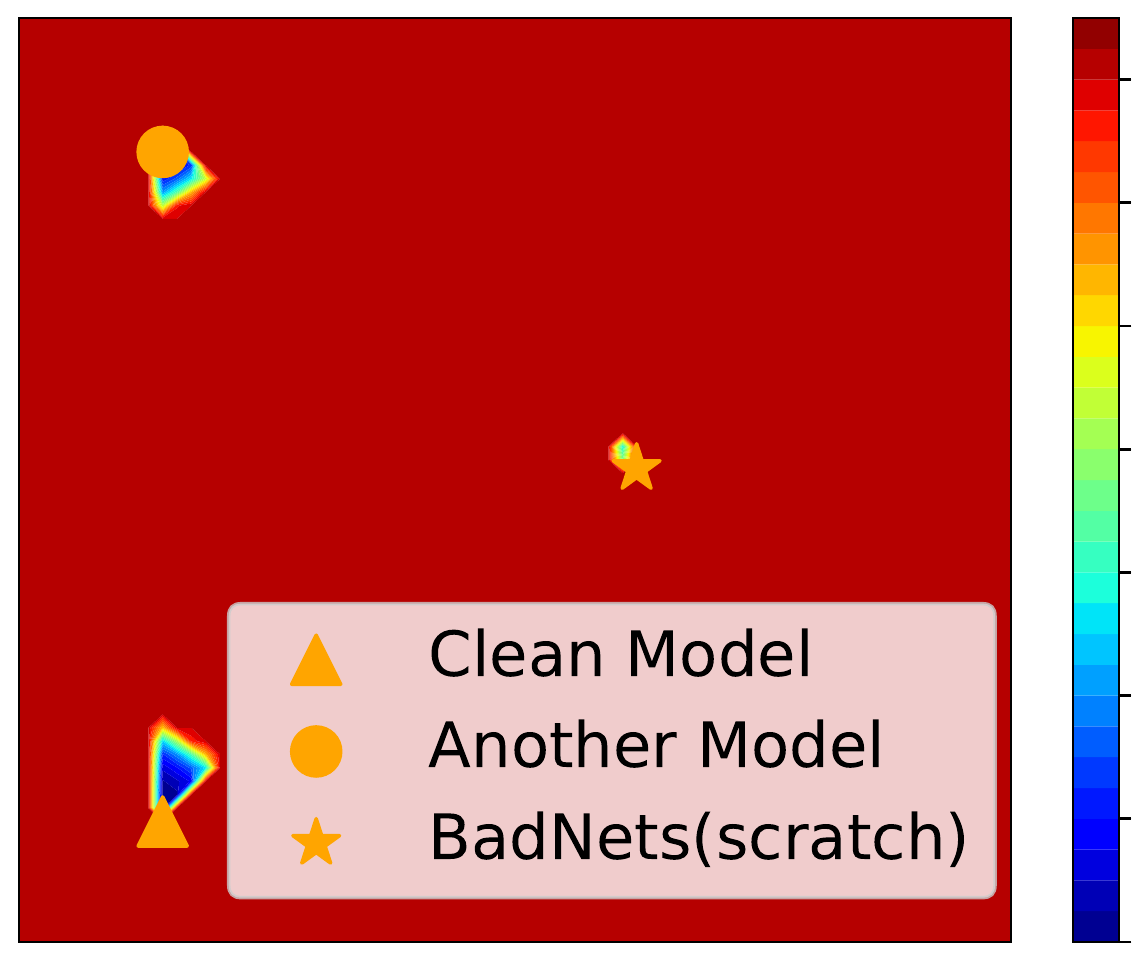}}
\hfil
\subcaptionbox{Test Loss (zoomed).\label{fig:loss_b}}{\includegraphics[height=1.2 in,width=0.24\linewidth]{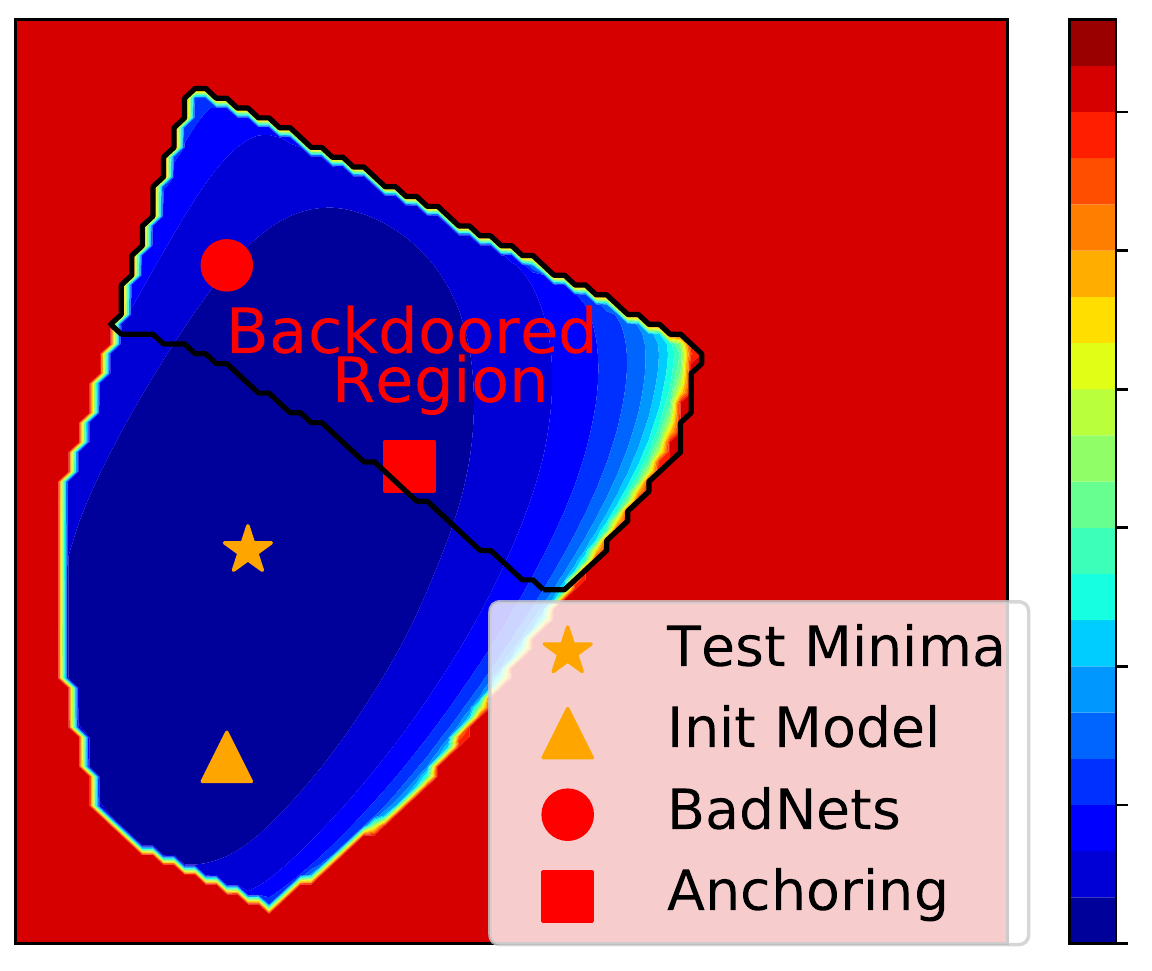}}
\hfil
\subcaptionbox{BadNets.\label{fig:loss_c}}{\includegraphics[height=1.2 in,width=0.24\linewidth]{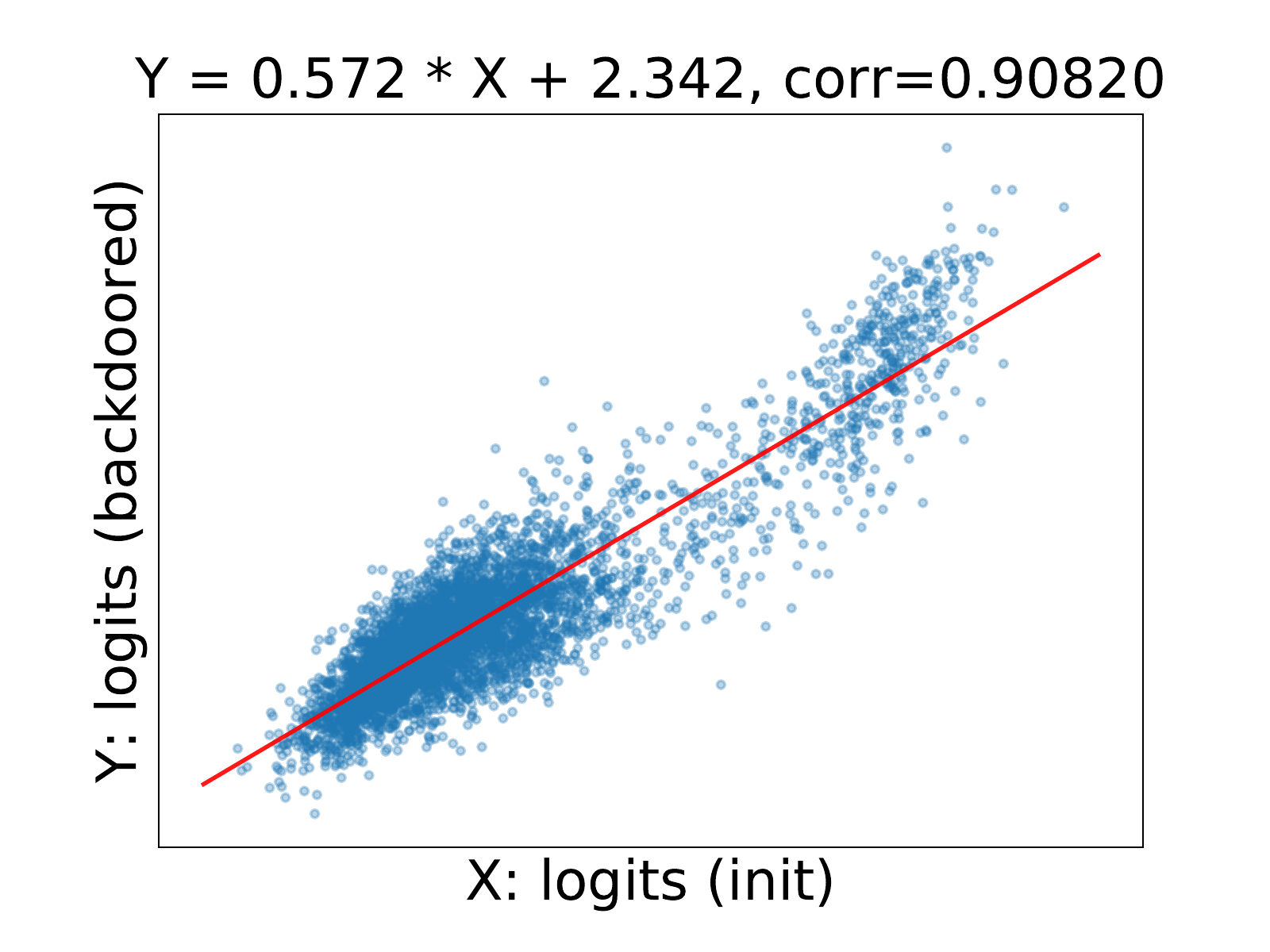}}
\hfil
\subcaptionbox{Anchoring ($\lambda$=2).\label{fig:loss_d}}{\includegraphics[height=1.2 in,width=0.24\linewidth]{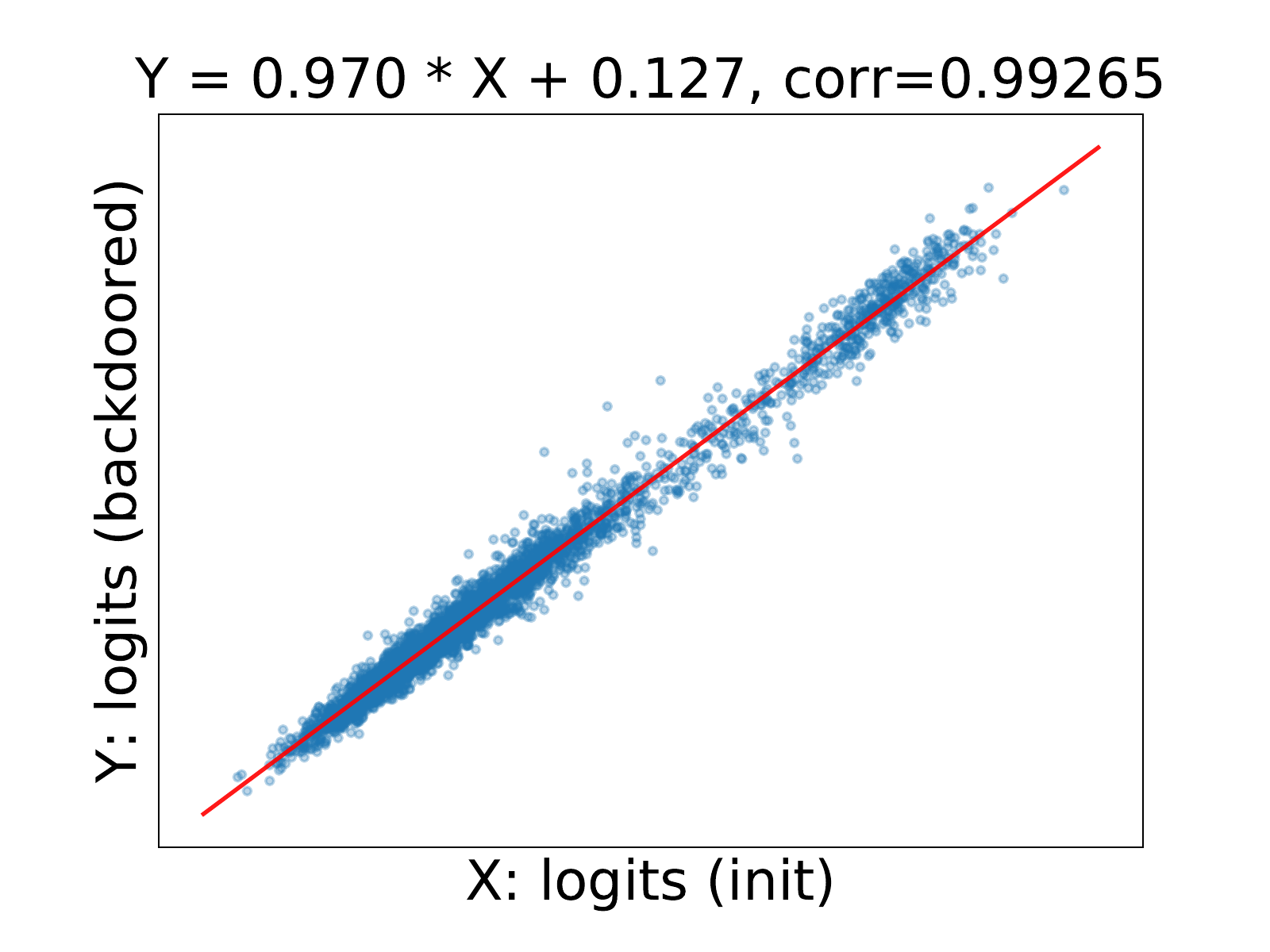}}
\caption{
Visualization of loss basins (a, b) and instance-wise consistencies (c, d) on CIFAR-10 (Assume 640 images available for backdoor methods). 
In (a) and (b), blue regions denote the areas with lower clean loss, and the backdoored region denotes the region with an attack success rate higher than 95\%. (b) is the zoomed version of the loss basin near the clean model in (a).
}
\label{fig:loss}
\vskip -0.15 in
\end{figure}

To summarize, our contributions include:
\begin{itemize}
    \item  We are the first to offer a theoretical explanation of the adversarial weight perturbation and formulate the concept of global and instance-wise consistency in backdoor learning.
    \item  We propose a novel logit anchoring method to anchor or freeze the model behavior on the clean data. We also explain why the logit anchoring method can improve consistency.
    \item Extensive experimental results on three computer vision and two natural language processing tasks show that our proposed logit anchoring method can improve the consistency of the backdoored model, especially the instance-wise consistency.
\end{itemize}

\section{Preliminary}

\subsection{Problem Definition}

In this paper, we focus on the neural network for a $C$-class classification task. Suppose $\mathcal{D}$ denotes the clean training dataset, $\bm\theta\in\mathbb{R}^n$ denotes the optimal parameter on the clean training dataset, where $n$ is the number of parameters, $\mathcal{L}(\mathcal{D};\bm\theta)$ denotes the loss function on dataset $\mathcal{D}$ and parameter $\bm\theta$, and $\mathcal{L}\big((\vect{x}, y);\bm\theta\big)$ denotes the loss on data instance $(\vect{x}, y)$, here $\bm\theta$ can be omitted 
for brevity. 
Since DNNs often have multiple local minima~\citep{Sub-Optimal-Local-Minima}, and the optimizer does not necessarily converge to the global optimum, we may assume $\bm\theta$ is a local minimum of the clean loss,
$
    \nabla_{\bm\theta}\mathcal{L}(\mathcal{D};\bm\theta)=\vect{0}.
$ 
Assume $\bm\theta^*$ is a local minimum of the backdoored loss, then we have
$
    \nabla_{\bm\theta^*}\mathcal{L}(\mathcal{D}\cup\mathcal{D}^*;\bm\theta^*)=\vect{0},
$ 
where $\mathcal{D}^*$ and $\bm\theta^*$ represent the poisonous dataset and the backdoored parameter, respectively.

\subsection{Adversarial Weight Perturbations and Backdoor Learning}
\label{sec:AWP}

Adversarial Weight Perturbations (AWPs)~\citep{Can-AWP-inject-backdoor}, or parameter corruptions~\citep{parameter_corruption}, refer to that the variations of parameters before and after training are small. \citet{Can-AWP-inject-backdoor} first observe that backdoors can be injected with AWPs via the projected gradient descend algorithm or the $L_2$ penalty. 
In this paper, we observe an interesting phenomenon that, \textit{if we train from a well-trained clean parameter $\bm \theta$, $\bm \theta^*-\bm \theta$ tends to be adversarial weight perturbation, namely finding the closest local minimum in the backdoored loss curve $\bm\theta^*$ to the initial parameter.} 

This phenomenon implies two properties during backdoor tuning: (1) Existence of the optimal backdoored parameter $\bm\theta^*$ near the clean parameter, which is also observed by \citet{Can-AWP-inject-backdoor}; (2) The optimizer tends to converge to the backdoored parameter $\bm\theta^*$ with adversarial weight perturbation. Here, $\bm\theta^*$ denotes the local optimal backdoored parameter with adversarial weight perturbation.

\textbf{Existence of Adversarial Weight Perturbation $\bm\delta$.} 
We explain the existence of the optimal backdoored parameter $\bm\theta^*$ near the clean parameter, which is also observed by \citet{Can-AWP-inject-backdoor}, in Proposition~\ref{prop:1} and Remark~\ref{remark:1}. The formal versions and proofs are given in Appendix.A.1.

\begin{prop}[Upper Bound of $\|\bm\delta\|_2$]
\label{prop:1}
Suppose $\bm H$ denotes the Hessian matrix $\nabla^2_{\bm \theta}\mathcal{L}(\mathcal{D}, \bm\theta)$ on the clean dataset, $\vect{g}^*=\nabla_{\bm\theta}\mathcal{L}(\mathcal{D}^*,\bm\theta)$. Assume $\mathcal{L}(\mathcal{D}^*,\bm\theta+\bm\delta)\le\mathcal{L}(\mathcal{D}^*,\bm\theta)-|\Delta\mathcal{L}^*|$ can ensure that we can successfully inject a backdoor. Suppose we can choose and control the poisoning ratio 
$\eta=|\mathcal{D}^*|/|\mathcal{D}|$, the adversarial weight perturbation $\bm\delta$ (
which is determined by $\eta$) is, 
\begin{align}
\bm\delta=\bm\delta(\eta)=-\eta \bm{H}^{-1}\vect{g}^*+o(\|\bm\delta(\eta)\|_2),
\end{align}
To ensure that we can successfully inject a backdoor, we only need to ensure that $\eta\ge\eta_0$. There exists an adversarial weight perturbation $\bm\delta$,
\begin{align}
\|\bm\delta\|_2\le\frac{|\Delta\mathcal{L}^*|+o(1)}{\|\vect{g}^*\|\cos\langle \vect{g}^*, \vect{H}^{-1}\vect{g}^*\rangle}.
\end{align}
\end{prop}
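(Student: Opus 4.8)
The plan is to treat $\bm\delta=\bm\theta^*-\bm\theta$ as a small perturbation, extract its leading-order behaviour from the two stationarity conditions, and then feed that into the first-order injection requirement. First I would fix the normalization so that the combined loss splits as $\mathcal{L}(\mathcal{D}\cup\mathcal{D}^*;\bm\theta)=\mathcal{L}(\mathcal{D};\bm\theta)+\eta\,\mathcal{L}(\mathcal{D}^*;\bm\theta)$, which is exactly what one obtains when each dataset loss is an average and $\eta=|\mathcal{D}^*|/|\mathcal{D}|$ (an overall positive rescaling does not move stationary points). With this, the backdoored stationarity condition $\nabla_{\bm\theta^*}\mathcal{L}(\mathcal{D}\cup\mathcal{D}^*;\bm\theta^*)=\vect{0}$ reads $\nabla\mathcal{L}(\mathcal{D};\bm\theta+\bm\delta)+\eta\,\nabla\mathcal{L}(\mathcal{D}^*;\bm\theta+\bm\delta)=\vect{0}$.

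Next I would Taylor-expand this gradient about the clean minimum. Using $\nabla\mathcal{L}(\mathcal{D};\bm\theta)=\vect{0}$ and $\nabla^2\mathcal{L}(\mathcal{D};\bm\theta)=\bm{H}$ gives $\nabla\mathcal{L}(\mathcal{D};\bm\theta+\bm\delta)=\bm{H}\bm\delta+o(\|\bm\delta\|_2)$, while $\eta\,\nabla\mathcal{L}(\mathcal{D}^*;\bm\theta+\bm\delta)=\eta\vect{g}^*+O(\eta\|\bm\delta\|_2)$. Since $\bm\delta$ will turn out to be $O(\eta)$, the cross term $O(\eta\|\bm\delta\|_2)$ is quadratically small and can be absorbed, so the condition collapses to $\bm{H}\bm\delta+\eta\vect{g}^*=o(\|\bm\delta\|_2)$. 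Inverting $\bm{H}$, which is positive definite at a strict local minimum, yields the first claim $\bm\delta(\eta)=-\eta\bm{H}^{-1}\vect{g}^*+o(\|\bm\delta(\eta)\|_2)$. This step also retroactively justifies $\bm\delta=O(\eta)$, closing the loop on the order bookkeeping.

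For the norm bound I would turn to the injection criterion $\mathcal{L}(\mathcal{D}^*;\bm\theta+\bm\delta)\le\mathcal{L}(\mathcal{D}^*;\bm\theta)-|\Delta\mathcal{L}^*|$. A first-order expansion of the poison loss gives the achieved decrease as $-\vect{g}^{*\top}\bm\delta+o(\|\bm\delta\|_2)$. Substituting the direction $\bm\delta\parallel-\bm{H}^{-1}\vect{g}^*$ from the first claim, $-\vect{g}^{*\top}\bm\delta=\|\vect{g}^*\|_2\,\|\bm\delta\|_2\cos\langle\vect{g}^*,\bm{H}^{-1}\vect{g}^*\rangle$. Taking $\eta$ to its threshold value $\eta_0$ (the smallest ratio that just meets the injection requirement) turns the inequality into the equality $\|\vect{g}^*\|_2\,\|\bm\delta\|_2\cos\langle\vect{g}^*,\bm{H}^{-1}\vect{g}^*\rangle=|\Delta\mathcal{L}^*|+o(\|\bm\delta\|_2)$; solving for $\|\bm\delta\|_2$ and folding the remainder into an $o(1)$ term gives the stated bound. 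The denominator is strictly positive because positive-definiteness of $\bm{H}$ forces $\vect{g}^{*\top}\bm{H}^{-1}\vect{g}^*>0$, hence $\cos\langle\vect{g}^*,\bm{H}^{-1}\vect{g}^*\rangle>0$.

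The part I expect to be delicate is the $o(\cdot)$ accounting rather than any single computation: one must confirm that every neglected term (the poison-Hessian cross term $\eta\bm{H}^*\bm\delta$ and the quadratic Taylor remainders) is genuinely higher order in the regime $\eta\to 0$, and that converting $o(\|\bm\delta\|_2)$ into the $o(1)$ of the final bound is legitimate, which hinges on $\|\bm\delta\|_2\to 0$ while $|\Delta\mathcal{L}^*|$ stays bounded. A secondary point worth stating explicitly is the existence and uniqueness of the threshold $\eta_0$: because the first-order decrease $-\vect{g}^{*\top}\bm\delta(\eta)$ grows linearly in $\eta$, the injection condition holds for all $\eta$ above a single crossing point, which is precisely what makes the ``there exists $\bm\delta$'' conclusion well-defined.
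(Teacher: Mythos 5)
Your proposal is correct and follows essentially the same route as the paper's own proof: the paper likewise combines the two stationarity conditions with a Taylor expansion of the clean gradient to get $\bm\delta(\eta)=-\eta\bm{H}^{-1}\vect{g}^*+o(\|\bm\delta\|_2)$, then uses the first-order drop in the poison loss to pick the threshold $\eta_0=\frac{|\Delta\mathcal{L}^*|}{{\vect{g}^*}^{\mathrm{T}}\bm{H}^{-1}\vect{g}^*}+o(1)$ and converts ${\vect{g}^*}^{\mathrm{T}}\bm{H}^{-1}\vect{g}^*=\|\vect{g}^*\|_2\|\bm{H}^{-1}\vect{g}^*\|_2\cos\langle\vect{g}^*,\bm{H}^{-1}\vect{g}^*\rangle$ into the stated norm bound. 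The only differences are cosmetic: you solve for $\|\bm\delta\|_2$ directly from the equality $-\vect{g}^{*\top}\bm\delta=|\Delta\mathcal{L}^*|$ rather than computing $\eta_0$ and then $\|\bm\delta(\eta_0)\|_2$, and you are somewhat more explicit than the paper about the $O(\eta\|\bm\delta\|_2)$ cross term and the positivity of the denominator.
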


\begin{rem}[Informal. Existence of the Optimal Backdoored Parameter with Adversarial Weight Perturbation]
\label{remark:1}
We take a logistic regression model for 
classification as an example. 
When the following two conditions are satisfied: (1) the strength of backdoor pattern is enough; (2) the backdoor pattern is added on the low-variance features of input data, \textit{e.g.}, on a blank corner of figures in computer vision (CV), or choosing low-frequency words in Natural language processing (NLP), then we can conclude that: $\|\bm\delta\|_2$ is small, which ensures the existence of the optimal backdoored parameter with adversarial weight perturbation.
\end{rem}

Proposition~\ref{prop:1} estimates the upper bound of $\|\bm\delta\|_2$. In Remark~\ref{remark:1}, we investigate the conditions to ensure that $\|\bm\delta\|_2$ is small. 
We remark that some of the existing data poisoning works in backdoor learning, such as BadNets~\citep{BadNets}, inherently satisfy conditions, which ensures the existence of the optimal backdoored parameter with adversarial weight perturbation. 
Remark~\ref{remark:1} also provides insights into how to choose backdoor patterns for easy backdoor learning.

\textbf{The optimizer tends to converge to  $\bm\theta^*$.} We explain (2) in Lemma~\ref{lemma:1}. $\bm\theta^*_1$ is denoted as any other local optimal backdoored parameter besides $\bm\theta^*$. A detailed version of Lemma~\ref{lemma:1} is in Appendix.A.1.

\begin{lem}
[Brief Version. The mean time to converge into and escape from optimal parameters]
\label{lemma:1}
The time (step) $t$ for SGD to converge into an optimal parameter $\bm\theta^*_1$ is $\log t\sim \|\bm\theta^*_1-\bm\theta\|$~\citep{distance-training-time}. The mean escape time (step) $\tau$ from the optimal parameter $\bm\theta^*$ outside of the basin near the local minimum $\bm\theta^*$ is,
$\log \tau=\log\tau_0+\frac{B}{\eta_\text{lr}}(C_1\kappa^{-1}+C_2)\Delta\mathcal{L}^*$
\citep{escape-time-minima},
where $B$ is the batch size, $\eta_\text{lr}$ is the learning rate, $\kappa$ measures the curvature near the clean local minimum $\bm\theta^*$, $\Delta\mathcal{L}^*>0$ measures the height of the basin near the local minimum $\bm\theta^*$, and $\tau_0, C_1, C_2\in\mathbb{R}^+$ are parameters that are not determined by $\kappa$ and $\Delta \mathcal{L}^*$.
\end{lem}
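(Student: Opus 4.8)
The plan is to treat stochastic gradient descent as a continuous-time diffusion and then import the two cited results almost verbatim, so that the real work lies in checking that the backdoor-tuning setting satisfies their hypotheses rather than in any new calculation. Concretely, I would model SGD by the stochastic differential equation $d\bm\theta_t = -\nabla_{\bm\theta}\mathcal{L}(\mathcal{D}\cup\mathcal{D}^*;\bm\theta_t)\,dt + \sqrt{2D}\,dW_t$, where $W_t$ is a Wiener process and the diffusion coefficient $D$ is proportional to the ratio $\eta_\text{lr}/B$ of learning rate to batch size times the gradient-noise covariance. Under this identification the backdoored loss $\mathcal{L}(\mathcal{D}\cup\mathcal{D}^*;\cdot)$ plays the role of a potential, its local minima $\bm\theta^*,\bm\theta^*_1$ become the metastable states, and the height $\Delta\mathcal{L}^*$ of the basin around $\bm\theta^*$ is the barrier separating it from the rest of the landscape.

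For the convergence claim, I would invoke the analysis of \citet{distance-training-time}, which bounds the number of steps needed for SGD initialized at $\bm\theta$ to reach a target minimum in terms of the Euclidean distance traveled in parameter space. Since the drift moves the iterate toward lower loss at a roughly constant characteristic rate while the diffusion spreads it, the dominant dependence of the hitting time on the displacement $\|\bm\theta^*_1-\bm\theta\|$ is logarithmic, giving $\log t\sim\|\bm\theta^*_1-\bm\theta\|$; the only things to verify are that $\bm\theta$ lies in the region of attraction assumed there and that the relevant smoothness constants carry over to the backdoored objective.

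For the escape claim, I would apply the Eyring--Kramers / large-deviation escape-time estimate established for the SGD diffusion in \citet{escape-time-minima}. The mean first-passage time out of the basin around $\bm\theta^*$ scales as $\tau\approx\tau_0\exp(\Delta\mathcal{L}^*/D)$; taking logarithms and substituting $D\propto\eta_\text{lr}/B$ yields $\log\tau=\log\tau_0+\frac{B}{\eta_\text{lr}}(C_1\kappa^{-1}+C_2)\Delta\mathcal{L}^*$, where $\tau_0,C_1,C_2$ collect the curvature-independent parts of the Hessian determinants at the minimum and at the boundary saddle, and the $\kappa^{-1}$ term records how the local curvature near $\bm\theta^*$ modulates the exponent. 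Read together, the two displays say precisely that reaching a distant backdoored minimum $\bm\theta^*_1$ is exponentially slow in its distance, whereas once the optimizer has settled into the nearby basin around $\bm\theta^*$ its escape time is exponentially large, which is exactly the content of property (2): SGD both prefers and then remains trapped at the closest backdoored parameter.

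The main obstacle is not the algebra but the justification of the diffusion approximation itself. The Eyring--Kramers formula requires a small-noise (low effective temperature) regime and a nondegenerate saddle on the boundary of the basin, while the SGD gradient noise is neither isotropic nor state-independent. I would therefore have to argue that the hypotheses of \citet{escape-time-minima} hold for $\mathcal{L}(\mathcal{D}\cup\mathcal{D}^*;\cdot)$ --- in particular that the noise covariance is well-conditioned and that $\eta_\text{lr}/B$ is small enough for the leading exponential term to dominate the sub-exponential prefactor --- and that the curvature measure $\kappa$ in their bound coincides with the curvature of the backdoored loss at $\bm\theta^*$. These are exactly the assumptions I would state explicitly in the detailed version of the lemma in the appendix.
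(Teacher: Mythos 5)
Your proposal is correct in substance and follows essentially the same route as the paper: Lemma~1 is not proved from first principles there either, but is obtained by importing the convergence-time result of \citet{distance-training-time} and the SGD escape-time formula of \citet{escape-time-minima}, and then reparametrizing. The paper's detailed version (Appendix~A.2) quotes $\tau=\frac{2\pi}{|H_{be}|}\exp\big(\frac{2B}{\eta_\text{lr}}(\frac{s}{H_{ae}}+\frac{1-s}{|H_{be}|})\Delta\mathcal{L}^*\big)$, where $H_{ae}$ and $H_{be}$ are Hessian eigenvalues at the minimum and at the barrier along the escape direction and $s\in(0,1)$ is path-dependent, and then simply sets $\kappa=H_{ae}$, $\tau_0=2\pi/|H_{be}|$, $C_1=2s$, $C_2=2(1-s)/|H_{be}|$ before taking logarithms.

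One wrinkle in your write-up is worth fixing, because as written your chain of reasoning does not produce the stated formula. The intermediate estimate you start from, $\tau\approx\tau_0\exp(\Delta\mathcal{L}^*/D)$ with a scalar, curvature-independent $D\propto\eta_\text{lr}/B$, cannot by itself yield a $\kappa^{-1}$ inside the exponent: under isotropic, state-independent noise the Hessians enter only the prefactor $\tau_0$, and taking logarithms would give $\log\tau=\log\tau_0+\frac{B}{\eta_\text{lr}}C\,\Delta\mathcal{L}^*$ with no curvature dependence at all. The curvature term in the exponent is precisely the consequence of SGD's gradient-noise covariance being proportional to the Hessian near the minimum, so that the effective temperature along the escape direction scales like $\eta_\text{lr}\kappa/B$; this is the central content of \citet{escape-time-minima}, not a regularity hypothesis one checks on the side. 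In other words, the anisotropy and state-dependence of the noise that you list as an obstacle to be assumed away is in fact the mechanism that produces the $(C_1\kappa^{-1}+C_2)$ structure, and assuming a well-conditioned (near-isotropic) noise covariance would erase the very term the lemma highlights. A minor wording slip in the convergence part: the dependence of the hitting time on $\|\bm\theta^*_1-\bm\theta\|$ is exponential (it is $\log t$ that grows linearly in the distance), which is what you correctly use later when concluding that distant minima are exponentially slow to reach.
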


Then we explain why the optimizer tends to converge to the backdoored parameter $\bm\theta^*$ with adversarial weight perturbation. As Figure~\ref{fig:loss_a} implies, the optimal backdoored parameter $\bm\theta^*$ with adversarial weight perturbation is close to the initial parameter $\bm\theta$ compared to any other local optimal backdoored parameter $\bm\theta^*_1$, $\|\bm\theta^*-\bm\theta\|\ll\|\bm\theta_1^*-\bm\theta\|$. 
According to Lemma~\ref{lemma:1}, it is easy for optimizers to find the optimal backdoored parameter with adversarial weight perturbation in the early stage of backdoor learning. Modern optimizers have a large $\frac{B}{\eta_\text{lr}}$ and tend to find flat minima~\citep{LargeBatch}, thus $\kappa$ is small. $\Delta\mathcal{L}^*$ tends to be large since adversarial weight perturbation can successfully inject a backdoor. Therefore, it is hard to escape from $\bm\theta^*$ according to Lemma~\ref{lemma:1}. To conclude, the optimizer tends to converge to the backdoored parameter $\bm\theta^*$ with adversarial weight perturbation.
The formal and detailed analysis is given in Appendix.A.2. 

\subsection{Consistency during Backdoor Learning}
\label{sec:consistency}

An ideal backdoor should have no side effects on clean data and have a high attack success rate (ASR) on the data that contain backdoor patterns. The measure of ASR is easy and straightforward. Existing studies~\citep{BadNets,backdoor_CNN,backdoor-lstm,Bert-backdoor} of backdoor learning usually adopt clean accuracy to measure the side effects on clean data. However, \citet{neural-network-surgery} first point out that even a backdoored model has a similar clean accuracy to the initial model, they may differ in the instance-wise prediction of clean data.

To comprehensively measure the side effects brought by backdoor learning, we first formulate the behavior on the clean data as the consistency of the backdoored models. Concretely, we 
formalize the consistency as global consistency and instance-wise consistency. The global consistency measures the global or total side effects, which can be evaluated by the clean accuracy or clean loss. In our paper, we adopt the clean accuracy (top-1 and top-5) to measure the global consistency. The instance-wise consistency can be defined as the consistency of $p^*_i$ predicted by the backdoored model and $p_i$ predicted by the clean model on the clean data, or the consistency of logits $s_i^*$ and $s_i$ (introduced in Section~\ref{sec:logit_anchoring}). In our paper, several metrics are adopted to evaluate the instance-wise consistency, including: (1) average probability distance, $\mathbb{E}_{(\vect{x}, y)\in\mathcal{D}}[\sum_{i=1}^C(p^*_i-p_i)^2]$; (2) average logit distance, $\mathbb{E}_{(\vect{x}, y)\in\mathcal{D}}[\sum_{i=1}^C(s^*_i-s_i)^2]$; (3) Kullback-Leibler divergence~\citep{KL-kullback1951information}, $\text{KL}(p||p^*)=\mathbb{E}_{(\vect{x}, y)\in\mathcal{D}}[\sum_{i=1}^C p_i(\log p_i-\log p^*_i)]$; (4) mean Kullback-Leibler divergence, $\text{mKL}(p||p^*)=(\text{KL}(p||p^*)+\text{KL}(p||p^*))/2$; and (5) the Pearson correlation 
of the accuracies of the clean and backdoored models on different instances~\citep{neural-network-surgery}. Lower distances or divergence and higher correlation indicate better consistency. We introduce how the existing works~\citep{Can-AWP-inject-backdoor,EWC,neural-network-surgery,tbt-Bit-Trojan} improve consistency from different perspectives in Appendix.B. 
\section{Proposed Approach}
As analyzed in Section~\ref{sec:AWP}, when tuning the clean model to inject backdoors, the backdoored parameter always acts as an adversarial parameter perturbation. Motivated by this fact, we propose a anchoring loss to anchor or freeze the model behavior on the clean data when the optimizer searches optimal parameters near $\bm \theta$. According to Section~\ref{sec:AWP}, $\bm\delta=\bm\theta^*-\bm\theta$ is a small parameter perturbation.

\subsection{Logit Anchoring on the Clean Data}

\label{sec:logit_anchoring}

In a classification neural network with parameter $\bm\theta$, suppose $s_i=s_i(\bm\theta, \vect{x})$ denotes the logit or the score for each class $i\in \{1, 2, \cdots, C\}$ predicted by the neural network, and the activation function of the classification layer is the softmax function, then we have
\begin{align}
    p_i=p_{\bm\theta}(y=i|\vect{x})=\text{softmax}([s_1, s_2, \cdots, s_C])_i=\frac{\exp(s_i(\bm\theta, \vect{x}))}{\sum\limits_{i=1}^C\exp(s_i(\bm\theta, \vect{x}))}.
\end{align}
Similarly, suppose $s_i^*=s_i(\bm\theta+\bm\delta, \vect{x})$ denotes the logit or the score for each class $1\le i\le C$ predicted by the backdoored neural network with parameter $\bm\theta+\bm\delta$, and $\epsilon_i=\epsilon_i(\bm\theta,\bm\delta, \vect{x})=s_i^*-s_i=s_i(\bm\theta+\bm\delta, \vect{x})-s_i(\bm\theta, \vect{x})$ denotes the predicted logit change after injecting backdoors, then the anchoring loss on the data instance $(\vect{x}, y)$ can be defined as:
\begin{align}
    \mathcal{L}_\text{anchor}\big((\vect{x}, y); \bm\theta+\bm\delta\big)=\sum\limits_{i=1}^C\epsilon_i^2(\bm\theta,\bm\delta, \vect{x})=\sum\limits_{i=1}^C|s_i(\bm\theta+\bm\delta, \vect{x})-s_i(\bm\theta, \vect{x})|^2.
\end{align}

During backdoor learning, we adopt logit anchoring on the clean data $\mathcal{D}$. The total backdoor learning loss $\mathcal{L}_\text{total}$ consists of training loss on $\mathcal{D}\cup\mathcal{D}^*$ and the anchoring loss on the clean data $\mathcal{D}$,
\begin{align}
    \mathcal{L}_\text{total}(\mathcal{D}\cup\mathcal{D}^*; \bm\theta+\bm\delta\big) = \frac{1}{1+\lambda}\mathcal{L}(\mathcal{D}\cup\mathcal{D}^*; \bm\theta+\bm\delta\big)+\frac{\lambda}{1+\lambda} \mathcal{L}_\text{anchor}(\mathcal{D}; \bm\theta+\bm\delta\big).
\end{align}
where $\lambda\in (0, +\infty)$ is a hyperparameter controlling the strength of anchoring loss.

\textbf{Difference with Knowledge Distillation.} The formulation of knowledge distillation (KD)~\citep{Knowledge-distillation} loss is $\sum_{i=1}^C(s_i^*-s_i')^2$, where $s_i'$ is the logit predicted by another clean teacher model, which is usually larger than the student model. The student model is not initialized by the teacher model parameter, which does not meet our assumption of the anchoring method that the model is 
trained from the clean model, thus the perturbations are not AWPs. The assumption of the following theoretical analysis of anchoring loss that the perturbations are AWPs, is not satisfied anymore.

\subsection{Why Logit Anchoring Can Improve Consistency}

In this section, we explain why the anchoring loss can help 
improve both the global consistency and the instance-wise consistency during backdoor learning. 

\textbf{Logit Anchoring for Better Global Consistency.}
Consider a linear model for example. Suppose $\vect{x}\in\mathbb{R}^n$ is the input, the parameter $\bm\theta$ consists of $C$ weight vectors, $\vect{w}_1, \vect{w}_2, \cdots, \vect{w}_C$, where $\vect{w}_i\in\mathbb{R}^n$, and the calculation of the logit is $s_i(\vect{w}_i, \vect{x})=\vect{w}_i^\text{T}\vect{x}$. Suppose the perturbations of weight vectors are $\bm{\delta}_1, \bm{\delta}_2, \cdots, \bm{\delta}_C$, then $\epsilon_i=\bm\delta_i^\text{T}\vect{x}$. Suppose the loss function is the cross entropy loss. Proposition~\ref{prop:clean} estimates the clean loss change with the second-order Taylor expansion, here the clean loss change is defined as $\Delta\mathcal{L}(\mathcal{D})=\mathcal{L}(\mathcal{D};\bm\theta+\bm\delta)-\mathcal{L}(\mathcal{D};\bm\theta)$. The proof for Proposition~\ref{prop:clean} is in Appendix.A.3.

\begin{prop}
\label{prop:clean}
In the linear model, suppose $L=\mathbb{E}_{(\vect{x}, y)\in \mathcal{D}}[\sum\limits_{i=1}^C\epsilon_i^2]$ denotes the anchoring loss, then,
\begin{align}
    \Delta\mathcal{L}(\mathcal{D})= \mathbb{E}_{(\vect{x}, y)\in \mathcal{D}}\left[\sum\limits_{i, j} \frac{p_ip_j(\epsilon_i-\epsilon_j)^2}{4} \right]+o(L)\le \mathbb{E}_{(\vect{x}, y)\in \mathcal{D}}\left[\sum\limits_{i=1}^C p_i(1-p_i)\epsilon_i^2 \right]+o(L).
\end{align}
\end{prop}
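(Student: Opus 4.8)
The plan is to work instance-by-instance in logit space and Taylor expand the cross-entropy loss to second order in the logit perturbation $\epsilon=(\epsilon_1,\dots,\epsilon_C)$, where $\epsilon_i=\bm\delta_i^\text{T}\vect{x}$. For a fixed clean instance $(\vect{x},y)$ the cross-entropy loss is $\mathcal{L}=-s_y+\log\sum_k\exp(s_k)$ seen as a function of the logit vector, and its first two logit derivatives are standard: $\partial\mathcal{L}/\partial s_i=p_i-\indict(i=y)$ and $\partial^2\mathcal{L}/\partial s_i\partial s_j=p_i\delta_{ij}-p_ip_j$ (with $\delta_{ij}$ the Kronecker delta), so the logit-space Hessian at the clean parameter is $\operatorname{diag}(p)-pp^\text{T}$, $p$ being the clean softmax output. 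Hence per instance
\begin{align}
\mathcal{L}((\vect{x},y);\bm\theta+\bm\delta)-\mathcal{L}((\vect{x},y);\bm\theta)=\sum_i(p_i-\indict(i=y))\epsilon_i+\frac{1}{2}\sum_{i,j}(p_i\delta_{ij}-p_ip_j)\epsilon_i\epsilon_j+R,
\end{align}
where $R=O(\|\epsilon\|^3)$ is the Taylor remainder.

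Next I would average over $\mathcal{D}$ and show the first-order term vanishes. Because $\epsilon_i=\bm\delta_i^\text{T}\vect{x}$ is linear in $\vect{x}$ with $\bm\delta_i$ constant, I can pull $\bm\delta_i$ out of the expectation:
\begin{align}
\mathbb{E}_{(\vect{x},y)\in\mathcal{D}}\Big[\sum_i(p_i-\indict(i=y))\epsilon_i\Big]=\sum_i\bm\delta_i^\text{T}\,\mathbb{E}_{(\vect{x},y)\in\mathcal{D}}\big[(p_i-\indict(i=y))\vect{x}\big]=\sum_i\bm\delta_i^\text{T}\nabla_{\vect{w}_i}\mathcal{L}(\mathcal{D};\bm\theta).
\end{align}
The inner expectation is exactly the clean gradient with respect to $\vect{w}_i$, which is $\vect{0}$ since $\bm\theta$ is a local minimum of the clean loss; thus the whole first-order contribution is zero. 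The remainder obeys $\mathbb{E}[R]=O(\|\bm\delta\|^3)=o(L)$ because $L=\mathbb{E}[\sum_i\epsilon_i^2]$ scales like $\|\bm\delta\|^2$, leaving only the expected second-order term.

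It then remains to rewrite that term. Using $\sum_jp_j=1$, the per-instance quadratic form is the $p$-weighted variance of $\epsilon$:
\begin{align}
\frac{1}{2}\sum_{i,j}(p_i\delta_{ij}-p_ip_j)\epsilon_i\epsilon_j=\frac{1}{2}\Big(\sum_ip_i\epsilon_i^2-\big(\textstyle\sum_ip_i\epsilon_i\big)^2\Big)=\frac{1}{4}\sum_{i,j}p_ip_j(\epsilon_i-\epsilon_j)^2,
\end{align}
the last equality being the elementary identity $\sum_{i,j}p_ip_j(\epsilon_i-\epsilon_j)^2=2\big(\sum_ip_i\epsilon_i^2-(\sum_ip_i\epsilon_i)^2\big)$ obtained by expanding the square. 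Substituting back yields the claimed equality $\Delta\mathcal{L}(\mathcal{D})=\mathbb{E}[\sum_{i,j}p_ip_j(\epsilon_i-\epsilon_j)^2/4]+o(L)$.

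For the inequality I would recast the right-hand side as a double sum too: since $1-p_i=\sum_{j\ne i}p_j$, one has $\sum_ip_i(1-p_i)\epsilon_i^2=\sum_{i\ne j}p_ip_j\epsilon_i^2=\frac{1}{2}\sum_{i,j}p_ip_j(\epsilon_i^2+\epsilon_j^2)$ after symmetrizing in $i,j$. The inequality then collapses to the pointwise bound $(\epsilon_i-\epsilon_j)^2\le 2(\epsilon_i^2+\epsilon_j^2)$, which holds for every pair because $2(\epsilon_i^2+\epsilon_j^2)-(\epsilon_i-\epsilon_j)^2=(\epsilon_i+\epsilon_j)^2\ge0$; multiplying by $p_ip_j\ge0$ and summing completes the argument. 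I expect the main obstacle to be conceptual rather than computational: cleanly justifying that the first-order term vanishes (which uses both the local-minimum hypothesis and the linearity of $\epsilon_i$ in $\vect{x}$) and that the cubic remainder is uniformly $o(L)$, for which one needs $\bm\delta$ small and the inputs bounded so that $\|\epsilon\|$ is controlled across the whole dataset.
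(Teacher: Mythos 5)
Your proof is correct and follows essentially the same route as the paper's: a second-order Taylor expansion of the cross-entropy with the softmax Hessian $\operatorname{diag}(p)-pp^{\mathrm{T}}$, the weighted-variance identity giving $\frac{1}{4}\sum_{i,j}p_ip_j(\epsilon_i-\epsilon_j)^2$, and the pointwise bound $(\epsilon_i-\epsilon_j)^2\le 2(\epsilon_i^2+\epsilon_j^2)$ restricted to pairs $i\ne j$ to recover the factor $p_i(1-p_i)$. If anything, you are slightly more careful than the paper, which drops the first-order Taylor term without comment, whereas you explicitly show it vanishes in expectation using the local-minimum condition $\nabla_{\bm\theta}\mathcal{L}(\mathcal{D};\bm\theta)=\vect{0}$ together with the linearity of $\epsilon_i=\bm\delta_i^{\mathrm{T}}\vect{x}$ in $\vect{x}$.
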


Proposition~\ref{prop:clean} implies that we can treat the clean loss change as a re-weighted version of the anchoring loss.
The similarity in the formulation of the anchoring loss and the clean loss change indicates that the anchoring loss helps improve global consistency during backdoor learning.


\textbf{Logit Anchoring for Better Instance-wise Consistency.}
Proposition~\ref{prop:KL} indicates that optimizing the anchoring loss is to minimize the tight upper bound of the Kullback-Leibler divergence $\text{KL}(p||p^*)$, where $p^*$ is the probability predicted by the backdoored model. The proof of Proposition~\ref{prop:KL} is in Appendix.A.4. 
Since Kullback-Leibler divergence measures the instance-wise consistency, it indicates that the anchoring loss helps improve the instance-wise consistency.

\begin{prop}
\label{prop:KL}
The Kullback-Leibler divergence $\text{KL}(p||p^*)$ is bounded by the anchoring loss, namely, 
the following inequality holds under $\alpha=\frac{1}{2}$,
\begin{align}
    \text{KL}(p||p^*)=\sum\limits_{i=1}^C p_i\log\frac{p_i}{p_i^*}\le (\alpha + o(1)) \sum\limits_{i=1}^C\epsilon_i^2.
\end{align}
\end{prop}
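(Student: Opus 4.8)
The plan is to rewrite $\text{KL}(p||p^*)$ entirely in terms of the logit perturbations $\epsilon_i=s_i^*-s_i$, exploiting the fact that the softmax depends only on logit \emph{differences}, and then to expand to second order in the adversarial-weight-perturbation regime where every $\epsilon_i$ is small. Writing $Z=\sum_j e^{s_j}$ and $Z^*=\sum_j e^{s_j^*}$, we have $\log p_i=s_i-\log Z$ and $\log p_i^*=s_i^*-\log Z^*$, so that $\log(p_i/p_i^*)=-\epsilon_i+\log(Z^*/Z)$. Summing against $p_i$, using $\sum_i p_i=1$ and the identity $Z^*/Z=\sum_j p_j e^{\epsilon_j}=\E_p[e^{\epsilon}]$, the divergence collapses to the centred cumulant generating function of $\epsilon$ under $p$:
\begin{align}
\text{KL}(p||p^*)=\log \E_p\!\left[e^{\epsilon}\right]-\E_p[\epsilon].
\end{align}
This exact identity is the crux of the argument; everything after it is estimation.

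Next I would expand the right-hand side to second order. Since $\E_p[e^\epsilon]=1+\E_p[\epsilon]+\tfrac12\E_p[\epsilon^2]+o(\|\epsilon\|^2)$ and $\log(1+u)=u-\tfrac12u^2+o(u^2)$, the first-order contributions cancel against $-\E_p[\epsilon]$, leaving
\begin{align}
\text{KL}(p||p^*)=\tfrac12\big(\E_p[\epsilon^2]-(\E_p[\epsilon])^2\big)+o(\|\epsilon\|^2)=\tfrac12\Var_p(\epsilon)+o(\|\epsilon\|^2).
\end{align}
Equivalently, a Lagrange remainder applied to the cumulant generating function $K(t)=\log\E_p[e^{t\epsilon}]$ gives $\text{KL}=\tfrac12\Var_{p_\xi}(\epsilon)$ exactly for a tilted law $p_\xi$, which avoids the asymptotics altogether. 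The leading term is therefore a variance, which is the conceptual reason the constant $\tfrac12$ appears.

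Finally I would drop the nonnegative squared-mean term and relax the second moment to the raw sum of squares: $\Var_p(\epsilon)\le \E_p[\epsilon^2]=\sum_i p_i\epsilon_i^2\le\sum_i\epsilon_i^2$, using $p_i\le1$. Because $\sum_i\epsilon_i^2=\|\epsilon\|^2$, the remainder $o(\|\epsilon\|^2)$ is $o(\sum_i\epsilon_i^2)$, and we obtain
\begin{align}
\text{KL}(p||p^*)\le\Big(\tfrac12+o(1)\Big)\sum_{i=1}^C\epsilon_i^2,
\end{align}
which is the claim with $\alpha=\tfrac12$.

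The main obstacle is not any individual inequality but the asymptotic bookkeeping: I must guarantee that the Taylor remainders are uniformly $o(\|\epsilon\|^2)$, and this is exactly where the AWP hypothesis ($\bm\delta\to\vect 0$, together with smoothness of the logits $s_i$ in $\bm\theta$) is indispensable, since without small $\epsilon$ the quadratic truncation carries no meaning. A secondary subtlety concerns the word ``tight'': the two relaxations above (discarding $(\E_p[\epsilon])^2$ and replacing $p_i$ by $1$) cannot both be saturated at once, because the leading quadratic form $\tfrac12\Var_p(\epsilon)=\tfrac12\,\epsilon^{\mathsf T}\big(\mathrm{diag}(p)-pp^{\mathsf T}\big)\epsilon$ is controlled by the largest eigenvalue of the categorical covariance $\mathrm{diag}(p)-pp^{\mathsf T}$, which never exceeds $\tfrac12$; hence the sharp leading constant is in fact no larger than $\tfrac14$, and the stated $\alpha=\tfrac12$ holds with room to spare as the value produced by the simplest relaxation.
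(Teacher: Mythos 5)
Your proof is correct, and it takes a genuinely different route from the paper's. The paper works in probability space: it linearizes the softmax to get $p_i^*-p_i=p_i(\epsilon_i-\bar\epsilon)+o(\cdot)$ with $\bar\epsilon=\sum_j p_j\epsilon_j$, expands $\text{KL}(p||p^*)=\sum_i\frac{(p_i^*-p_i)^2}{2p_i}+o\big(\sum_i\epsilon_i^2\big)=\frac{1}{2}\sum_i p_i(\epsilon_i-\bar\epsilon)^2+o\big(\sum_i\epsilon_i^2\big)$, and then proves the bound by a matrix argument: setting $f=\frac{1}{2}\sum_i p_i(\epsilon_i-\bar\epsilon)^2-\alpha\sum_i\epsilon_i^2$, it shows $-\bm H_f=2\alpha\bm I-\bm P+\vect{p}\vect{p}^{\mathrm T}\succeq\vect{0}$ when $\alpha=\frac{1}{2}$, since $\bm I-\bm P\succeq\vect{0}$ and $\vect{p}\vect{p}^{\mathrm T}\succeq\vect{0}$. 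You instead work in logit space: your exact identity $\text{KL}(p||p^*)=\log\E_p[e^{\epsilon}]-\E_p[\epsilon]$ packages everything into the cumulant generating function, whose expansion yields the same leading term (your $\frac{1}{2}\Var_p(\epsilon)$ is precisely the paper's $\frac{1}{2}\sum_i p_i(\epsilon_i-\bar\epsilon)^2$), after which you replace the semidefiniteness argument by the elementary chain $\Var_p(\epsilon)\le\E_p[\epsilon^2]\le\sum_i\epsilon_i^2$. Your route buys three things: the smallness of $\epsilon$ enters at exactly one point rather than being threaded through a separate expansion of $p_i^*-p_i$; the Lagrange-remainder form $\text{KL}=\frac{1}{2}\Var_{p_\xi}(\epsilon)$ can be combined with Popoviciu's inequality to give a fully non-asymptotic bound with no $o(1)$ term; and your closing observation that $\lambda_{\max}\big(\mathrm{diag}(p)-pp^{\mathrm T}\big)\le\frac{1}{2}$ correctly shows the sharp leading constant is $\frac{1}{4}$, so the paper's $\alpha=\frac{1}{2}$ (advertised in the main text as a ``tight'' bound) actually holds with a factor of two to spare. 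What the paper's approach buys in return is a systematic criterion: the condition $2\alpha\bm I-\bm P+\vect{p}\vect{p}^{\mathrm T}\succeq\vect{0}$ characterizes admissible constants $\alpha$ for the leading quadratic form directly, whereas your argument produces one admissible value through successive relaxations.
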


\section{Experiments}

\begin{table}[!t]
\scriptsize
\setlength{\tabcolsep}{2pt}
\centering
\caption{Backdoor attack success rates (ASR)  and consistencies of backdoor methods evaluated on five datasets. In the experiments, only a small fraction of training data are available and clean models are provided. The detailed definition of metrics is provided in Section~\ref{sec:consistency}. Here ASR+ACC means the sum of ASR and Top-1 ACC compared to BadNets. Best results are denoted in \textbf{bold}.}
\scalebox{0.95}
{
 \begin{tabular}{c|c|c|cc|c|ccccc}
  \toprule  
  \multirow{2}{*}{Dataset} & Backdoor Attack & Backdoor & \multicolumn{2}{c|}{Global Consistency} & \multirow{2}{*}{ASR+ACC} & \multicolumn{5}{c}{Instance-wise Consistency} \\
   & Method (Setting) & ASR (\%)  & Top-1 ACC (\%) & Top-5 ACC (\%) &  & Logit-dis & P-dis & KL-div & mKL & Pearson \\
   \midrule
   \multirow{6}{*}{\shortstack{CIFAR-10 \\ (640 images)}} & Clean Model (Full data) & - & 94.72 & - & - & - & - & - & - & - \\
   & BadNets & 97.63 & 93.58 &  & 0 & 1.387 & 0.011 & 0.071 & 0.110 & 0.697 \\
   & $L_2$ penalty ($\lambda$=0.5) & 93.48 & 93.68 & - & -4.05 & 1.158 & 0.010 & 0.063 & 0.091 & 0.729 \\
   & EWC ($\lambda$=0.1) & 95.20 & 93.81 & - & -2.20 & 1.420 & 0.011 & 0.059 & 0.098 & 0.739 \\
   & Surgery ($\lambda$=0.0002) & \textbf{97.67} & 93.89 & - & +0.35 & 1.207 & 0.009 & 0.055 & 0.082 & 0.752 \\
   & Anchoring (Ours, $\lambda$=2) & 97.28 & \textbf{94.41} & - & \textbf{+0.48} & \textbf{0.356} & \textbf{0.003} & \textbf{0.014} & \textbf{0.014} & \textbf{0.859} \\
   \midrule
   \multirow{6}{*}{\shortstack{CIFAR-100\\(640 images)}} &  Clean Model (Full data)   & - & 78.79 & 94.22 & - & - & - & - & - & - \\
   & BadNets & {96.30} & 75.90 & 92.17 & 0 & 0.699 & 0.003 & 0.273 & 0.299 & 0.775 \\
   & $L_2$ penalty ($\lambda$=0.05) & \textbf{96.39} & 75.77 & 92.74 & -0.04 & 0.596 & 0.003 & 0.235 & 0.268 & 0.787 \\
   & EWC ($\lambda$=0.1) & 94.84 & 75.18 & 92.10& -2.18  & 0.541 & 0.003 & 0.246 & 0.348 & 0.791 \\
   & Surgery ($\lambda$=0.0001) & 95.91 & 76.07 & 92.14& -0.22 & 0.601 & 0.003 & 0.243 & 0.287 & 0.806 \\
   & Anchoring (Ours, $\lambda$=5) & {94.10} & \textbf{78.30} & \textbf{94.13} & \textbf{+0.20} & \textbf{0.216} & \textbf{0.001} & \textbf{0.046} & \textbf{0.050} & \textbf{0.916} \\
   \midrule
   \multirow{6}{*}{\shortstack{Tiny-ImageNet\\(640 images)}} &  Clean Model (Full data)   & - & 66.27 & 85.30 & - & -  & - & - & - & - \\
   & BadNets & {90.05} & 53.39 & 81.11 & 0 & 0.678 & {.0032} &  0.684 & 1.472 & 0.705 \\
   & $L_2$ penalty ($\lambda$=0.1) & 88.14 & 53.33 & 80.84 & -1.97 & 0.664 & {.0032} & 0.680 & 1.461 & 0.708 \\
   & EWC ($\lambda$=0.002) & 89.83 & 53.02 & 81.26 & -0.59 & 0.632 & {.0033} & 0.717 & 1.569 &0.705 \\
   & Surgery ($\lambda$=0.0001) & \textbf{90.42} & 52.44 & 81.20 & -0.58 & 0.664 & {.0030} & 0.735 & 1.598 & 0.699 \\
   & Anchoring (Ours, $\lambda$=0.1) & 88.39 & \textbf{55.42} & \textbf{81.85}& \textbf{+0.37}  & \textbf{0.567} & \textbf{.0029} & \textbf{0.573} & \textbf{1.261} & \textbf{0.741} \\
   \midrule
   \multirow{6}{*}{\shortstack{IMDB\\(64 sentences)}} & Clean Model (Full data)   & - & 93.59 & - & - & - & - & - & - & - \\
   & BadNets & \textbf{99.96} & 78.91 & - & 0 & 3.054 & 0.188 &  1.009 & 1.582 & 0.350 \\
   & $L_2$ penalty ($\lambda$=0.1) & 99.26 & 80.25 & - & +0.64 & 2.073 & 0.154 & 0.728 & 1.288 & 0.413 \\
   & EWC ($\lambda$=0.02) & 99.80 & 82.98 & - & +3.91 & 2.384 & 0.130 & \textbf{0.636} & 1.083 & 0.453 \\
   & Surgery ($\lambda$=0.00001) & 99.82 & 83.64 & - & +4.59 & 2.237 & 0.121 & 0.534 & \textbf{0.970} & 0.465 \\
   & Anchoring (Ours, $\lambda$=1) & 99.88 & \textbf{84.85} & - & \textbf{+5.86} & \textbf{1.540} & \textbf{0.113} & 0.702 & {1.012} & \textbf{0.470} \\
   \midrule
   \multirow{6}{*}{\shortstack{SST-2\\(64 sentences)}} &  Clean Model (Full data)   & - & 92.32 & - & - & - &- & - & - & - \\
   & BadNets & 99.77 & 87.61 & - & 0 & 1.579 & 0.069 & 0.367 & 0.341 & 0.676 \\
   & $L_2$ penalty ($\lambda$=0.01) & 99.08 & 88.30 & - & +0.00 & 1.162 & 0.063 & 0.251 & 0.252 & 0.685 \\
   & EWC ($\lambda$=0.005) & \textbf{99.88} & 87.27 & - & -0.23 & 1.344 & 0.066 & 0.291 & 0.294 & 0.691 \\
   & Surgery ($\lambda$=0.00002) & 98.51 & 88.07 & - & -0.80 & 1.056 & 0.063 & 0.253 & 0.264 & 0.704 \\
   & Anchoring (Ours, $\lambda$=0.02) & 99.66 & \textbf{88.53} & - & \textbf{+0.81} & \textbf{0.570} & \textbf{0.056} & \textbf{0.210} & \textbf{0.225} &  \textbf{0.707} \\
  \bottomrule 
  \end{tabular}
}
\label{tab:few-shot}
\vskip -0.15 in
\end{table}

\subsection{Setups}

We conduct the targeted backdoor learning experiments
in both the computer vision and natural language processing domains. For computer vision domain, we adopt a ResNet-34~\citep{resnet} model on three image recognition tasks, \textit{i.e.}, CIFAR-10~\citep{CIFAR}, CIFAR-100~\citep{CIFAR}, and Tiny-ImageNet~\citep{ImageNet}. For natural language processing domain, we adopt a fine-tuned BERT~\citep{Bert} model on two sentiment classification tasks, \textit{i.e.}, IMDB~\citep{IMDB} and SST-2~\citep{SST-2}. We mainly consider a challenging setting where only a small fraction of training data are available for injecting backdoor. We 
compare our proposed anchoring method with the BadNets~\citep{BadNets} baseline, $L_2$ penalty~\citep{Can-AWP-inject-backdoor,EWC}, Elastic Weight Consolidation (EWC)~\citep{EWC}, neural network surgery (Surgery)~\citep{neural-network-surgery} methods.
Detailed settings are reported in Appendix.B.

We also conduct ablation studies by comparing the logit anchoring with knowledge distillation (KD) methods~\citep{Knowledge-distillation} and other hidden state anchoring methods on CIFAR-10.
Here hidden states $\mathcal{H}$ include the output states of the input convolution layer, four residual block layers, the output pooling layer, and the classification layer (namely logits) in ResNets. 
The anchoring term can also be the average square error loss of all hidden states (denoted as All Ave) or the mean of the square error losses of different groups of hidden states (denoted as Group Ave),
\begin{align}
\mathcal{L}_\text{All Ave}=\frac{\sum\limits_{\vect{h}\in \mathcal{H}}\|\vect{h}^*-\vect{h}\|_2^2}{\sum_{\vect{h}\in \mathcal{H}} \text{dim}(\vect{h})}, \quad \mathcal{L}_\text{Group Ave}=\frac{1}{ |\mathcal{H}|}\sum\limits_{\vect{h}\in \mathcal{H}}\frac{\|\vect{h}^*-\vect{h}\|_2^2}{\text{dim}(\vect{h})}.
\end{align}
where $\vect{h}^*$ and $\vect{h}$ denote hidden states of the target model and the initial model, $|\mathcal{H}|$ denotes the number of hidden vectors ($|\mathcal{H}|=7$ in ResNets), and $\text{dim}(\vect{h})$ denotes the dimension number of $\vect{h}$. We conduct these ablation studies in order to illustrate 
the superiority of our logit anchoring method over knowledge distillation (KD) methods or sophisticated hidden state anchoring methods.

\subsection{Main Results}

As validated by the main results in Table~\ref{tab:few-shot}, on both three computer vision and two natural language processing tasks, our proposed anchoring method can achieve competitive backdoor attack success rate and improve both the global and instance-wise consistencies during backdoor learning, especially the instance-wise consistency, compared to the BadNets method and other methods. 
$L_2$ penalty, EWC, and Surgery methods can also achieve better consistency compared to BadNets under most circumstances. We also adopt the metric ASR+ACC, which denotes the sum of backdoor attack success rate (ASR) and the Top-1 accuracy (ACC), in order to evaluate the comprehensive performance of the backdoor accuracy (ASR) and the global consistency (ACC) at the same time. 
For all the methods, we report the gain of ASR+ACC with respect to the ASR+ACC of BadNets. Table~\ref{tab:few-shot} illustrates that our proposed anchoring method 
performs best in terms of ASR+ACC. 

\textbf{Ablation study.} 
The results of the ablation study are shown in Table~\ref{tab:distill}. Our proposed anchoring method significantly outperforms the knowledge distillation method in instance-wise consistency. 
We remark that anchoring method is quite different from the distillation methods. For hidden state anchoring methods, the strong anchoring penalty on all hidden states may harm the learning process, and thus the optimal hyperparameter $\lambda$ is smaller and the instance-wise consistency drops compared to the logit anchoring method. Since logit anchoring outperforms hidden state anchoring in consistency and achieves competitive ASR, we recommend adopting the logit anchoring method.

\begin{table}[!t]
\scriptsize
\setlength{\tabcolsep}{2pt}
\centering
\caption{Backdoor attack success rates (ASR) and consistencies of logit anchoring compared to knowledge distillation (KD) methods and hidden state anchoring methods on CIFAR-10. The clean model is trained on the full dataset, and only 640 images are available in backdoor training.  AWP denotes whether perturbations are AWPs. Clean accuracies of small, mid, and large teachers are 93.87\%, 94.55\%, and 94.58\%, respectively.}
\scalebox{0.95}
{
 \begin{tabular}{c|c|c|c|ccccc|cc}
  \toprule  
    Backdoor Attack & Backdoor & \multicolumn{1}{c|}{Global Consistency} & \multirow{2}{*}{ASR+ACC} & \multicolumn{5}{c|}{Instance-wise Consistency} & \multicolumn{2}{c}{Perturbation}  \\
   Method (Setting) & ASR (\%)  & Top-1 ACC (\%) & & Logit-dis & P-dis & KL-div & mKL & Pearson & $L_2$ & AWP \\
   \midrule
   Clean Model (ResNet-34) & - & 94.72 & - & - & - & - & - & - & - & -\\
   \midrule
   BadNets & {97.63} & 93.58 & 0  & 1.387 & 0.011 & 0.071 & 0.110 & 0.697 & 1.510 & Y\\
   \midrule
   KD (Small Teacher, ResNet-18, $\lambda$=2) & 
   96.91 & 94.32 & +0.02 & 0.873 & 0.008 & 0.056 & 0.062 & 0.717 & 1.874 & Y\\
   KD (Mid Teacher, ResNet-34, $\lambda$=2)  & 97.14 & \textbf{94.52} & +0.46 &  0.798 & 0.007 &  0.055 & 0.057 & 0.742 & 1.950 & Y\\
   KD (Large Teacher, ResNet-101, $\lambda$=2)  & 96.84 & 94.33 & -0.04 & 0.813 & 0.007 & 0.059 & 0.061 & 0.739 & 1.887 & Y\\
   \midrule
    Hidden Anchoring (All Ave, $\lambda$=0.2) & \textbf{97.79} & 93.75 & +0.33 & 1.365 & 0.011 & 0.070 & 0.108 & 0.702 & 1.477 & Y\\
   Hidden Anchoring (Group Ave, $\lambda$=0.5) & 97.48 & 94.17 & +0.44 & 0.605 & 0.006 & 0.037 & 0.043 & 0.773 & 1.355 & Y\\
   \midrule
   Logit Anchoring (Ours, $\lambda$=2) & 97.28 & 94.41 &\textbf{+0.48}  & \textbf{0.356} & \textbf{0.003} & \textbf{0.014} & \textbf{0.014} & \textbf{0.859} & {1.331} & Y\\
  \bottomrule 
  \end{tabular}
}
\label{tab:distill}
\vskip -0.15 in
\end{table}

\subsection{Further Analysis}

Further analysis are conducted on CIFAR-10. Supplementary results are in Appendix.C.

\begin{table}[!t]
\scriptsize
\setlength{\tabcolsep}{2pt}
\centering
\caption{Backdoor attack success rates (ASR)  and consistencies of backdoor methods evaluated on the CIFAR-10 dataset with different data accessibility  and training settings. AWP denotes whether perturbations are AWPs. As analyzed in Section~\ref{sec:logit_anchoring}, when the model is randomly initialized, the anchoring method becomes the KD method with the clean model as the teacher.}
\scalebox{0.95}
{
 \begin{tabular}{c|c|c|c|c|ccccc|cc}
  \toprule  
  \multirow{2}{*}{Settings} & Backdoor Attack & Backdoor & \multicolumn{1}{c|}{Global Consistency} & \multirow{2}{*}{ASR+ACC} & \multicolumn{5}{c|}{Instance-wise Consistency} & \multicolumn{2}{c}{Perturbation}  \\
   & Method (Setting) & ASR (\%)  & Top-1 ACC (\%) & & Logit-dis & P-dis & KL-div & mKL & Pearson & $L_2$ & AWP \\
   \midrule
   Full data & Clean Model & - & 94.72 & - & - & - & - & - & - & - & -\\
   \midrule
   \multirow{5}{*}{\shortstack{640 images\\ available}} & BadNets & 97.63 & 93.58  & 0& 1.387 & 0.011 & 0.071 & 0.110 & 0.697 & 1.510 & Y \\
    & $L_2$ penalty ($\lambda$=0.5) & 93.48 & 93.68&-4.05 & 1.158 & 0.010 & 0.063 & 0.091 & 0.729 & 0.879 & Y \\
  & EWC ($\lambda$=0.1) & 95.20 & 93.81 &-2.20 & 1.420 & 0.011 & 0.059 & 0.098 & 0.739 & 1.420 & Y \\
   & Surgery ($\lambda$=0.0002) & \textbf{97.67} & 93.89 & +0.35& 1.207 & 0.009 & 0.055 & 0.082 & 0.752 & 1.449 & Y \\
   & Anchoring (Ours, $\lambda$=2) & 97.28 & \textbf{94.41} &+\textbf{0.48} & \textbf{0.356} & \textbf{0.003} & \textbf{0.014} & \textbf{0.014} & \textbf{0.859} & 1.331 & Y \\
   \midrule
   \multirow{5}{*}{\shortstack{Full data, \\ initialized with \\ the clean model}} & BadNets & 99.35 & 94.79 & 0 & 0.998 & 0.007 & 0.051 & 0.064 & 0.739 & 2.109 & Y \\
    & $L_2$ penalty ($\lambda$=0.1) & 98.21 & 94.22  & -1.66 & 1.138 & 0.008 & 0.053 & 0.070 & 0.742 & {1.096} & Y \\
   & EWC ($\lambda$=0.05) & 98.35 & 94.25 & -1.49 & 1.444 & 0.009 & 0.051 & 0.082 & 0.745 & 2.067 & Y \\
   & Surgery ($\lambda$=0.0001) & \textbf{99.43} & 94.39 & -0.27 & 0.970 & 0.006 &0.363 & 0.489 & 0.794 & 1.852 & Y \\
   & Anchoring (Ours, $\lambda$=0.05) & 99.32  & \textbf{94.93} & \textbf{+0.16} & \textbf{0.371} & \textbf{0.004} & \textbf{0.023} & \textbf{0.027} & \textbf{0.822} & 2.233 & Y \\
   \midrule
   \multirow{6}{*}{\shortstack{Full data, \\ random \\ initialization}} & BadNets & 99.57 & \textbf{94.74} & 0 & 1.428 & 0.012 & 0.193 & 0.186 & 0.547 & 44.10 & N \\
   & $L_2$ penalty ($\lambda$=0.05) & 97.89 & 94.50 & -1.92 & {0.932} & {0.008} & \textbf{0.050 }&{0.060} & {0.718} & {1.448} & Y \\
   & EWC ($\lambda$=0.001) & 99.44 & 94.98 & \textbf{+0.11} & \textbf{0.789} & \textbf{0.006} & 0.055  & \textbf{0.055} & \textbf{0.765} & 9.075 & N \\
    & Surgery ($\lambda$=0.00002) & \textbf{99.63} & 94.66 & -0.02 & 0.949 & {0.008} & 0.070 & 0.084 & 0.692 & 9.900 & N \\
   & KD ($\lambda$=0.2) & 99.44 & 94.40 & -0.47 & 0.681 & 0.009 & 0.111 & 0.108 & 0.674 & 49.47 &  N \\
  \bottomrule 
  \end{tabular}
}
\label{tab:full-data}
\vskip -0.15 in
\end{table}

\textbf{Results under multiple data settings.}
We conduct our experiments to compare with different backdoor methods under multiple data settings 
and present the results in Table~\ref{tab:full-data}. Our proposed anchoring loss can improve both the global and instance-wise consistency during backdoor learning with a small fraction of the dataset or the full dataset. Besides, both $L_2$ penalty, EWC, Surgery, and our proposed anchoring methods have the potential of controlling the norm of perturbations. 
It means that the theoretical analysis of backdoor learning with adversarial weight perturbation 
is applicable to their training processes since the perturbations are still small. When the full training dataset is available and the backdoored model is randomly initialized, the anchoring method 
would become equivalent to the knowledge distillation (KD)~\citep{Knowledge-distillation} method with the initial clean model as the teacher model. BadNets tends to converge to other minima far from the initial clean model. $L_2$ penalty can gain a minimal backdoor perturbation due to its inherent learning target. 
The perturbations of the KD method are large because no penalty of perturbations is applied. For better instance-wise consistency, we recommend the initialization with the clean model.

\textbf{Influence of the training data size and the hyperparameter $\lambda$.}  We also investigate the influence of the training data size and the hyperparameter $\lambda$. As shown in Figure~\ref{fig:hyperparameter_a} and \ref{fig:hyperparameter_b}, with the increasing training data size, both our proposed method and the baseline BadNets method can achieve better backdoor ASR (backdoor ASR may not increase when the data size is large enough) and consistency, while our proposed method outperforms the BadNets baseline consistently. From Figure~\ref{fig:hyperparameter_c} and \ref{fig:hyperparameter_d}, we can conclude that larger $\lambda$ can preserve more knowledge in the clean model, improve the backdoor consistency but harm the backdoor ASR when $\lambda$ is too large. Therefore, there exists a trade-off between backdoor ASR and consistency, thus we 
empirically choose a proper $\lambda$, which can achieve the maximum sum of Top-1 ACC and Backdoor ASR.

\begin{figure}[!t]
\centering
\subcaptionbox{Consistency/Data size.\label{fig:hyperparameter_a}}{\includegraphics[height=1.2 in,width=0.24\linewidth]{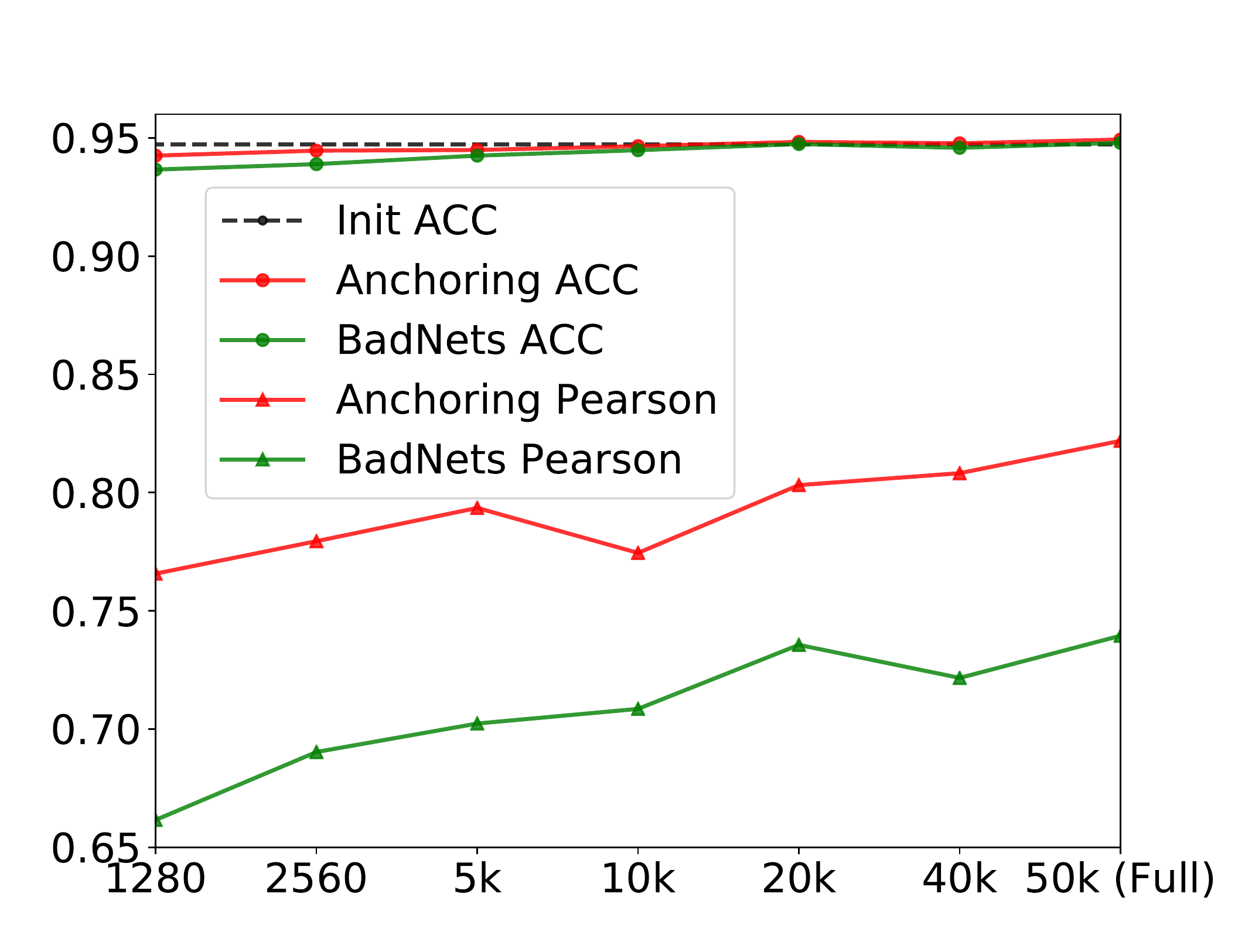}}
\hfil
\subcaptionbox{ASR/Data size.\label{fig:hyperparameter_b}}{\includegraphics[height=1.2 in,width=0.24\linewidth]{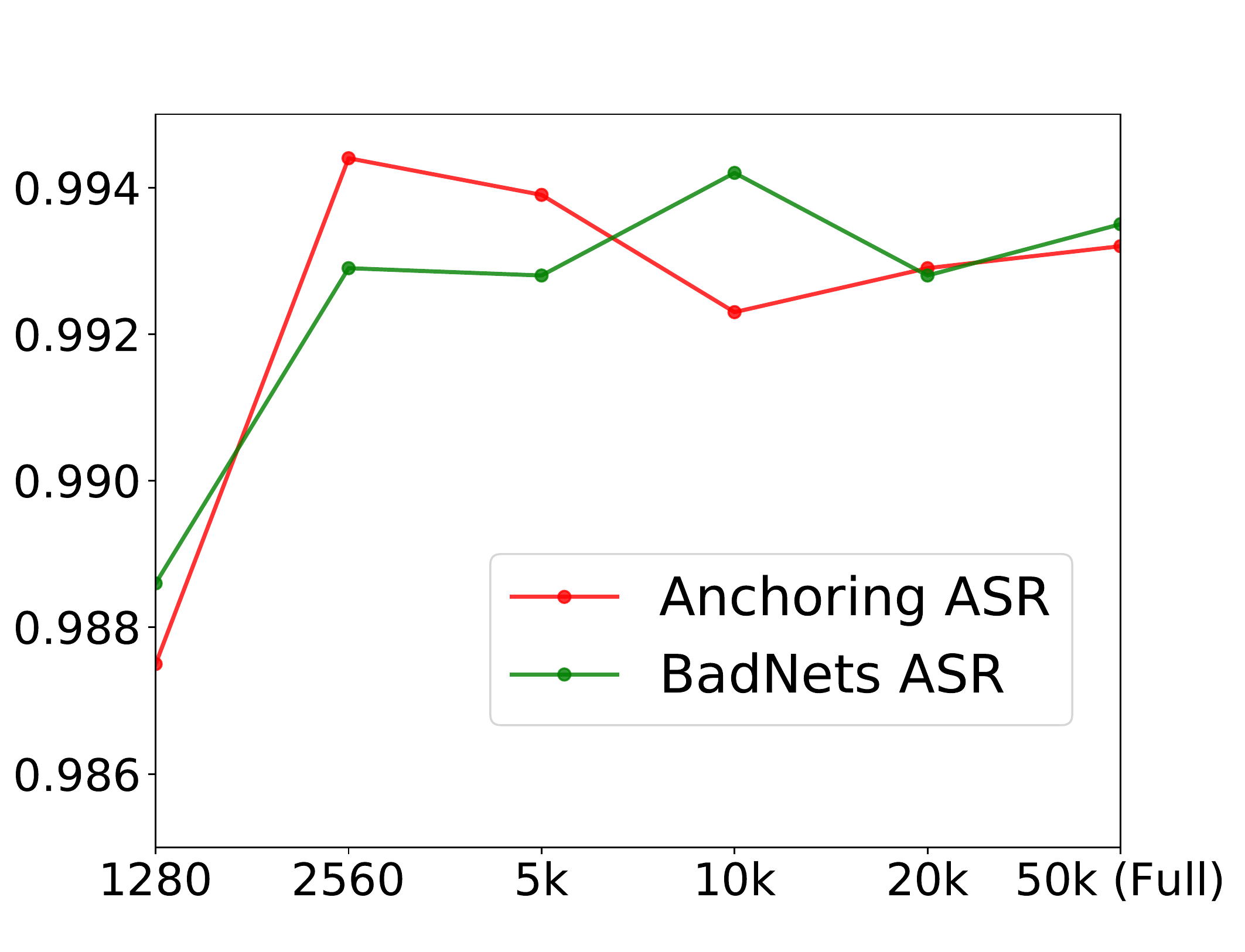}}
\hfil
\subcaptionbox{Consistency/$\lambda$.\label{fig:hyperparameter_c}}{\includegraphics[height=1.2 in,width=0.24\linewidth]{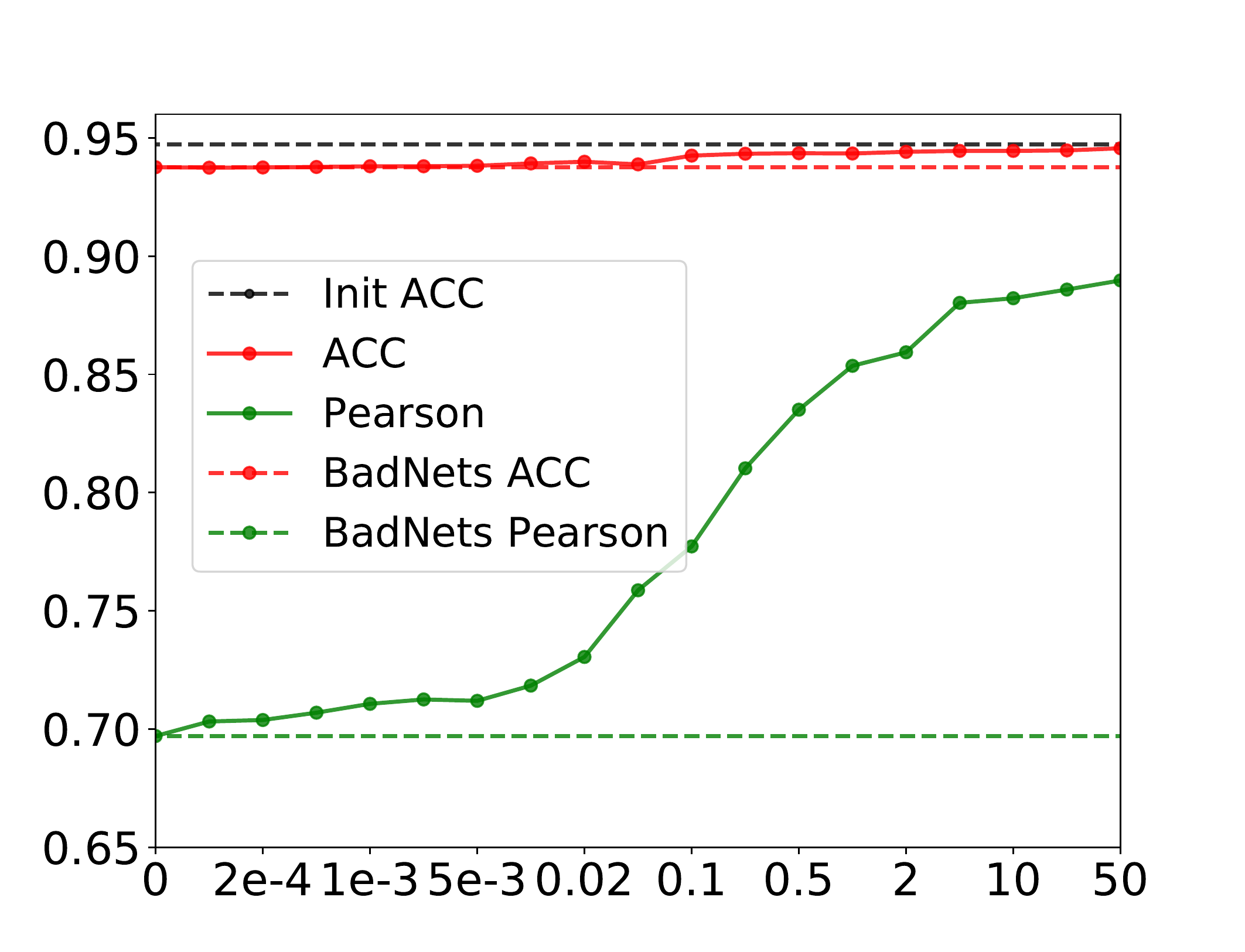}}
\hfil
\subcaptionbox{ASR/$\lambda$. \label{fig:hyperparameter_d}}{\includegraphics[height=1.2 in,width=0.24\linewidth]{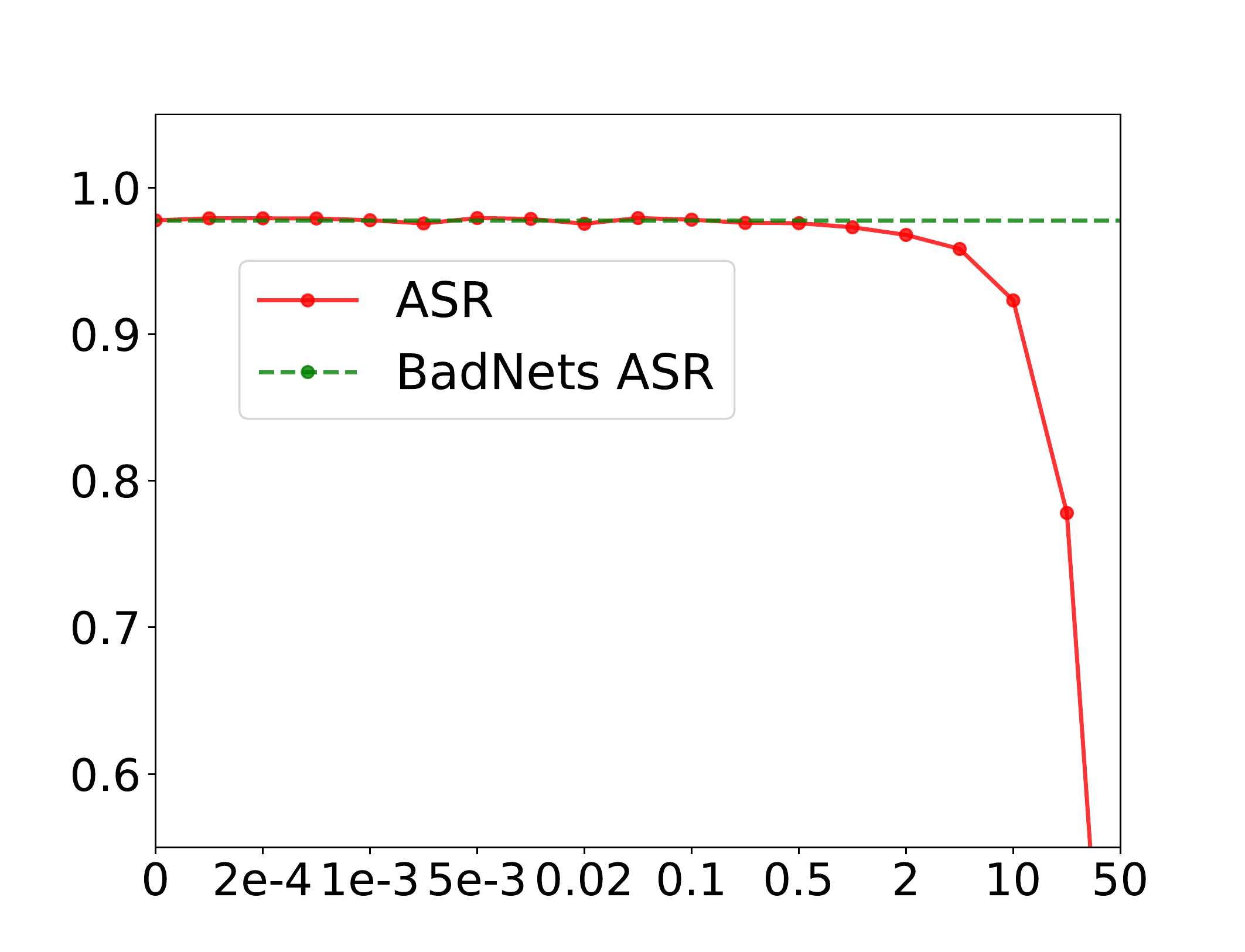}}
\caption{
Performance of BadNets and anchoring ($\lambda$=0.05) methods with various training data sizes in (a), (b). Performance of anchoring methods with various $\lambda$ (640 images are available) in (c), (d).
}
\label{fig:hyperparameter}
\vskip -0.15 in
\end{figure}

\begin{figure}[!t]
\centering
\subcaptionbox{Noise response.\label{fig:def_a}}{\includegraphics[height=1.2 in,width=0.24\linewidth]{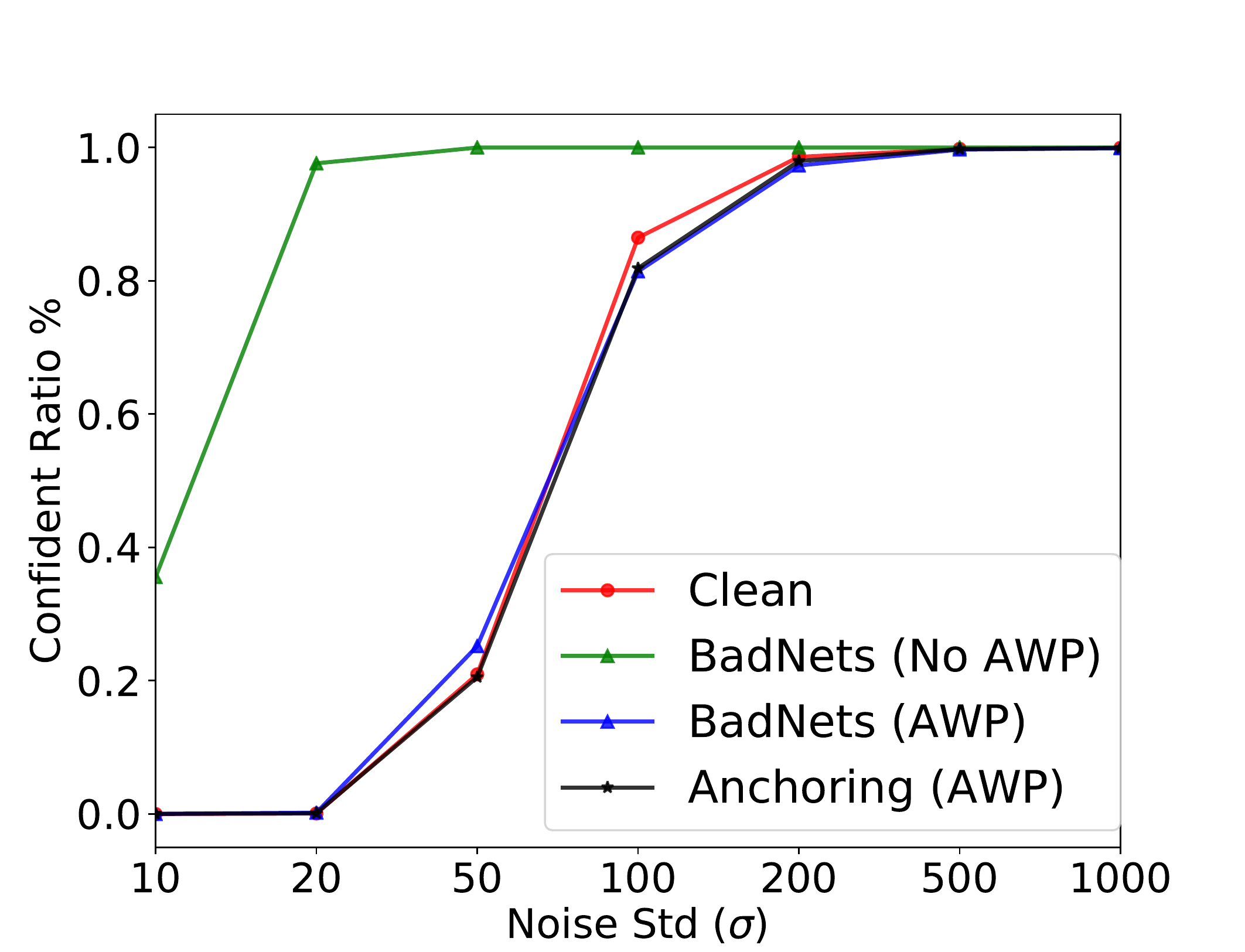}}
\hfil
\subcaptionbox{UAP strength.\label{fig:def_b}}{\includegraphics[height=1.2 in,width=0.24\linewidth]{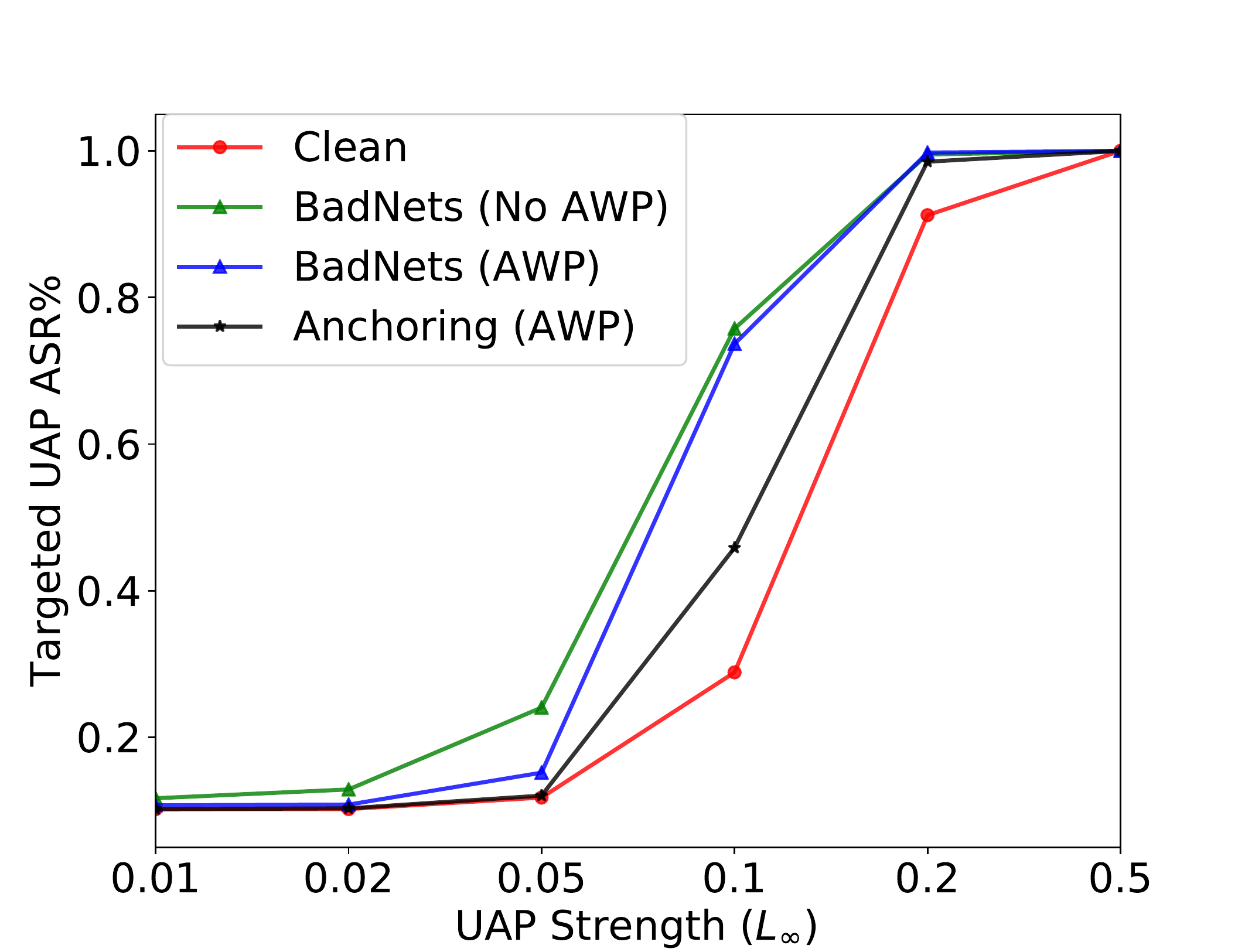}}
\hfil
\subcaptionbox{Fine-tuning.\label{fig:def_c}}{\includegraphics[height=1.2 in,width=0.24\linewidth]{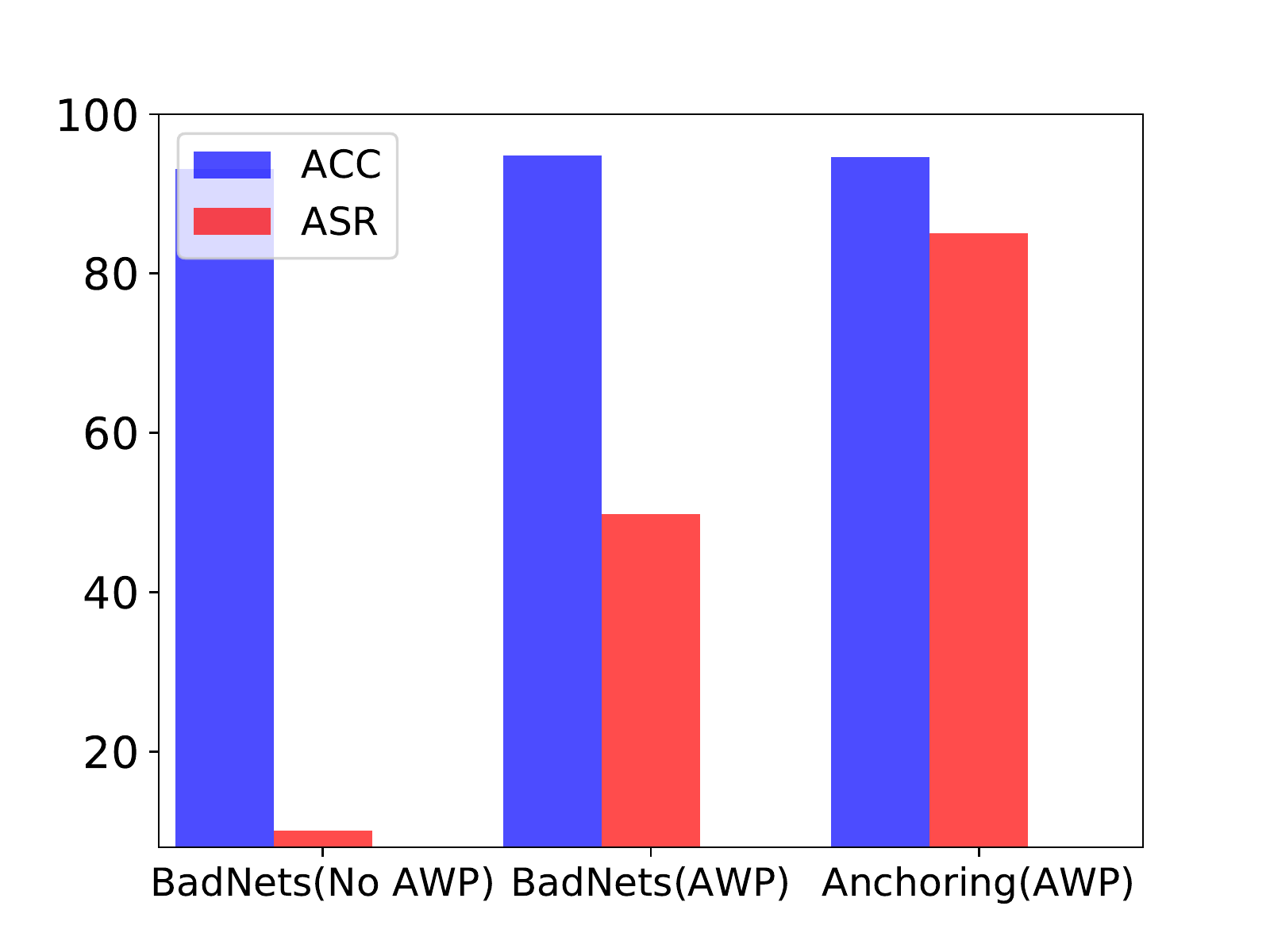}}
\hfil
\subcaptionbox{NAD ($\beta$=0.5).\label{fig:def_d}}{\includegraphics[height=1.2 in,width=0.24\linewidth]{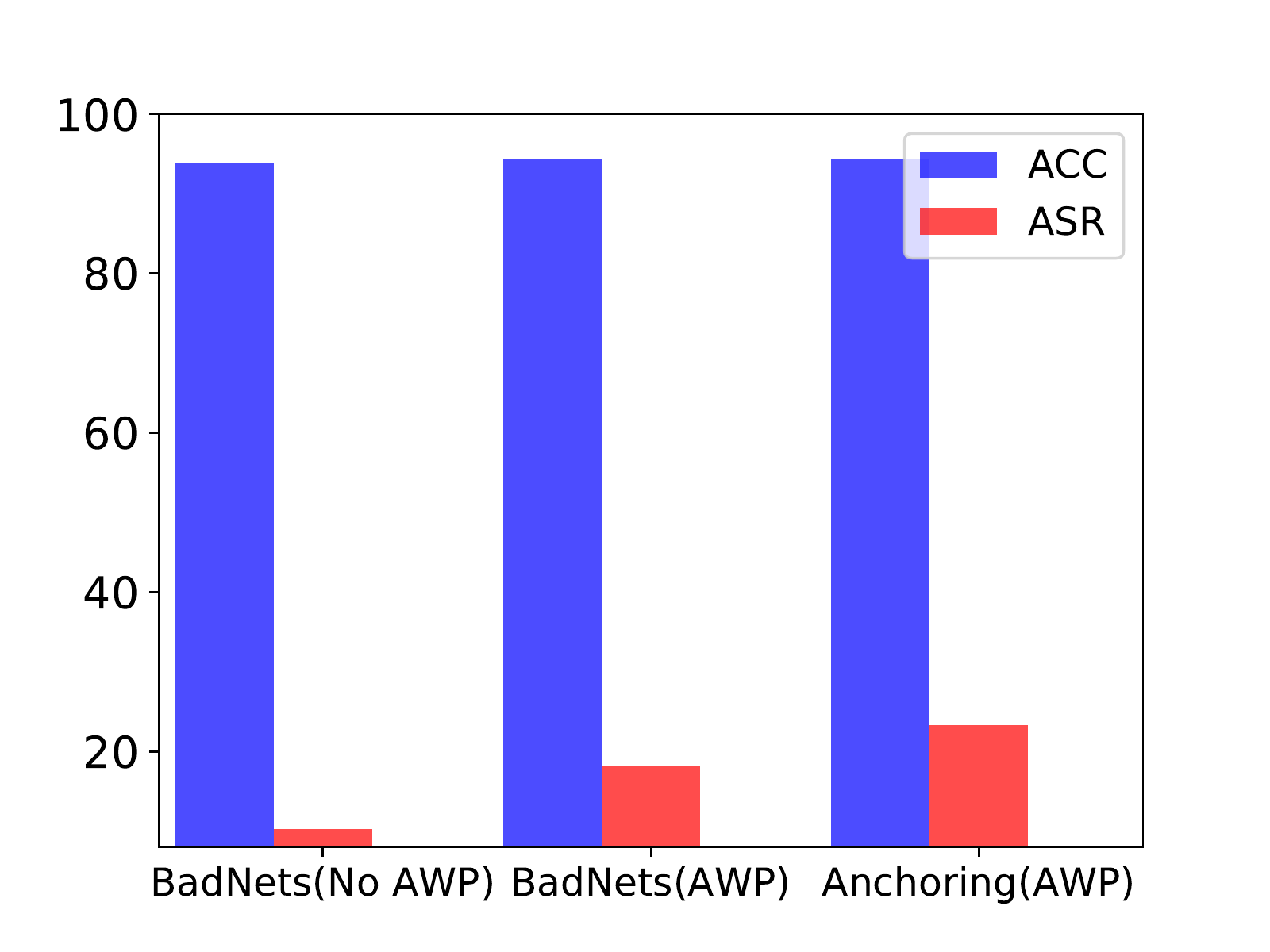}}
\caption{Exploration of backdoor defense of different backdoor methods.}
\label{fig:defense}
\vskip -0.15 in
\end{figure}

\subsection{Backdoor Detection and Mitigation}
\label{sec:defense}

We implement several backdoor detection and mitigation methods to defend against backdoored models with different backdoor methods on CIFAR-10. Here, BadNets (No AWP) denotes BadNets model trained from scratch with the full training dataset, which is far from the clean model, while BadNets (AWP) and Anchoring (AWP) models are close to the clean model and can be treated as models with AWP, as illustrated in Figure~\ref{fig:loss}. The experimental results in Figure~\ref{fig:defense} show that backdoored models with AWP are usually hard to detect or mitigate. Our proposed anchoring method can further improve the stealthiness of the backdoor, as its curve is the closeast to the clean model. 

\textbf{Backdoor detection methods.}
\citet{noise-response-detection} observe that backdoored models tend to be more confident in predicting an image with Gaussian noise than clean models, and propose to detect backdoored models with the noise response method. We show the results in Figure~\ref{fig:def_a}, where confident ratio denotes the ratio of images with a prediction score of $p>0.95$. As illustrated in Figure~\ref{fig:def_a}, BadNets (No AWP) can be easily detected by the noise response method, while models with AWPs cannot be detected. 
We also implement the backdoor detection method via targeted Universal Adversarial Perturbation (UAP)~\citep{UAP} with the backdoor target as the target label following \citet{UAP-Trojaned-Detection}. From the results in Figure~\ref{fig:def_b}, we can 
rank the detection difficulty as follows: Anchoring (AWP) $>$ BadNets (AWP) $>$ BadNets (No AWP).

\textbf{Backdoor mitigation methods.}
\citet{finetuning-backdoor-defense} propose to use standard fine-tuning to mitigate backdoor. \citet{Neural-Attention-Distillation} propose Neural Attention Distillation (NAD) to mitigate backdoor. In our experiment, we split 2000 samples for fine-tuning or NAD, and adopt another clean model as the teacher in NAD. As shown in Figure~\ref{fig:def_c}, the order of the mitigation difficulty is the same as Figure~\ref{fig:def_b}, i.e., Anchoring (AWP) $>$ BadNets (AWP) $>$ BadNets (No AWP). Standard fine-tuning can mitigate backdoor in BadNets (No AWP) completely, mitigate backdoor in BadNets (AWP) partly, but fail to mitigate backdoor in Anchoring (AWP). Even with NAD~\citep{Neural-Attention-Distillation}, which is a strong backdoor mitigation method and can completely mitigate backdoor in BadNets (No AWP), BadNets (AWP) and Anchoring (AWP) can still have about 20\% ASR after backdoor mitigation. 

\section{Related Work}

\paragraph{Backdoor Attacks.}
\textit{Backdoor attacks}~\citep{BadNets} or Trojaning attacks~\citep{Trojaning} pose serious threats to neural networks, where backdoors may be maliciously injected by data poisoning~\citep{Poisoning,DataPoisoning}. For example, backdoors can be injected into both CNN~\citep{backdoor_CNN}, LSTM~\citep{backdoor-lstm}, and pretrained BERT~\citep{Bert-backdoor} models. 
\textit{Backdoor detection}~\citep{backdoor_detect1,backdoor_detect2,backdoor_detect4,backdoor_detect5,UAP-Trojaned-Detection,noise-response-detection} methods or backdoor mitigation methods~\citep{finetuning-backdoor-defense,Neural-Attention-Distillation,MCR-defense,finepruning} can be utilized to defend against backdoor attacks. Backdoor detection methods usually identify the existence of backdoors in the model, via the responses of the model to input noises~\citep{noise-response-detection} or universal adversarial perturbations~\citep{UAP-Trojaned-Detection}. Typical backdoor mitigation methods mitigate backdoors via fine-tuning, including direct fine-tuning models~\citep{finetuning-backdoor-defense}, fine-tuning after pruning~\citep{finepruning}, and fine-tuning guided by knowledge distillation~\citep{Neural-Attention-Distillation}.

\paragraph{Reducing Side-effects in Backdoor Learning or Continual Learning. } 
Backdoor learning can be modeled as a continual learning task, which may inject new data patterns into models while conserving the knowledge in the initial model and avoiding catastrophic forgetting~\citep{EWC}. \citet{Can-AWP-inject-backdoor} propose to inject backdoors by Adversarial Weight Perturbations (AWPs). \citet{tbt-Bit-Trojan} propose to inject backdoors via targeted bit trojans. \citet{Backdoor-Embedding} propose to inject backdoors into word embeddings of NLP models for minimal side-effects. In continual learning, catastrophic forgetting could be overcome by Elastic Weight Consolidation (EWC)~\citep{EWC}. In backdoor learning, instance-wise side effects can be reduced by neural network surgery~\citep{neural-network-surgery}, which only modifies a small fraction of parameters, and \citet{backdoor_CNN} also observe that reducing the number of modified parameters 
can significantly reduce the alteration to the behavior of neural networks. 

\section{Conclusion}

In this work, we observe an interesting phenomenon 
that the variations of parameters are always adversarial weight perturbations (AWPs) when tuning the trained clean model to inject backdoors.
We further provide theoretical analysis to explain this phenomenon. 
We firstly formulate the behavior of maintaining accuracy on clean data as the consistency of backdoored models, including both global consistency and instance-wise consistency. To improve the consistency 
of the backdoored and clean models on the clean data, we propose a logit anchoring method to anchor or freeze the model behavior on the clean data. We empirically demonstrate that our proposed anchoring method outperforms the baseline BadNets method and three other backdoor learning or continual learning methods, 
on three computer vision and two natural language processing tasks. Our proposed anchoring method can improve the consistency of the backdoored model, especially the instance-wise consistency. Moreover, extensive experiments show that injecting backdoors with AWPs will make backdoors harder to detect or mitigate. Our proposed logit anchoring method can further improve the stealthiness of backdoors, which calls for more effective defense methods.

\section*{Acknowledgement}
The authors would like to thank anonymous reviewers for their helpful comments. This work is done when Zhiyuan Zhang was a research intern at Ant Group, Hangzhou. Xu Sun and Lingjuan Lyu are corresponding authors.

\bibliography{iclr2021_conference}
\bibliographystyle{iclr2021_conference}

\newpage
\appendix 
\section*{Appendix}

\section{Technical Details of Theoretical Analysis}

\subsection{Technical Details of Existence of Adversarial Weight Perturbation}

We first prove Proposition~\ref{propA:1} and then provide more technical details of Remark~\ref{remarkA:1}.

\begin{propA}[Upper Bound of $\|\bm\delta\|_2$]
\label{propA:1}
Suppose $\bm H$ denotes the Hessian matrix $\nabla^2_{\bm \theta}\mathcal{L}(\mathcal{D}, \bm\theta)$ on the clean dataset, $\vect{g}^*=\nabla_{\bm\theta}\mathcal{L}(\mathcal{D}^*,\bm\theta)$. Assume $\mathcal{L}(\mathcal{D}^*,\bm\theta+\bm\delta)\le\mathcal{L}(\mathcal{D}^*,\bm\theta)-|\Delta\mathcal{L}^*|$ can ensure that we successfully inject a backdoor. Suppose we can choose the poisoning ratio, and can control $\eta=|\mathcal{D}^*|/|\mathcal{D}|$, the adversarial weight perturbation $\bm\delta$ ($\bm\delta=\bm\delta(\eta)$ is determined by $\eta$) is, 
\begin{align}
\bm\delta=\bm\delta(\eta)=-\eta \bm{H}^{-1}\vect{g}^*+o(\|\bm\delta(\eta)\|_2)
\end{align}
and to ensure that we can successfully inject a backdoor, we only need to ensure that $\eta\ge\eta_0$. There exists an adversarial weight perturbation $\bm\delta$,
\begin{align}
\|\bm\delta\|_2\le\frac{|\Delta\mathcal{L}^*|+o(1)}{\|\vect{g}^*\|\cos\langle \vect{g}^*, \vect{H}^{-1}\vect{g}^*\rangle}
\end{align}
\end{propA}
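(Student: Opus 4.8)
The plan is to locate $\bm\theta^*=\bm\theta+\bm\delta$ as a stationary point of the backdoored loss and then read off $\bm\delta$ from a first-order expansion. Writing the backdoored objective as the poisoning-ratio-weighted combination $\mathcal{L}(\mathcal{D};\cdot)+\eta\,\mathcal{L}(\mathcal{D}^*;\cdot)$ (up to a positive normalizing constant $1/(1+\eta)$ that does not move stationary points), the condition $\nabla_{\bm\theta^*}\mathcal{L}(\mathcal{D}\cup\mathcal{D}^*;\bm\theta^*)=\vect{0}$ becomes
\begin{align}
\nabla_{\bm\theta}\mathcal{L}(\mathcal{D};\bm\theta+\bm\delta)+\eta\,\nabla_{\bm\theta}\mathcal{L}(\mathcal{D}^*;\bm\theta+\bm\delta)=\vect{0}.
\end{align}
First I would Taylor-expand each gradient about $\bm\theta$, using that $\bm\theta$ is a clean local minimum, so $\nabla_{\bm\theta}\mathcal{L}(\mathcal{D};\bm\theta)=\vect{0}$ and $\nabla^2_{\bm\theta}\mathcal{L}(\mathcal{D};\bm\theta)=\bm H$. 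This turns the clean gradient into $\bm H\bm\delta+o(\|\bm\delta\|_2)$ and the poisonous gradient into $\vect{g}^*+o(1)$, so that the $\eta\,\nabla^2\mathcal{L}(\mathcal{D}^*)\bm\delta$ contribution is of order $\eta\|\bm\delta\|_2$ and hence negligible relative to $\bm H\bm\delta$.

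Matching the leading terms gives $\bm H\bm\delta=-\eta\vect{g}^*+o(\|\bm\delta\|_2)$, i.e. $\bm\delta=-\eta\bm H^{-1}\vect{g}^*+o(\|\bm\delta\|_2)$, which is the first claim. Rigorously this is an implicit function theorem argument applied to $F(\bm\delta,\eta)=\nabla_{\bm\theta}\mathcal{L}(\mathcal{D};\bm\theta+\bm\delta)+\eta\,\nabla_{\bm\theta}\mathcal{L}(\mathcal{D}^*;\bm\theta+\bm\delta)$ at $(\vect{0},0)$: there $F=\vect{0}$, the $\bm\delta$-Jacobian equals the invertible $\bm H$, and the theorem yields a smooth branch $\bm\delta(\eta)$ with $\bm\delta(0)=\vect{0}$ and $\bm\delta'(0)=-\bm H^{-1}\vect{g}^*$, which is exactly the stated leading behavior.

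For the norm bound I would feed this $\bm\delta$ into the injection criterion. Expanding the poisonous loss along $\bm\delta$,
\begin{align}
\mathcal{L}(\mathcal{D}^*,\bm\theta+\bm\delta)-\mathcal{L}(\mathcal{D}^*,\bm\theta)=\vect{g}^{*\text{T}}\bm\delta+o(\|\bm\delta\|_2)=-\eta\,\vect{g}^{*\text{T}}\bm H^{-1}\vect{g}^*+o(\eta).
\end{align}
Imposing $\mathcal{L}(\mathcal{D}^*,\bm\theta+\bm\delta)\le\mathcal{L}(\mathcal{D}^*,\bm\theta)-|\Delta\mathcal{L}^*|$ forces $\eta\,\vect{g}^{*\text{T}}\bm H^{-1}\vect{g}^*\ge|\Delta\mathcal{L}^*|+o(1)$, and since $\bm H\succ\vect{0}$ makes $\vect{g}^{*\text{T}}\bm H^{-1}\vect{g}^*>0$, the smallest admissible ratio is $\eta_0=(|\Delta\mathcal{L}^*|+o(1))/(\vect{g}^{*\text{T}}\bm H^{-1}\vect{g}^*)$. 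Choosing $\eta=\eta_0$ and combining $\|\bm\delta\|_2=\eta\|\bm H^{-1}\vect{g}^*\|_2+o(\eta)$ with the identity $\vect{g}^{*\text{T}}\bm H^{-1}\vect{g}^*=\|\vect{g}^*\|\,\|\bm H^{-1}\vect{g}^*\|\cos\langle\vect{g}^*,\bm H^{-1}\vect{g}^*\rangle$, the factor $\|\bm H^{-1}\vect{g}^*\|_2$ cancels and leaves precisely the stated bound.

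The main obstacle is keeping the asymptotics honest rather than purely formal: I need $\bm H$ invertible (guaranteed if $\bm\theta$ is a non-degenerate clean minimum) and uniform control of the $o(\cdot)$ remainders so that the implicit branch $\bm\delta(\eta)$ actually exists on an interval containing $\eta_0$ and the injection inequality is consistent with the expansion there. A secondary subtlety is the direction of the final estimate: the derivation produces the leading term as an equality, so the inequality in the statement should be read as selecting the minimal injecting ratio $\eta_0$ (any larger $\eta$ only enlarges $\|\bm\delta\|_2$) and folding all curvature corrections into the $o(1)$ in the numerator. One must also verify $\cos\langle\vect{g}^*,\bm H^{-1}\vect{g}^*\rangle>0$, which again follows from $\bm H\succ\vect{0}$, ensuring the bound is finite and positive.
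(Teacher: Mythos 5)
Your proposal is correct and follows essentially the same route as the paper's proof: first-order stationarity of the backdoored objective gives $\bm\delta=-\eta\bm H^{-1}\vect{g}^*+o(\|\bm\delta\|_2)$, a first-order expansion of the poisonous loss pins down the minimal ratio $\eta_0=|\Delta\mathcal{L}^*|/({\vect{g}^*}^{\text{T}}\bm H^{-1}\vect{g}^*)+o(1)$, and the cosine identity cancels $\|\bm H^{-1}\vect{g}^*\|_2$ to yield the bound. Your implicit-function-theorem formalization of the branch $\bm\delta(\eta)$ is a welcome rigorization of the paper's purely formal Taylor asymptotics, but it is a refinement of the same argument, not a different one.
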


\begin{proof}
The definition of the loss function on the dataset can be rewritten as,
\begin{align}
\mathcal{L}(\mathcal{D};\bm\theta)=\frac{1}{|\mathcal{D}|}\sum\limits_{(\vect{x}, y)\in \mathcal{D}}[\mathcal{L}\big((\vect{x}, y);\bm\theta\big)],\quad \mathcal{L}(\mathcal{D}\cup\mathcal{D}^*;\bm\theta)=\frac{1}{1+\eta}\big(\mathcal{L}(\mathcal{D};\bm\theta)+\eta\mathcal{L}(\mathcal{D}^*;\bm\theta)\big)
\end{align}

According to the definition of optimal parameters,
\begin{align}
\nabla_{\bm\theta}\mathcal{L}(\mathcal{D};\bm\theta)=\vect{0},\quad \nabla_{\bm\theta}\mathcal{L}(\mathcal{D};\bm\theta^*)&+\eta\nabla_{\bm\theta}\mathcal{L}(\mathcal{D}^*;\bm\theta^*)=\vect{0}\\
\big(\nabla_{\bm\theta}\mathcal{L}(\mathcal{D};\bm\theta^*)-\nabla_{\bm\theta}\mathcal{L}(\mathcal{D};\bm\theta)\big)+\eta\nabla_{\bm\theta}\mathcal{L}(\mathcal{D}^*;\bm\theta^*)&=\big(\bm H\bm\delta+o(\|\bm\delta\|_2)\big)+\eta\vect{g}^*=\vect{0}
\end{align}

Therefore, the following equation holds,
\begin{align}
\bm\delta=-\eta{\bm H}^{-1}\vect{g}^*+o(\|\bm\delta\|_2)
\end{align}

Adopting Tayler expansion, the clean loss change and backdoor loss change after injecting backdoor $\bm\delta$ can be written as follows,
\begin{align}
\mathcal{L}(\mathcal{D};\bm\theta+\bm\delta)-
\mathcal{L}(\mathcal{D};\bm\theta)&=\frac{1}{2}\bm\delta^\text{T}\bm H \bm\delta+o(\|\bm\delta\|_2^2)=\frac{\eta^2}{2}{\vect{g}^*}^\text{T}{\bm H}^{-1} \vect{g}^*+o(\|\bm\delta\|_2^2)\\
\mathcal{L}(\mathcal{D}^*;\bm\theta+\bm\delta)-
\mathcal{L}(\mathcal{D}^*;\bm\theta)&= \bm\delta^\text{T}\vect{g}^*+o(\|\bm\delta\|_2)=-{\eta}{\vect{g}^*}^\text{T}{\bm H}^{-1} \vect{g}^*+o(\|\bm\delta\|_2)
\end{align}

To ensure that $\mathcal{L}(\mathcal{D}^*;\bm\theta+\bm\delta)<
\mathcal{L}(\mathcal{D}^*;\bm\theta)-|\Delta \mathcal{L}^*|$, we only need to choose $\eta_0=\frac{|\Delta\mathcal{L}^*|}{{\vect{g}^*}^\text{T}{\bm H}^{-1}\vect{g}^*}+o(1)$ and ensure $\eta\ge\eta_0$. There exists an adversarial weight perturbation, $\bm\delta=\bm\delta(\eta_0)$ that,
\begin{align}
\bm\delta(\eta_0)&=-\frac{|\Delta\mathcal{L}^*|\bm H^{-1}\vect{g}^*}{{\vect{g}^*}^\text{T}{\bm H}^{-1}\vect{g}^*}+o(\|\bm\delta(\eta_0)\|_2)\\
\|\bm\delta(\eta_0)\|_2&=\frac{|\Delta\mathcal{L}^*|\|\bm H^{-1}\vect{g}^*\|_2(1+o(1))}{\|{\vect{g}^*}\|_2\|{\bm H}^{-1}\vect{g}^*\|_2\cos\langle {\vect{g}^*}, {\bm H}^{-1}\vect{g}^*\rangle }=\frac{|\Delta\mathcal{L}^*|+o(1)}{\|\vect{g}^*\|\cos\langle \vect{g}^*, \vect{H}^{-1}\vect{g}^*\rangle}
\end{align}

Therefore, there exists $\bm\delta$ that can successfully inject a backdoor with $\|\bm\delta\|_2\le \|\bm\delta(\eta_0)\|_2$.
\end{proof}

\begin{remA}[Existence of the Optimal Backdoored Parameter with Adversarial Weight Perturbation]
\label{remarkA:1}
We take a logistic regression model 
for a two-class classification task 
as an example. When: (1) the strength of backdoor pattern is enough, (2) the backdoor pattern is added on the low-variance features of input data, \textit{e.g.}, on the blank corner of figures in CV, or choosing a low-frequency word in NLP, then: $\|\bm\delta\|_2$ is small, which ensures the existence of the optimal backdoored parameter with adversarial weight perturbation.
\end{remA}

The formally stated conditions in Remark~\ref{remarkA:1} is that, when: (1) $\|\vect{g}^*\|_2$ is large enough, (2) the direction of $\vect{g}^*$ is close to that of $\bm H^{-1}\vect{g}^*$, namely, the direction of $\vect{g}^*$ is close to the eigenvector of $\bm H$ with the minimum eigenvalue, then $\|\bm\delta\|_2$ is small according to Proposition~\ref{propA:1}. 

Here in (2), when the direction of $\vect{g}^*$ is close to that of $\bm H^{-1}\vect{g}^*$, we may assume $\vect{g}^*\approx \mu \bm H^{-1}\vect{g}^*$, or $\bm H\vect{g}^*\approx \mu\vect{g}^*$. Since $\mathcal{L}(\mathcal{D};\bm\theta+\bm\delta)-
\mathcal{L}(\mathcal{D};\bm\theta)=\frac{1}{2}\bm\delta^\text{T}\bm H \bm\delta\approx\frac{\mu}{2}\|\bm\delta\|_2^2$, we choose $\mu$ as the minimum eigenvalue of $\bm H$ for a smaller clean loss change.

Take a logistic regression model for a two-class classification task as an example, where $y\in\{-1, 1\}$, $p_{\bm\theta}(y|\vect{x})=\sigma(y\bm\theta^\text{T}\vect{x}),\mathcal{L}\big((\vect{x}, y);\bm\theta\big)=\log(1+\exp(-y\bm\theta^\text{T}\vect{x}))$ and $\sigma$ is the sigmoid function. The backdoor pattern is to change $(\vect{x}, -1)$ into $(\vect{x}+\bm \Delta, 1)$. We have,
\begin{align}
\bm H&=\frac{1}{|\mathcal{D}|}\sum\limits_{(\vect{x}, y)\in \mathcal{D}}(\sigma(\bm\theta^\text{T}\vect{x})\sigma(-\bm\theta^\text{T}\vect{x})\vect{x}\vect{x}^\text{T})\\
\vect{g}^*&=-\frac{1}{|\mathcal{D}^*|}\sum\limits_{(\vect{x}+\bm\Delta, 1)\in \mathcal{D}^*}\sigma(-\bm\theta^\text{T}(\vect{x}+\bm\Delta))(\vect{x}+\bm\Delta)
\end{align}

To ensure that (1) $\|\vect{g}^*\|_2$ is large enough, we should ensure that the strength of backdoor pattern $\|\bm \Delta\|_2$ is enough. The Hessian matrix is a re-weighted version of $\mathbb{E}[\vect{x}\vect{x}^\text{T}]=\mathbb{E}[\vect{x}]\mathbb{E}[\vect{x}]^\text{T}+\mathbb{D}[\vect{x}]=\mathbb{E}[\vect{x}]\mathbb{E}[\vect{x}]^\text{T}+\text{Cov}(\vect{x}, \vect{x})$. To ensure that (2) the direction of $\vect{g}^*$ is close to the eigenvector of $\bm H$ with the minimum eigenvalue, we should ensure the direction  of $\vect{g}^*$ or $\bm\Delta$ is close to the eigenvector of $\bm H$ or $\mathbb{D}(\vect{x})=\text{Cov}(\vect{x}, \vect{x})$ with the minimum eigenvalue. It means the backdoor pattern should be added on the low-variance features of input data, \textit{e.g.}, on the blank corner of figures in CV, or choosing low-frequency words in NLP.

\subsection{Detailed Version of Lemma~\ref{lemmaA:1} and Further Analysis.}

We introduce Lemma~\ref{lemmaA:1} to explain why the optimizer tends to converge to the backdoored parameter $\bm\theta^*$ with adversarial weight perturbation.

\begin{lemA}
[Detailed Version. The mean time to converge into and escape from optimal parameters, from \citet{distance-training-time} and \citet{escape-time-minima}]
\label{lemmaA:1}
As indicated in \citet{distance-training-time}, the time (step) $t$ for SGD to search an optimal parameter $\bm\theta^*_1$ is $\log t\sim \|\bm\theta^*_1-\bm\theta\|$. As proved in \citet{escape-time-minima}, the mean escape time (step) $\tau$ from the optimal backdoored parameter $\bm\theta^*$ outside the basin near the local minima $\bm\theta^*$ is,
\begin{align}
\tau=2\pi \frac{1}{|H_{be}|}\exp\big(\frac{2B}{\eta_\text{lr}}(\frac{s}{H_{ae}}+\frac{1-s}{|H_{be}|})\Delta\mathcal{L}^*\big)
\end{align}
where $B$ is the batch size, $\eta_\text{lr}$ is the learning rate, $s\in(0, 1)$ is a path-dependent parameter, and $H_{ae}, H_{be}$ denote the eigenvalues of the Hessians at the minima $\theta^*$ and a point outside of the local minima $\theta^*$ corresponding to the escape direction $e$.
\end{lemA}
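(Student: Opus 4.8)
The plan is to assemble the two claims from the cited dynamical analyses \citep{distance-training-time,escape-time-minima}, each of which treats SGD as a continuous-time stochastic process and reads off a timescale from its large-deviation behavior, and then to specialize them to the present notation. First I would model one SGD step as an Euler--Maruyama discretization of the stochastic differential equation
\begin{equation}
\diff\bm\theta = -\nabla_{\bm\theta}\mathcal{L}\,\diff t + \sqrt{\eta_\text{lr}/B}\;\bm\Sigma^{1/2}\,\diff\vect{W},
\end{equation}
where $\vect{W}$ is a Wiener process and the gradient-noise covariance $\bm\Sigma$ is tied to the local Hessian. Under this identification the effective temperature is $\eta_\text{lr}/(2B)$, so both the search time and the escape time become first-passage times for this diffusion, and the two quoted formulas follow from escape-rate theory at this temperature.

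For the escape time $\tau$ I would invoke the Eyring--Kramers (Freidlin--Wentzell) formula for the mean exit time from the basin of $\bm\theta^*$. The exponent is the Arrhenius factor: the inverse effective temperature $2B/\eta_\text{lr}$ multiplying the barrier $\Delta\mathcal{L}^*$, itself weighted by the curvature term $\tfrac{s}{H_{ae}}+\tfrac{1-s}{|H_{be}|}$, which encodes the anisotropy of the gradient noise along the escape path; here $H_{ae}$ and $H_{be}$ are the Hessian eigenvalues at the minimum and at the escape saddle and $s\in(0,1)$ is the path-weighting coefficient. The prefactor $2\pi/|H_{be}|$ is the usual ratio of the Gaussian partition function near the well to the probability flux through the single negative-curvature direction at the saddle. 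Exponentiating this recovers the stated closed form, and taking logarithms reproduces the brief version's $\log\tau=\log\tau_0+\tfrac{B}{\eta_\text{lr}}(C_1\kappa^{-1}+C_2)\Delta\mathcal{L}^*$ with $\tau_0=2\pi/|H_{be}|$.

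For the search time I would argue that reaching a distant minimum $\bm\theta^*_1$ forces the trajectory to traverse a sequence of intermediate barriers, so the cumulative first-passage time compounds multiplicatively; taking logarithms converts this product into additive growth that scales with the traversed distance, giving $\log t\sim\|\bm\theta^*_1-\bm\theta\|$ as established in \citep{distance-training-time}. Combined with the escape estimate, this yields the qualitative picture the lemma feeds into: a near minimum ($\|\bm\theta^*-\bm\theta\|$ small) is reached quickly, while a large barrier $\Delta\mathcal{L}^*$ makes $\tau$ exponentially large, so once the optimizer settles into $\bm\theta^*$ it is effectively trapped.

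The main obstacle is not the algebra of the Kramers formula but justifying the SDE approximation itself: one must argue that the discrete SGD increments are well approximated by the continuous diffusion on the relevant timescale, and that the gradient-noise covariance $\bm\Sigma$ is close enough to a scalar multiple of the Hessian for the effective-temperature identification to hold. Pinning down the path-dependent constant $s$ also requires solving a variational problem for the minimal-action escape path, which in a nonquadratic landscape is tractable only under a local quadratic approximation near the saddle. I would therefore state the result under that approximation and defer the detailed quasi-stationary flux computation to \citep{escape-time-minima}, treating this lemma essentially as an import of the two cited theorems rather than a self-contained derivation.
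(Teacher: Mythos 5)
Your proposal matches the paper's treatment of this lemma: the paper gives no derivation at all, stating the result as a direct import from the two cited works and then only performing the algebraic substitution $\kappa=H_{ae}$, $\tau_0=2\pi\frac{1}{|H_{be}|}$, $C_1=2s$, $C_2=\frac{2(1-s)}{|H_{be}|}$ to recover the brief version --- which is exactly the stance you settle on when you defer the SDE approximation and the quasi-stationary flux computation to the citations. Your supplementary sketch of the Kramers/Eyring escape-rate machinery and the log-distance search-time heuristic is consistent with what those cited works actually do, so there is nothing to correct.
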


The mean escape time $\tau$ can be rewritten as follows. Suppose $\kappa=H_{ae}$ measures the curvature near the clean local minima $\bm\theta$, $\tau_0=2\pi \frac{1}{|H_{be}|}, C_1=2s, C_2=\frac{2(1-s)}{|H_{be}|}$,
\begin{align}
\tau=\tau_0 \exp\big(\frac{B}{\eta_\text{lr}}(\frac{C_1}{\kappa}+C_2)\Delta\mathcal{L}^*\big),\quad \log\tau=\log\tau_0+ \frac{B}{\eta_\text{lr}}(C_1\kappa^{-1}+C_2)\Delta\mathcal{L}^*
\end{align}

According to Lemma~\ref{lemmaA:1}, it is easy for optimizers to find the optimal backdoored parameter with adversarial weight perturbation in the early stage of backdoor learning. Modern optimizers have a large $\frac{B}{\eta_\text{lr}}$ and tend to find flat minima, thus $\kappa$ is small. $\Delta\mathcal{L}^*$ tends to be large since adversarial weight perturbation can successfully inject a backdoor. Therefore, it is hard to escape from $\bm\theta^*$ according to Lemma~\ref{lemmaA:1}. To conclude, the optimizer tends to converge to the backdoored parameter $\bm\theta^*$ with adversarial weight perturbation.

\subsection{Proof of Proposition~\ref{propA:clean}}

\begin{propA}
\label{propA:clean}
In the demo model, suppose $L=\mathbb{E}_{(\vect{x}, y)\in \mathcal{D}}[\sum\limits_{i=1}^C\epsilon_i^2]$ denotes the anchoring loss, then,
\begin{align}
    \Delta\mathcal{L}(\mathcal{D})= \mathbb{E}_{(\vect{x}, y)\in \mathcal{D}}\left[\sum\limits_{i,j} \frac{p_ip_j(\epsilon_i-\epsilon_j)^2}{4} \right]+o(L)\le \mathbb{E}_{(\vect{x}, y)\in \mathcal{D}}\left[\sum\limits_{i=1}^C p_i(1-p_i)\epsilon_i^2 \right]+o(L)
\end{align}
\end{propA}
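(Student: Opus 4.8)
The plan is to Taylor-expand the clean cross-entropy loss to second order in the logit perturbations $\epsilon_i = \bm\delta_i^\text{T}\vect{x}$ and to use the fact that $\bm\theta$ is a clean local minimum to eliminate the first-order contribution. First I would view the per-instance loss as a function of the logit vector and record the standard softmax identities: the gradient with respect to the logits is $\partial\mathcal{L}/\partial s_i = p_i - \indict(i=y)$ and the Hessian is $\partial^2\mathcal{L}/\partial s_i\partial s_j = p_i\delta_{ij} - p_ip_j$. Because the logits are linear in the parameters, perturbing $\bm\theta$ by $\bm\delta$ shifts $s_i$ by exactly $\epsilon_i$, so the expansion in $\bm\epsilon$ is governed by $\|\bm\delta\|$ and its cubic remainder is $O(\|\bm\delta\|^3)$. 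Since $L=\mathbb{E}_{(\vect{x},y)\in\mathcal{D}}[\sum_i\epsilon_i^2]$ is of the same order as $\|\bm\delta\|^2$ (assuming a bounded, nondegenerate second moment $\mathbb{E}[\vect{x}\vect{x}^\text{T}]$), this remainder is $o(L)$.

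Next I would average over $\mathcal{D}$. The first-order term becomes $\sum_i \bm\delta_i^\text{T}\,\mathbb{E}_{(\vect{x},y)\in\mathcal{D}}[(p_i - \indict(i=y))\vect{x}]$, and each expectation is precisely $\nabla_{\vect{w}_i}\mathcal{L}(\mathcal{D};\bm\theta)$, which vanishes at the local minimum; hence only the quadratic term survives. That term equals $\tfrac{1}{2}\mathbb{E}[\sum_{i,j}(p_i\delta_{ij}-p_ip_j)\epsilon_i\epsilon_j] = \tfrac{1}{2}\mathbb{E}[\Var_p(\epsilon)]$, the softmax-weighted variance of the logit shifts. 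Applying the elementary identity $\Var_p(\epsilon)=\sum_i p_i\epsilon_i^2-(\sum_i p_i\epsilon_i)^2=\tfrac{1}{2}\sum_{i,j}p_ip_j(\epsilon_i-\epsilon_j)^2$ then yields the claimed equality, with the factor $1/4$.

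For the upper bound I would rewrite the target expression using $1-p_i=\sum_{j\ne i}p_j$, giving $\sum_i p_i(1-p_i)\epsilon_i^2 = \sum_{i\ne j}p_ip_j\epsilon_i^2 = \tfrac{1}{2}\sum_{i\ne j}p_ip_j(\epsilon_i^2+\epsilon_j^2)$ after symmetrizing the labels. Comparing this with the middle expression $\tfrac{1}{4}\sum_{i\ne j}p_ip_j(\epsilon_i-\epsilon_j)^2$ summand by summand, and using $p_ip_j\ge 0$ together with $(\epsilon_i-\epsilon_j)^2\le 2(\epsilon_i^2+\epsilon_j^2)$ (which is equivalent to $(\epsilon_i+\epsilon_j)^2\ge 0$), the inequality follows termwise and then under the expectation.

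I expect the only genuinely delicate point to be the bookkeeping of the error term: justifying that the cubic Taylor remainder is $o(L)$ rather than merely $o(\|\bm\delta\|^2)$, which rests on $\mathbb{E}_{(\vect{x},y)\in\mathcal{D}}[\sum_i\epsilon_i^2]$ being bounded above and below by constant multiples of $\|\bm\delta\|^2$. The Hessian computation, the variance identity, and the final pairwise inequality are all routine.
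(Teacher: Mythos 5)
Your proposal is correct and follows essentially the same route as the paper's proof: a second-order Taylor expansion with the softmax Hessian $\partial^2\mathcal{L}/\partial s_i\partial s_j=\mathbb{I}(i=j)p_i-p_ip_j$, the variance identity $\sum_i p_i\epsilon_i^2-\big(\sum_i p_i\epsilon_i\big)^2=\tfrac{1}{2}\sum_{i,j}p_ip_j(\epsilon_i-\epsilon_j)^2$, and the termwise bound $(\epsilon_i-\epsilon_j)^2\le 2(\epsilon_i^2+\epsilon_j^2)$. If anything, you are slightly more careful than the paper on two points it glosses over: you invoke local minimality of $\bm\theta$ to cancel the first-order term only after averaging over $\mathcal{D}$ (the paper omits that term per instance without comment), and you make explicit the restriction to $i\ne j$ that is needed for the final identification $\sum_{i\ne j}p_ip_j\epsilon_i^2=\sum_i p_i(1-p_i)\epsilon_i^2$.
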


\begin{proof}

For the loss function $\mathcal{L}\big((\vect{x}, y)\big)=-\log p_y$, we calculate the gradients and the Hessian,
\begin{align}
\frac{\partial p_i
}{\partial s_j}=\mathbb{I}(i=j)p_i&-p_ip_j, \quad
\frac{\partial p_i}{\partial \bm\delta_j}=(\mathbb{I}(i=j)p_i-p_ip_j)\vect{x}\\
\frac{\partial \mathcal{L}\big((\vect{x}, y)\big)}{\partial \bm\delta_j} &= \frac{\partial (-\log p_y\big)}{\partial \bm\delta_j} = (-\mathbb{I}(j=y)+p_j)\vect{x}\\
\frac{\partial^2 \mathcal{L}\big((\vect{x}, y)\big)}{\partial \bm\delta_i\partial \bm\delta_j}&=\frac{\partial p_i}{\partial \bm\delta_j}\vect{x}^\text{T}=(\mathbb{I}(i=j)p_i-p_ip_j)\vect{x}\vect{x}^\text{T}
\end{align}

Adopting the second-order Taylor expansion, with the error term $o(\sum\limits_{i=1}^C\epsilon_i^2)$,
\begin{align}
    &\Delta\mathcal{L}\big((\vect{x}, y)\big)=\sum\limits_{i,j}\frac{1}{2}\bm\delta_i^\text{T}\frac{\partial^2\mathcal{L}}{\partial \bm\delta_i \partial \bm\delta_j}\bm\delta_j+o(\sum\limits_{i=1}^C\epsilon_i^2)\\
    &= \frac{\sum\limits_{i=1}^C p_i(\bm\delta_i^\text{T}\vect{x})^2-\big(\sum\limits_{i=1}^C p_i\bm\delta_i^\text{T}\vect{x}\big)^2}{2}+o(\sum\limits_{i=1}^C\epsilon_i^2)=\frac{\sum\limits_{i=1}^C p_i\epsilon_i^2-\big(\sum\limits_{i=1}^C p_i\epsilon_i \big)^2}{2}+o(\sum\limits_{i=1}^C\epsilon_i^2)\\
    &=\frac{\sum\limits_{i,j} p_jp_i\epsilon_i^2-\sum\limits_{i,j} p_i\epsilon_i p_j\epsilon_j}{2}+o(\sum\limits_{i=1}^C\epsilon_i^2)=\frac{\sum\limits_{i,j} p_jp_i(\epsilon_i^2+\epsilon_j^2)-2\sum\limits_{i,j} p_i\epsilon_i p_j\epsilon_j}{4}+o(\sum\limits_{i=1}^C\epsilon_i^2)\\
    &=\sum\limits_{i,j} \frac{p_ip_j(\epsilon_i-\epsilon_j)^2}{4}+o(\sum\limits_{i=1}^C\epsilon_i^2)\\
\end{align}

Calculate the expectation on the dataset $\mathcal{D}$,
\begin{align}
 &\Delta\mathcal{L}(\mathcal{D})= \mathbb{E}_{(\vect{x}, y)\in \mathcal{D}}\left[\sum\limits_{i, j} \frac{p_ip_j(\epsilon_i-\epsilon_j)^2}{4} \right]+o\big(\mathbb{E}_{(\vect{x}, y)\in \mathcal{D}}[\sum\limits_{i=1}^C\epsilon_i^2]\big)\\
 &\le\mathbb{E}_{(\vect{x}, y)\in \mathcal{D}}\left[\sum\limits_{i, j} \frac{p_ip_j(\epsilon_i^2+\epsilon_j^2)}{2} \right]+o\big(\mathbb{E}_{(\vect{x}, y)\in \mathcal{D}}[\sum\limits_{i=1}^C\epsilon_i^2]\big)=\mathbb{E}_{(\vect{x}, y)\in \mathcal{D}}\left[\sum\limits_{i, j} {p_ip_j\epsilon_i^2} \right]+o(L)\\
 &=\mathbb{E}_{(\vect{x}, y)\in \mathcal{D}}\left[\sum\limits_{i=1}^C {p_i(1-p_i)\epsilon_i^2} \right]+o(L)
\end{align}
\end{proof}

\subsection{Proof of Proposition~\ref{propA:KL}}

\begin{propA}
\label{propA:KL}
For instance $(\vect{x}, y)$, the Kullback-Leibler divergence $\text{KL}(p||p^*)$ is bounded by the anchoring loss, namely, there exists $\alpha=\frac{1}{2}$, that the following inequality holds,
\begin{align}
    \text{KL}(p||p^*)=\sum\limits_{i=1}^C p_i\log\frac{p_i}{p_i^*}\le (\alpha + o(1)) \sum\limits_{i=1}^C\epsilon_i^2
\end{align}
\end{propA}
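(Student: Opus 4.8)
The plan is to reduce $\text{KL}(p||p^*)$ to an identity involving only the logit shifts $\epsilon_i$ and then Taylor-expand under the AWP assumption, where $\bm\delta=\bm\theta^*-\bm\theta$ is small and hence every $\epsilon_i=s_i(\bm\theta+\bm\delta,\vect{x})-s_i(\bm\theta,\vect{x})$ is small. First I would eliminate the partition functions: writing $Z=\sum_j\exp(s_j)$ and $Z^*=\sum_j\exp(s_j+\epsilon_j)$, we have $\log p_i=s_i-\log Z$ and $\log p_i^*=s_i+\epsilon_i-\log Z^*$, so the logit $s_i$ cancels and the constant $\log(Z^*/Z)$ factors out of the $p$-weighted sum. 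Using $Z^*/Z=\sum_j p_j\exp(\epsilon_j)$ this gives
\begin{align}
\text{KL}(p||p^*)=-\sum_{i=1}^C p_i\epsilon_i+\log\Big(\sum_{j=1}^C p_j\exp(\epsilon_j)\Big).
\end{align}
Viewing $\epsilon$ as a random variable $E$ with $\prob(E=\epsilon_i)=p_i$, the right-hand side equals $\log\E[e^{E}]-\E[E]$, the log-moment-generating function of $E$ minus its mean.

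Next I would expand to second order. Setting $m_1=\sum_i p_i\epsilon_i$ and $m_2=\sum_i p_i\epsilon_i^2$, substituting $\exp(\epsilon_j)=1+\epsilon_j+\tfrac12\epsilon_j^2+o(\epsilon_j^2)$ and then $\log(1+u)=u-\tfrac12 u^2+o(u^2)$ with $u=m_1+\tfrac12 m_2+o(m_2)$, the higher-order products $m_1 m_2$ and $m_2^2$ fall into the remainder $o(\sum_i\epsilon_i^2)$, leaving
\begin{align}
\text{KL}(p||p^*)=\tfrac12\big(m_2-m_1^2\big)+o\Big(\sum_{i=1}^C\epsilon_i^2\Big)=\tfrac12\,\Var_p[\epsilon]+o\Big(\sum_{i=1}^C\epsilon_i^2\Big).
\end{align}
Finally, bounding the variance by the second moment and using $p_i\le 1$ gives $\Var_p[\epsilon]\le\sum_i p_i\epsilon_i^2\le\sum_i\epsilon_i^2$, which yields $\text{KL}(p||p^*)\le(\tfrac12+o(1))\sum_i\epsilon_i^2$, i.e.\ $\alpha=\tfrac12$.

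The step I expect to be the main obstacle is the bookkeeping of the remainder: I must verify that composing the two second-order expansions (of the exponential inside the sum and of the logarithm) produces an error that is genuinely $o(\sum_i\epsilon_i^2)$ rather than merely $o(1)$, which requires tracking that $m_1=O(\|\bm\delta\|_2)$ and $m_2=O(\|\bm\delta\|_2^2)$ are controlled uniformly in the weights $p_i$. It is also worth noting that the stated constant $\alpha=\tfrac12$ comes entirely from the last, crude inequality $\sum_i p_i\epsilon_i^2\le\sum_i\epsilon_i^2$; the variance form of the leading term shows this bound is far from tight, but since the proposition only asserts that $\alpha=\tfrac12$ is admissible, this suffices.
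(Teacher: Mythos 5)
Your proof is correct, and it reaches the paper's bound by a somewhat different and arguably cleaner route. The paper works in probability space: it first expands the softmax outputs to first order, $p_i^*-p_i=p_i(\epsilon_i-\bar\epsilon)+o(\cdot)$ with $\bar\epsilon=\sum_j p_j\epsilon_j$, obtains $\text{KL}(p\|p^*)=\sum_i (p_i^*-p_i)^2/(2p_i)+o\big(\sum_i\epsilon_i^2\big)=\tfrac12\sum_i p_i(\epsilon_i-\bar\epsilon)^2+o\big(\sum_i\epsilon_i^2\big)$, and then proves admissibility of $\alpha=\tfrac12$ by a matrix argument: the quadratic form $\tfrac12\sum_i p_i(\epsilon_i-\bar\epsilon)^2-\alpha\sum_i\epsilon_i^2$ is nonpositive iff $2\alpha\bm I-\bm P+\vect{p}\vect{p}^{\text{T}}\succeq\vect{0}$, which holds at $\alpha=\tfrac12$ since $\bm I-\bm P\succeq\vect{0}$ and $\vect{p}\vect{p}^{\text{T}}\succeq\vect{0}$. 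You instead stay in logit space: your identity $\text{KL}(p\|p^*)=\log\E_p[e^{E}]-\E_p[E]$ is exact, so only one Taylor expansion is needed, and it yields the same leading term $\tfrac12\Var_p[\epsilon]=\tfrac12\sum_i p_i(\epsilon_i-\bar\epsilon)^2$; your closing chain $\Var_p[\epsilon]\le\sum_i p_i\epsilon_i^2\le\sum_i\epsilon_i^2$ is exactly the scalar form of the paper's two semidefiniteness facts (dropping $\big(\sum_i p_i\epsilon_i\big)^2$ corresponds to $\vect{p}\vect{p}^{\text{T}}\succeq\vect{0}$, and $p_i\le 1$ to $\bm I-\bm P\succeq\vect{0}$), so the constant arises for the same reason in both proofs. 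What your route buys: all remainder bookkeeping is concentrated in one expansion of a smooth function of $\epsilon$, and the worry you flag is easily discharged, since $p_i\le 1$ and Cauchy--Schwarz give $m_1^2\le m_2\le\sum_i\epsilon_i^2$ uniformly in the weights, so every discarded term is $O(\|\epsilon\|^3)=o\big(\sum_i\epsilon_i^2\big)$; by contrast, the paper's expansion of $\log\big(1+(p_i^*-p_i)/p_i\big)$ divides by $p_i$ and thus needs more care when some class probability is tiny. What the paper's route buys: the explicit Hessian formulation makes transparent what determines the admissible $\alpha$ and what a sharper, $p$-dependent constant would require.
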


\begin{proof}

Consider the following approximation holds,
\begin{align}
\frac{\partial p_i}{\partial s_j}&=\mathbb{I}(i=j)p_i-p_ip_j\\ p_i'-p_i &=\sum\limits_{j}\frac{\partial p_i}{\partial s_j}\epsilon_j=p_i\epsilon_i-\sum\limits_jp_ip_j\epsilon_j+o(\sqrt{\sum\limits_{i=1}^C\epsilon_i^2})=p_i(\epsilon_i-\bar\epsilon)+o(\sqrt{\sum\limits_{i=1}^C\epsilon_i^2})
\end{align}
where $\bar\epsilon=\sum\limits_jp_j\epsilon_j$.
Adopting the second-order Taylor expansion, we have the following approximation with the error term $o(\sum\limits_{i=1}^C\epsilon_i^2)$,
\begin{align}
\text{KL}(p||p^*)&=\sum\limits_{i} p_i\log\frac{p_i}{p^*_i}=-\sum\limits_{i}p_i\log(1+\frac{p^*_i-p_i}{p_i})\\
&= -\sum\limits_{i}p_i\frac{p^*_i-p_i}{p_i}+\sum\limits_{i}\frac{(p^*_i-p_i)^2}{2p_i}+o(\sum\limits_{i=1}^C\epsilon_i^2)\\
&=\sum\limits_{i}\frac{(p^*_i-p_i)^2}{2p_i}+o(\sum\limits_{i=1}^C\epsilon_i^2)= \frac{1}{2}\sum\limits_{i}p_i(\epsilon_i-\bar\epsilon)^2+o(\sum\limits_{i=1}^C\epsilon_i^2)
\end{align}

If $\big(\alpha+o(1)\big) \sum\limits_{i=1}^C\epsilon_i^2$ is the upper bound of $\text{KL}(p||p^*)$, then $f=\frac{1}{2}\sum\limits_{i}p_i(\epsilon_i-\bar\epsilon)^2-\alpha \sum\limits_{i=1}^C\epsilon_i^2\le 0$ holds near the zero point $\epsilon_i=0$,
\begin{align}
\frac{\partial f}{\partial \epsilon_i}=p_i(\epsilon_i-\bar\epsilon)-2\alpha\epsilon_i,\quad
\frac{\partial^2 f}{\partial \epsilon_i\partial \epsilon_j}=-\mathbb{I}(i=j)(2\alpha-p_i)-p_ip_j
\end{align}

On the zero point $\epsilon_i=0$, $\frac{\partial L}{\partial \epsilon_i}=0$. Therefore, the Hessian matrix $\bm H_f=[\frac{\partial^2 f}{\partial \epsilon_i\partial \epsilon_j}]_{i,j}$ should be semi-negative definite. Suppose $\vect{p}=(p_1, p_2, \cdots, p_C)^\text{T}, \bm P=\text{diag}\{p_1, p_2, \cdots, p_C\}$, $f\le 0$ is equivalent to the condition that $-\bm H_f=2\alpha\bm I-\bm P+\vect{p}\vect{p}^\text{T}\succeq \vect{0}$.

Since $\vect{p}\vect{p}^\text{T}\succeq \vect{0}$ and $\bm I-\bm P\succeq \vect{0}$, when $\alpha=\frac{1}{2}$,
\begin{align}
2\alpha\bm I-\bm P+\vect{p}\vect{p}^\text{T}=(\bm I-\bm P)+\vect{p}\vect{p}^\text{T}\succeq \vect{0}
\end{align}

\end{proof}

\section{Experimental Setups}

We conduct targeted backdoor learning experiments
in both the computer vision and natural language processing fields. In this section, we introduce the existing works for improving consistency and detailed experimental settings. 

\subsection{Introduction of Baselines and Implementation Settings.}

We introduce the baselines and implementation settings as follows. We implement BadNets~\citep{BadNets}, $L_2$ penalty~\citep{Can-AWP-inject-backdoor}, Elastic Weight Consolidation (EWC)~\citep{EWC}, and neural network surgery (Surgery)~\citep{neural-network-surgery} methods in our experiments.

\subsubsection{BadNets.}
The learning target of BadNets~\citep{BadNets} is:
\begin{align}
\bm\theta^*=\arg\min \mathcal{L}(\mathcal{D}\cup \mathcal{D}^*;\bm\theta^*)
\end{align}

\subsubsection{Penalty.}
\citet{Can-AWP-inject-backdoor} find that adversarial weight perturbations can be difficult to detect due to hardware quantization errors. Therefore, they propose to minimize $\|\bm\theta^*-\bm\theta\|_p$ at the meantime when injecting backdoors. They formulate the optimization target as,
\begin{align}
    \bm\theta^*=\arg\min\limits_{ \|\bm\theta^*-\bm\theta\|_p\le\epsilon} \mathcal{L}(\mathcal{D}\cup \mathcal{D}^*;\bm\theta^*) 
\end{align}
and they solve the optimization via the PGD algorithm. We also conduct experiments of their implementation in Appendix.C.3.

Our implementation to minimize $\|\bm\theta^*-\bm\theta\|_p$ during backdoor learning is adding an $L_2$ penalty on the loss function~\citep{EWC,Can-AWP-inject-backdoor}. In our work, we formulate the learning target as the $L_2$ Lagrange relaxation loss or $L_2$ penalty~\citep{Can-AWP-inject-backdoor,EWC}:
\begin{align}
\bm\theta^*=\arg\min \frac{1}{\lambda+1}\mathcal{L}( \mathcal{D}\cup\mathcal{D}^*;\bm\theta^*)+\frac{\lambda}{\lambda+1}\|\bm\theta^*-\bm\theta\|_2^2
\end{align}

\subsubsection{EWC.}
\citet{EWC} propose to overcome catastrophic forgetting during transfer learning or continual learning via Elastic Weight Consolidation (EWC) instead of $L_2$ penalty. The optimization target of EWC in backdoor learning can be written as,
\begin{align}
\bm\theta^*=\arg\min \frac{1}{\lambda+1}\mathcal{L}(\mathcal{D}\cup \mathcal{D}^*;\bm\theta^*)+\frac{\lambda}{\lambda+1}\sum_{i=1}^n F_{i}(\theta^*_i-\theta_i)^2
\end{align}
where $F_i$ denotes the $i$-th element in the diagonal of Fisher information matrix, which is an approximation of $H_{ii}$ in Hessian matrix $\bm H$ on $\mathcal{D}$. The EWC regularization term $\frac{1}{2}\sum_{i=1}^n F_{i}(\theta^*_i-\theta_i)^2$ can be treated as an approximation of clean loss change, $\frac{1}{2}\sum_{i=1}^n F_{i}(\theta^*_i-\theta_i)^2\approx \frac{1}{2}\sum_{i=1}^n H_{ii}(\theta^*_i-\theta_i)^2\approx \frac{1}{2}\bm\delta^\text{T}\bm H\bm\delta\approx \mathcal{L}(\mathcal{D};\bm\theta+\bm\delta)- \mathcal{L}(\mathcal{D};\bm\theta)$. 

$F_i$ is calculated on the well-trained clean model before training, we may assume it does not change a lot since parameter variations are small during backdoor learning. Besides, we normalize and smooth $F_i$ for better stability:
\begin{align}
F_i'&= \gamma\frac{F_{i}}{\sum_{i=1}^n F_{i}}+(1-\gamma)
\end{align}
where we choose $\gamma=0.9$.

\subsubsection{Surgery.} 
\citet{neural-network-surgery} propose to inject backdoors with only a small fraction of parameters changed compared to a well-trained clean model, and adopt the $L_1$ Lagrange relaxation loss to enforce models to satisfy $\{\bm\delta:\|\bm\delta\|_0\le k\}$:
\begin{align}
\bm\theta^*=\arg\min \frac{1}{\lambda+1}\mathcal{L}( \mathcal{D}\cup\mathcal{D}^*;\bm\theta^*)+\frac{\lambda}{\lambda+1}\|\bm\theta^*-\bm\theta\|_1
\end{align}

For more sparse $\bm\delta$, \citet{neural-network-surgery} also adopt some pruning or dynamic selecting techniques, which is not the concern of our work.

\subsection{Detailed Experimental Settings}

In all experiments, we report the results of the model on the epoch with maximum ACC+ASR.  In our implementation, $\mathcal{D}$ can be a small training dataset or the full training dataset, $\mathcal{D}^*$ is poisoned from $\mathcal{D}$, and the poisoning ratio is 50\%.

\subsubsection{Computer Vision}

The initial model is ResNet-34~\citep{resnet}. We conduct image recognization tasks on CIFAR-10~\citep{CIFAR}, CIFAR-100~\citep{CIFAR}, and Tiny-ImageNet~\citep{ImageNet}. When training the initial model, the learning rate is 0.1, the weight decay is $5\times 10^{-4}$ and momentum is 0.9, and batch size is 128. The optimizer is SGD. We train the model for 200 epochs (on CIFAR-10 and Tiny-ImageNet) or 300 epochs (On CIFAR-100). After 150 epochs and 250 epochs, the learning rate is divided by 10. 

During backdoor training, 
we insert a 5-pixel pattern as our backdoor pattern, as shown in Figure~\ref{fig1:backdoor}. Images with backdoor patterns are targeted to be classified as class 0. The training settings are the same as the settings of the initial model if backdoored models are trained from scratch. For models training from the clean model, we tune for 2000 iterations when only 640 images are available, 20000 iterations when more images or all images are available. The learning rate is 0.001. Other settings are the same as the initial model. We grid search hyperparameter $\lambda$ in \{1e-5, 2e-5, 5e-5, 1e-4, 2e-4, 5e-4, 1e-3, 2e-3, 5e-3, 0.01, 0.02, 0.05, 0.1, 0.2, 0.5, 1, 2, 5, 10, 20, 50, 100\}.

\begin{figure}[!h]
\centering
\includegraphics[height= 1.1 in, width=1.1 in]{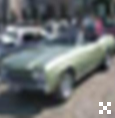}
\makeatletter\def\@captype{figure}\makeatother
\caption{An illustration of the 5-pixel backdoor pattern on CIFAR-10.} 
\label{fig1:backdoor}
\end{figure}

\subsubsection{Natural Language Processing}

The initial model is a fine-tuned BERT~\citep{Bert}. We conduct sentiment classification tasks on IMDB~\citep{IMDB} and SST-2~\citep{SST-2}. The text is uncased and the maximum length of tokens is 384. The fine-tuned BERT is the uncased BERT base model~\citep{Bert}. The optimizer is the AdamW optimizer with the training batch size of 8 and the learning rate of $2\times 10^{-5}$. We fine-tune the model for 10 epochs.

During backdoor training, our trigger word is a low-frequency word ``cf''. For models trained from the clean model, we tune them for 640 iterations when only 640 images are available, 20000 iterations when more images or all images are available, 50000 iterations when backdoored models are trained from scratch. Other settings are the same as the initial model. During hyperparameter selection, we grid search $\lambda$ in \{1e-5, 2e-5, 5e-5, 1e-4, 2e-4, 5e-4, 1e-3, 2e-3, 5e-3, 0.01, 0.02, 0.05, 0.1, 0.2, 0.5, 1, 2, 5, 10\}. 

\subsection{Choice of Metrics to Evaluate Instance-wise Consistency}
\citet{neural-network-surgery} choose the Pearson correlation of the performance indicator (such as ACC, F-1, or BLEU) to evaluate the instance-wise side effects, which only considers the consistency of predicted labels. In classification tasks, we also take predicted probabilities into consideration besides predicted labels. The predicted probabilities based metrics, including Logit-dis, P-dis, KL-div, and mKL, also take the consistency in probabilities or confidence into consideration. Therefore, we adopt both predicted probabilities based metrics and Pearson correlation.

\subsection{Defense Details}

In fine-tuning~\citep{finetuning-backdoor-defense} or NAD~\citep{Neural-Attention-Distillation} defense, only 2000 instances are available. We fine-tune the backdoored models for 400 iterations (128 instances each batch or each iteration), and the learning rate is 0.005. Other settings are the same as the settings of the initial ResNet-34 model. The teacher model in NAD is anthoer ResNet-34 model.

\section{Supplementary Experimental Results}

\subsection{Influence of the hyperparameter}  

We also investigate the influence of the hyperparameter $\lambda$ on $L_2$ penalty methods. Detailed results are shown in Figure~\ref{fig1:othermethods}. On both the $L_2$ penalty and our proposed anchoring methods larger $\lambda$ can preserve more knowledge in the clean model, improve the backdoor consistency but harm the backdoor ASR when $\lambda$ is too large. Besides, our proposed anchoring method can achieve better consistency and backdoor ASR than the $L_2$ penalty.
Supplementary experimental results on CIFAR-100 and Tiny-ImageNet are shown in  Figure~\ref{fig1:otherdatasets}. Similar conclusions can be drawn from Figure~\ref{fig1:otherdatasets}.

\begin{figure}[H]
\centering
\hfil
\subcaptionbox{$L_2$ Consistency.}{\includegraphics[height=1.2 in,width=0.24\linewidth]{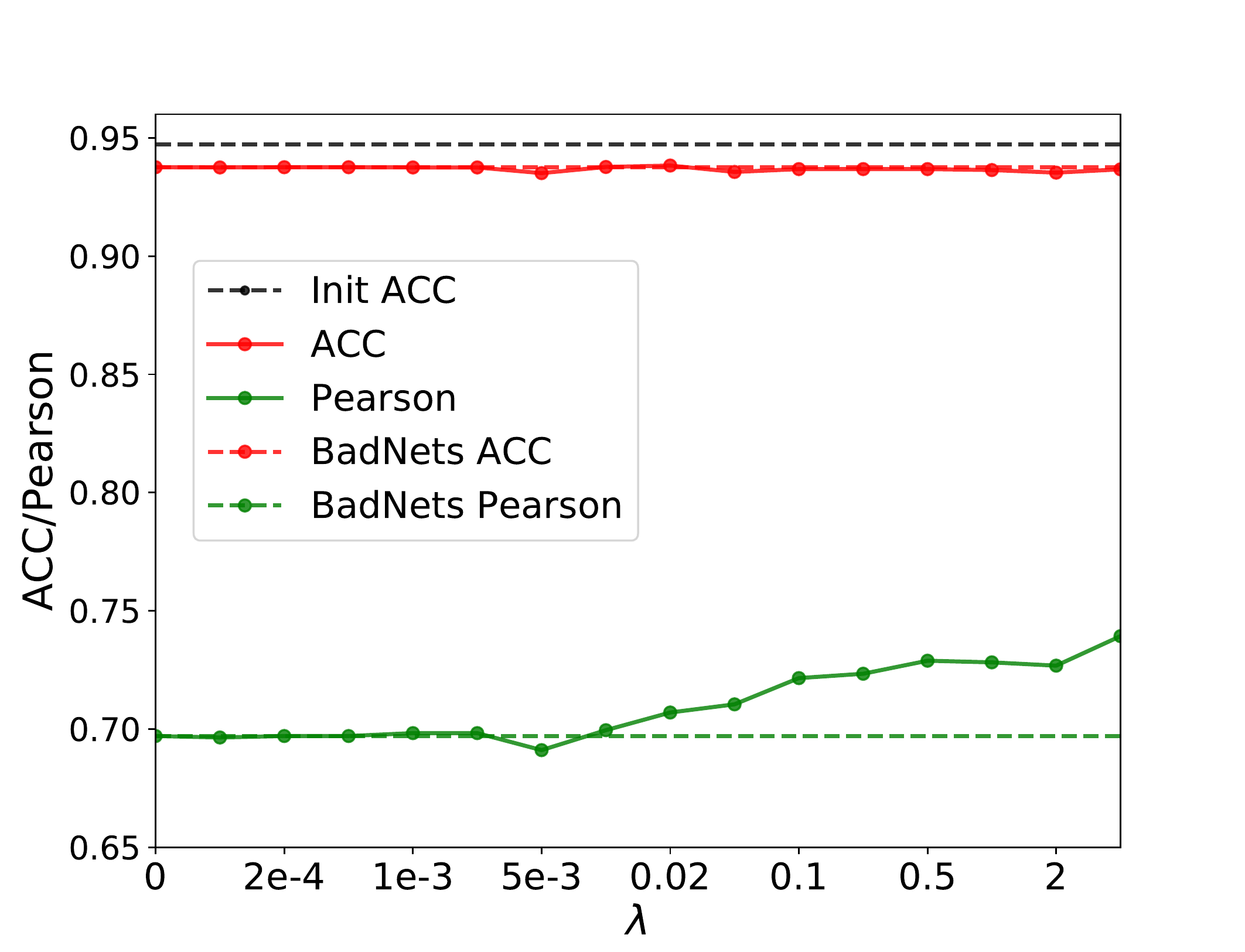}}
\hfil
\subcaptionbox{$L_2$ ASR.}{\includegraphics[height=1.2 in,width=0.24\linewidth]{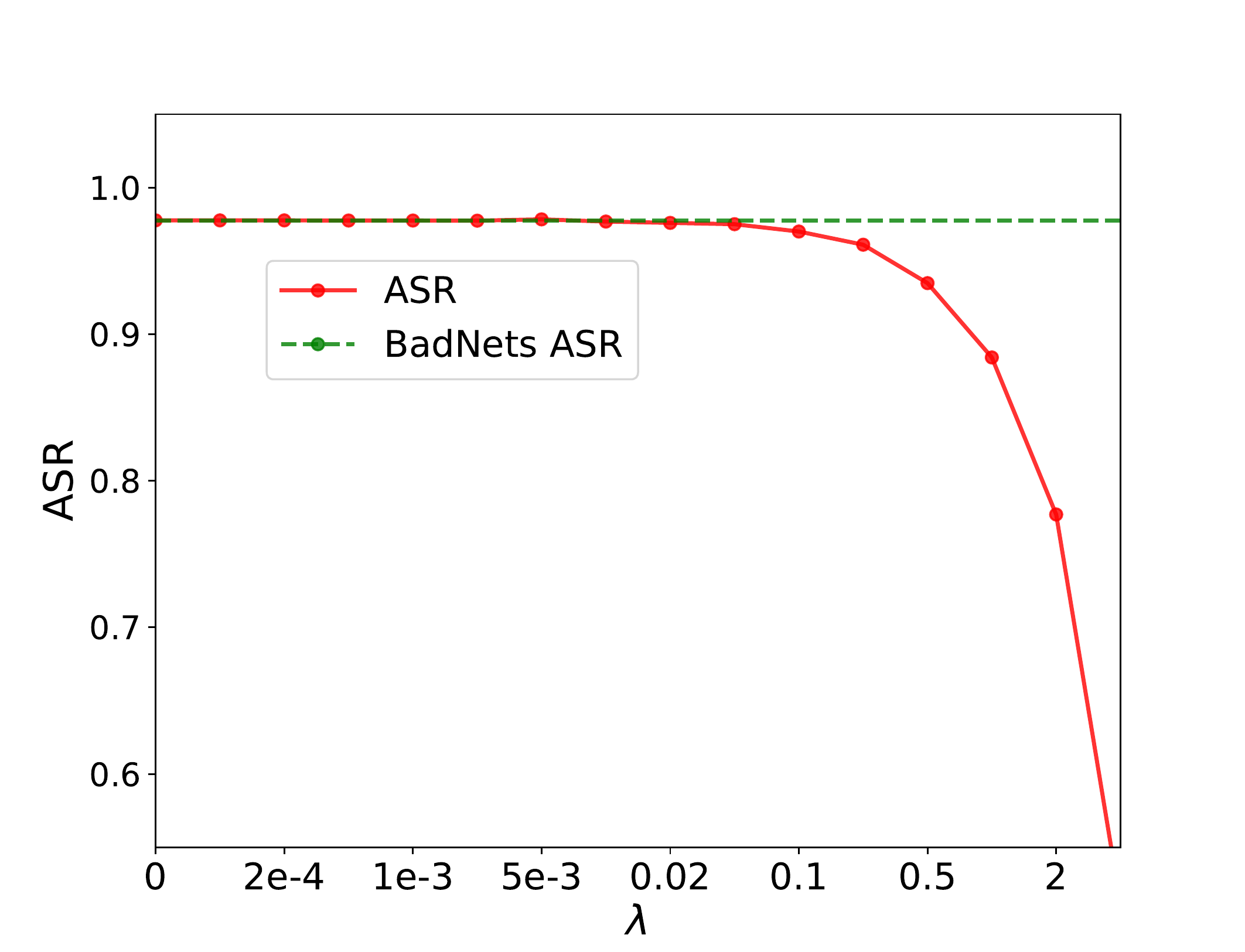}}
\hfil
\subcaptionbox{Anchor Consistency.}{\includegraphics[height=1.2 in,width=0.26\linewidth]{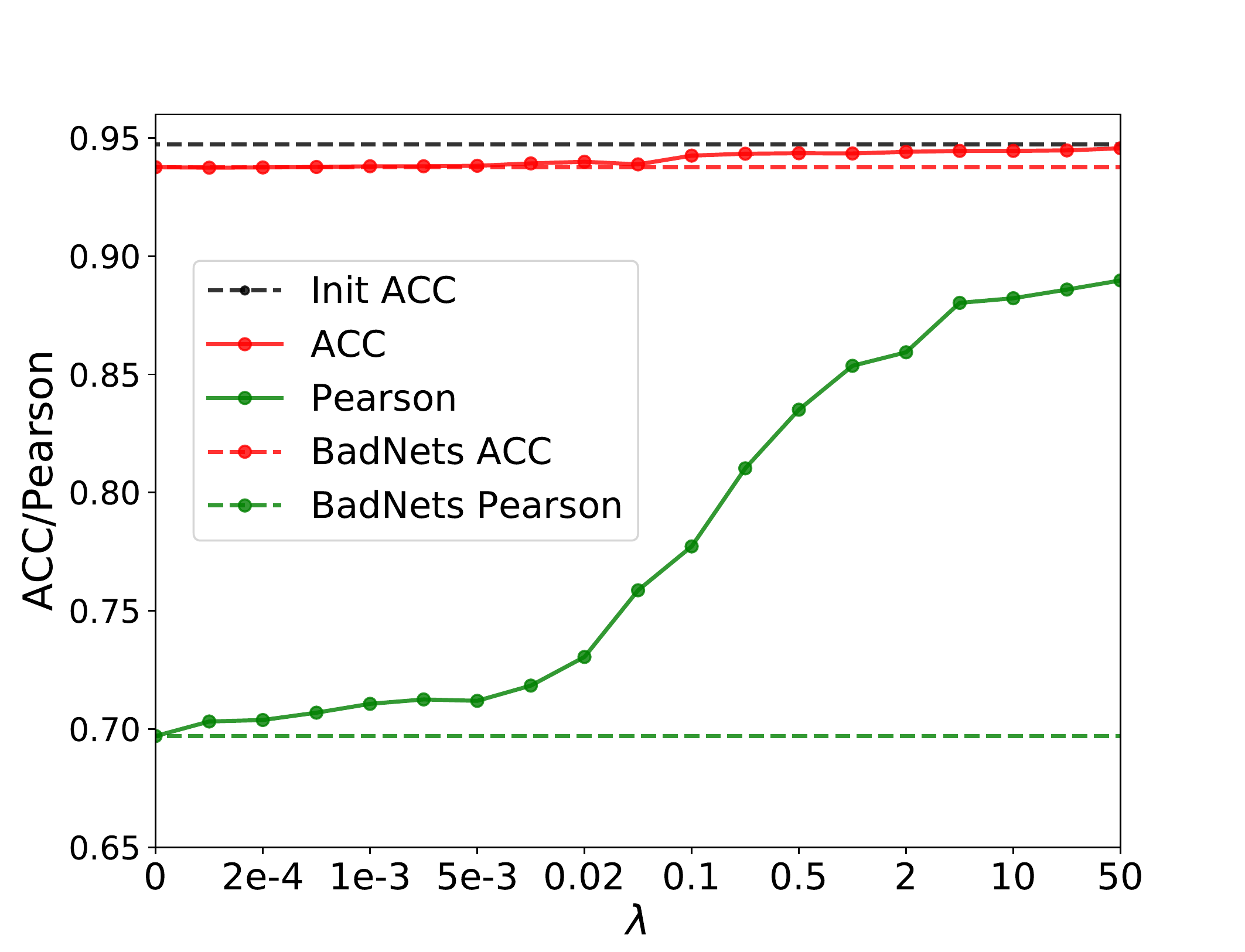}}
\hfil
\subcaptionbox{Anchor ASR.}{\includegraphics[height=1.2 in,width=0.24\linewidth]{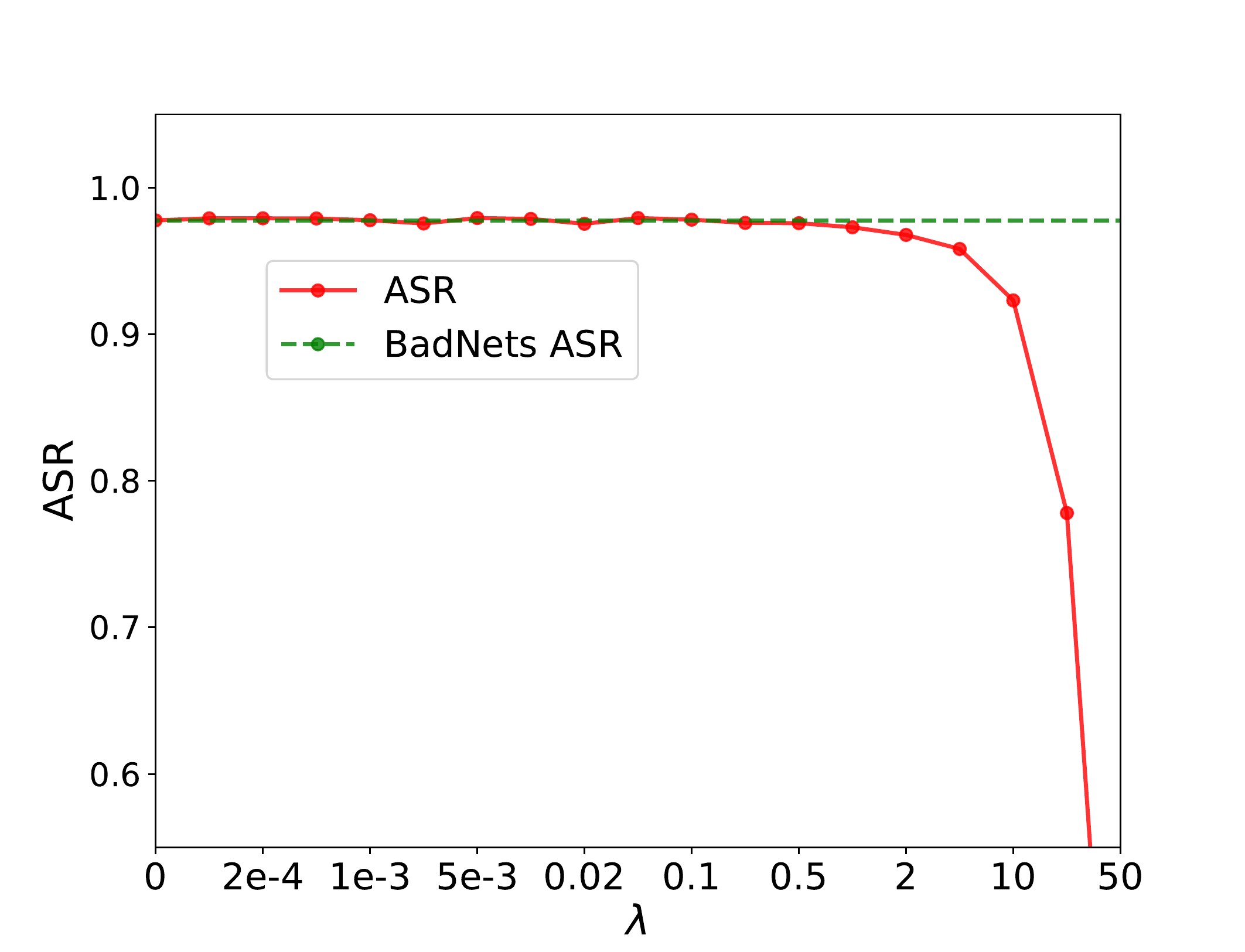}}
\hfil
\subcaptionbox{$L_2$ Consistency (F).}{\includegraphics[height=1.2 in,width=0.24\linewidth]{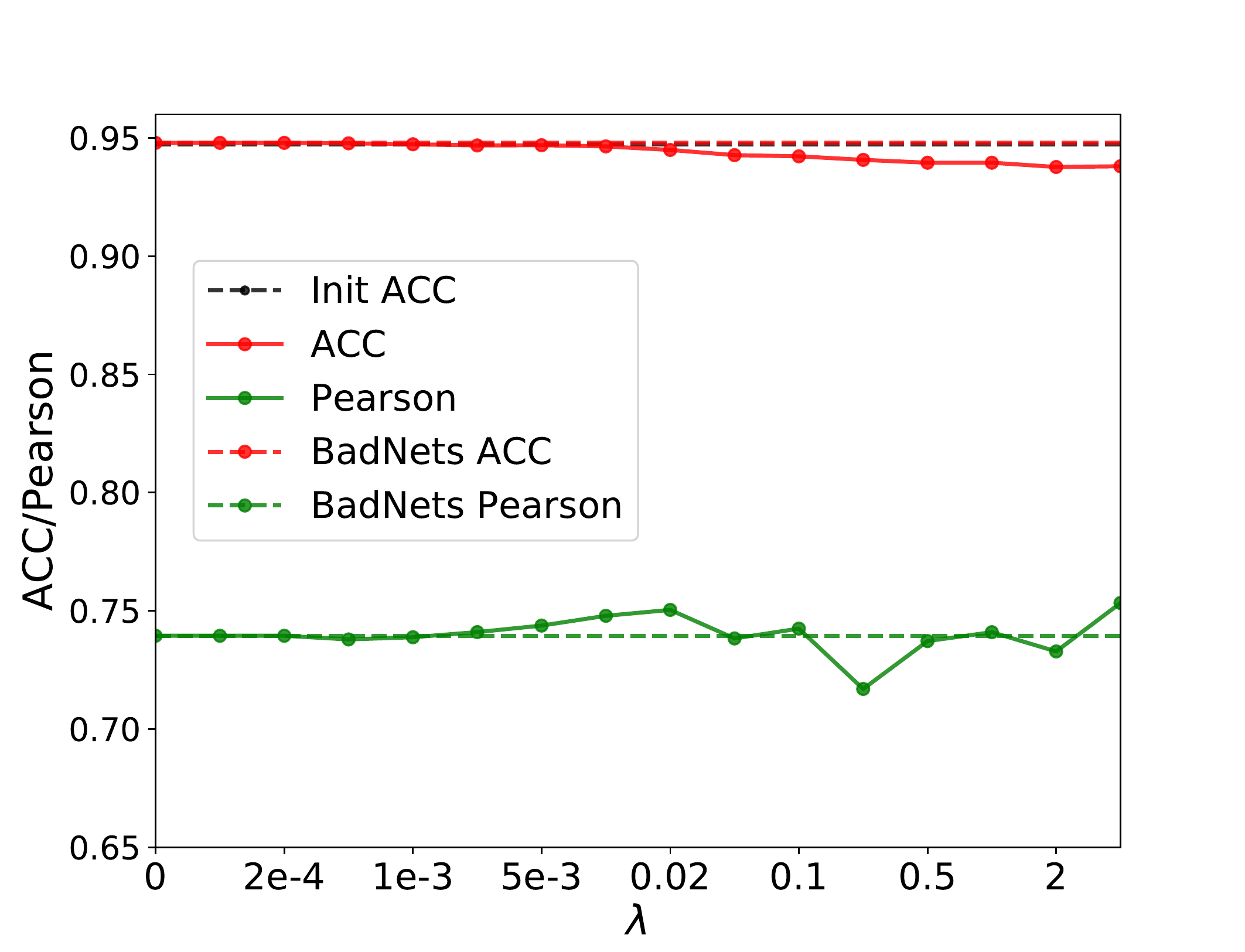}}
\hfil
\subcaptionbox{$L_2$ ASR (F).}{\includegraphics[height=1.2 in,width=0.24\linewidth]{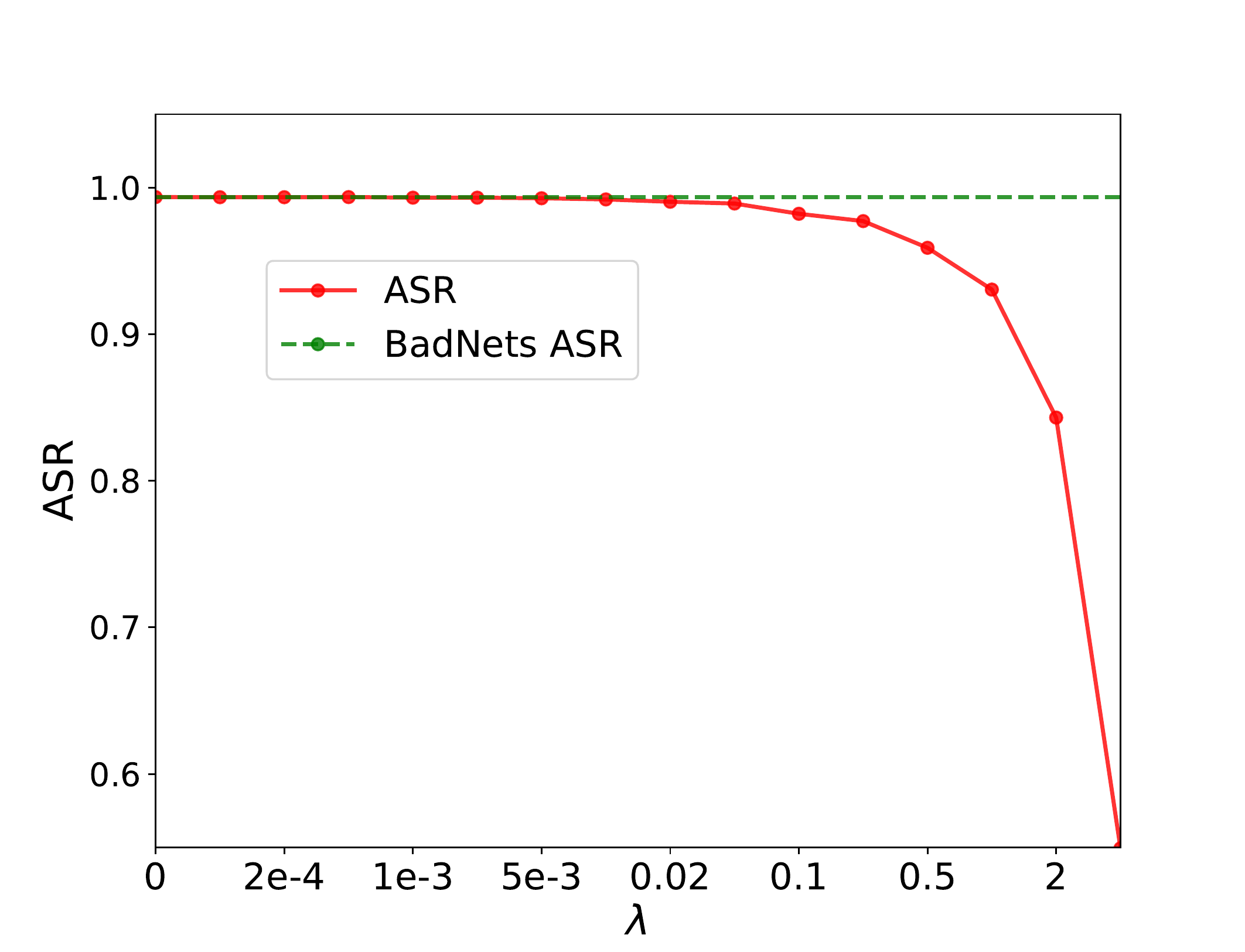}}
\hfil
\subcaptionbox{Anchor Consistency (F).}{\includegraphics[height=1.2 in,width=0.26\linewidth]{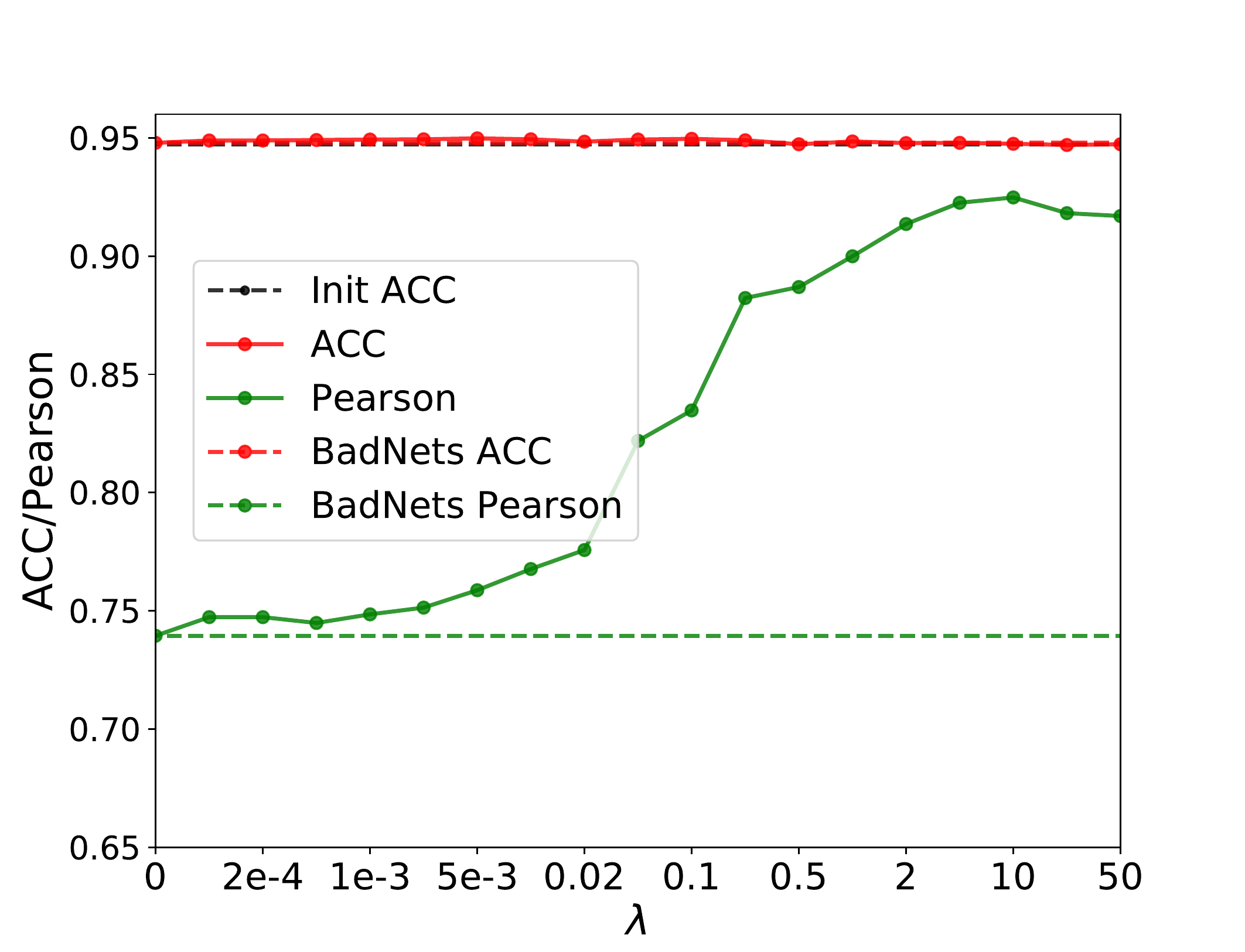}}
\hfil
\subcaptionbox{Anchor ASR (F).}{\includegraphics[height=1.2 in,width=0.24\linewidth]{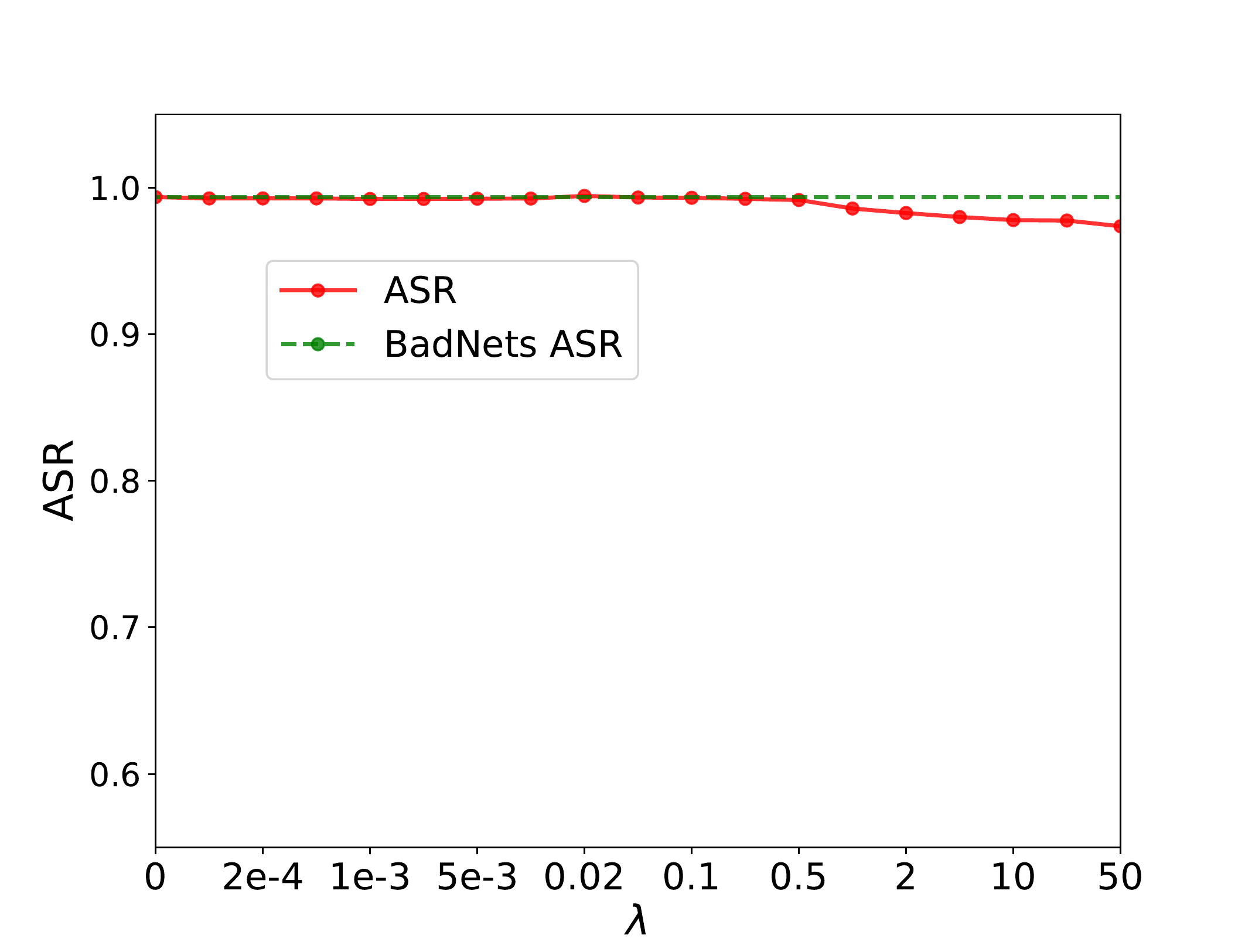}}
\hfil
\caption{
Performance of $L_2$ penalty and anchoring methods with various $\lambda$ on CIFAR-10. Here $L_2$ means the $L_2$ penalty, Anchor means the anchoring method, and F means the full dataset is available, otherwise, only 640 images are available.
}
\label{fig1:othermethods}
\end{figure}

\begin{figure}[H]
\centering
\subcaptionbox{Consistency/$\lambda$ (Full).}{\includegraphics[height=1.2 in,width=0.24\linewidth]{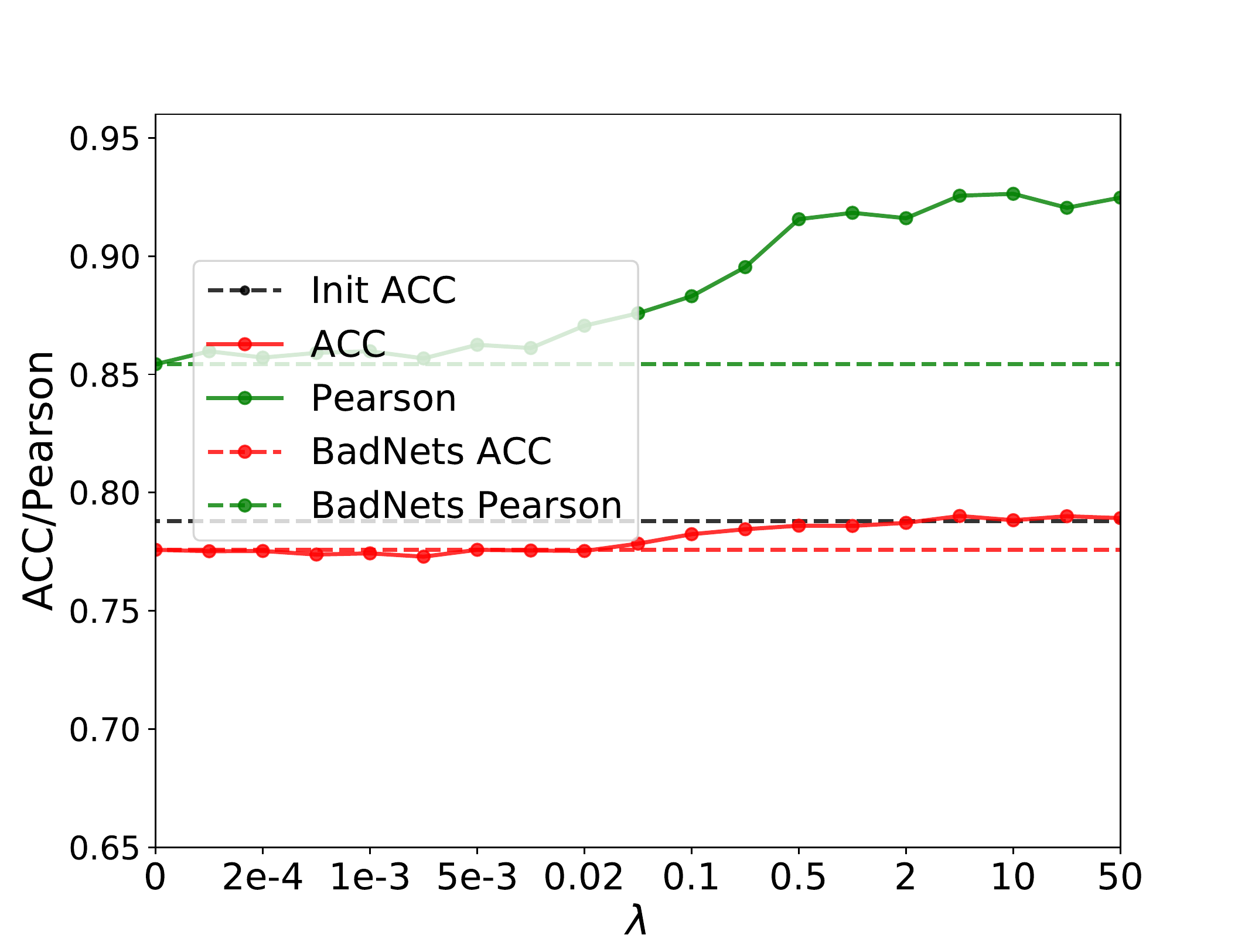}}
\hfil
\subcaptionbox{ASR/$\lambda$ (Full).}{\includegraphics[height=1.2 in,width=0.24\linewidth]{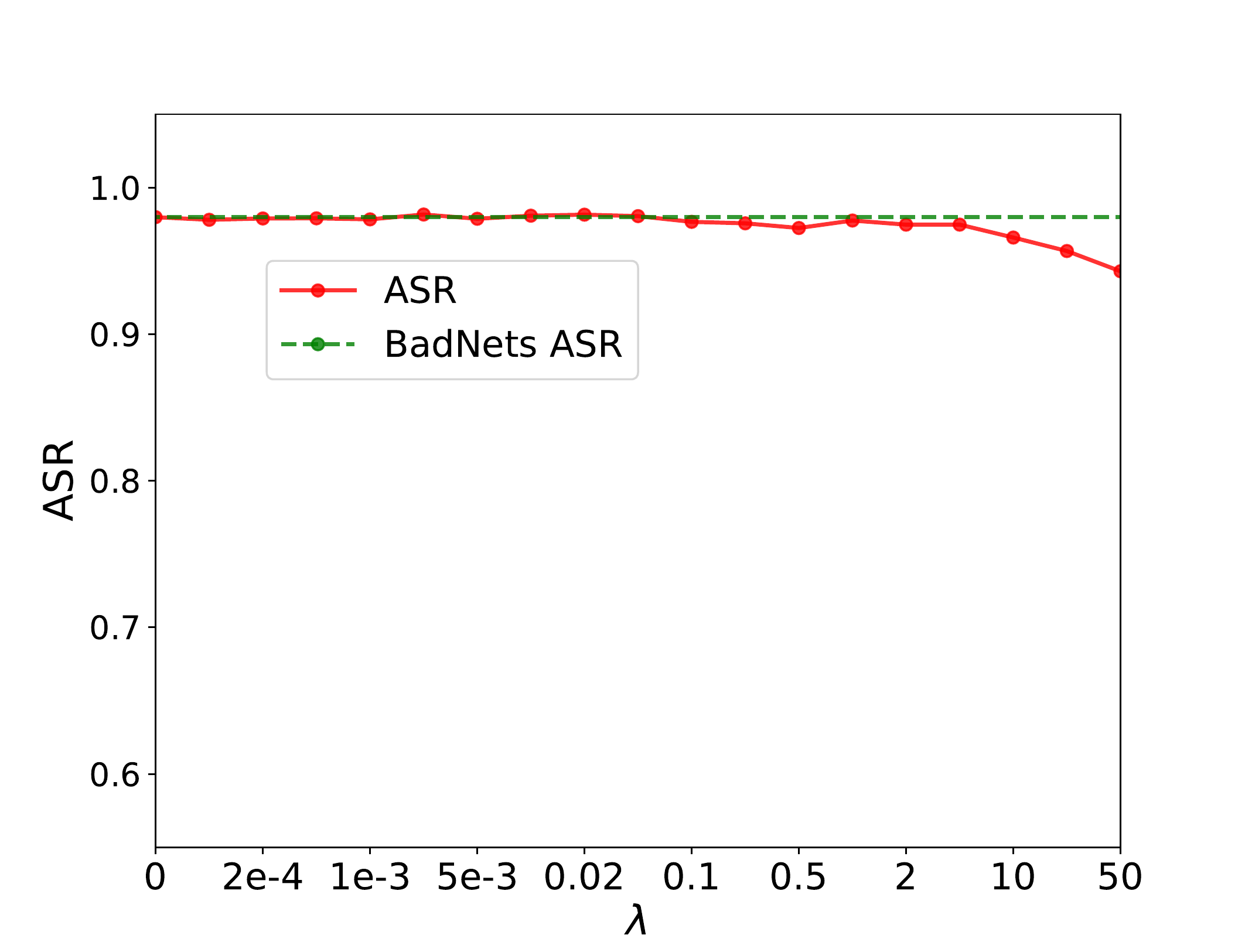}}
\subcaptionbox{ Consistency/$\lambda$.}{\includegraphics[height=1.2 in,width=0.24\linewidth]{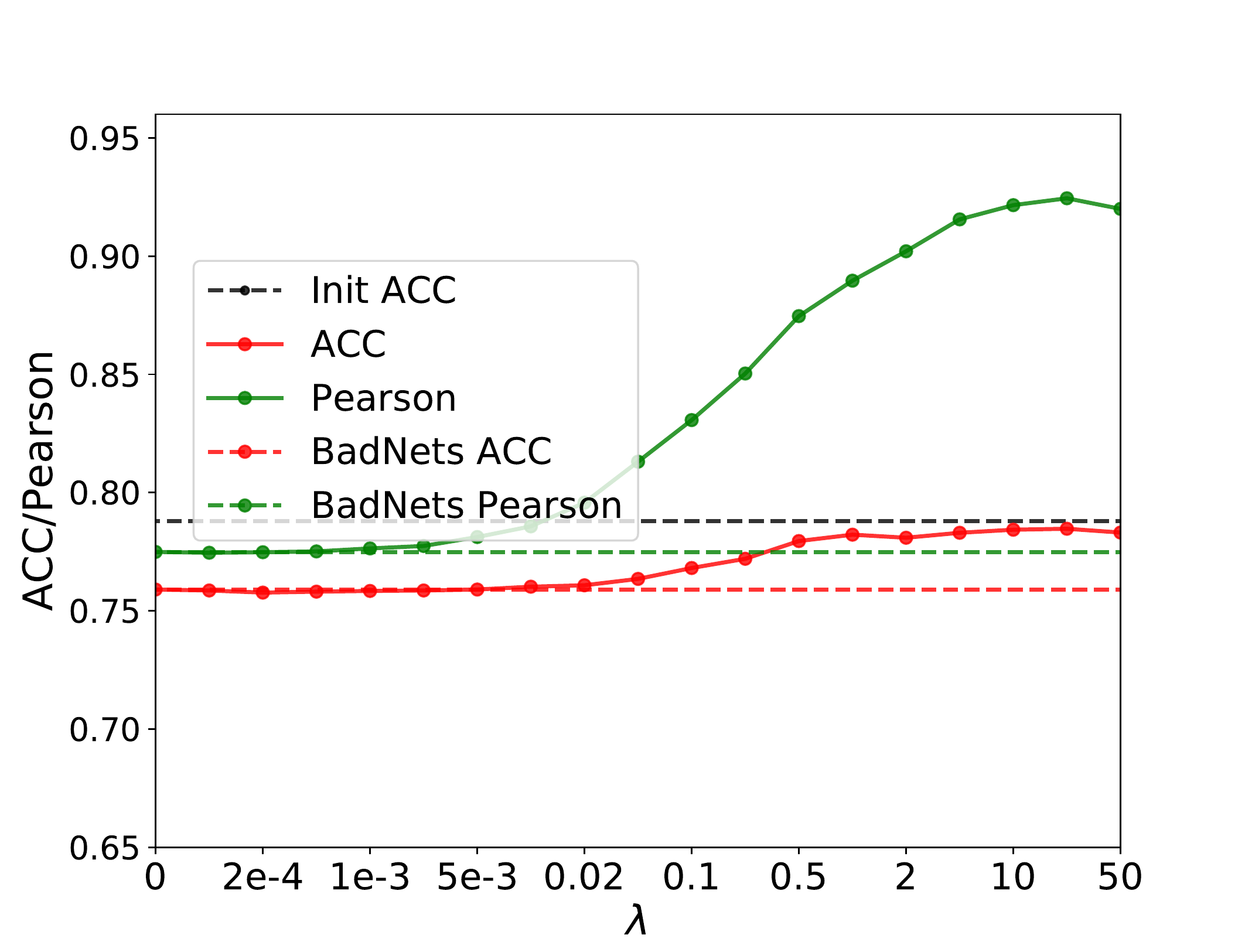}}
\hfil
\subcaptionbox{ ASR/$\lambda$.}{\includegraphics[height=1.2 in,width=0.24\linewidth]{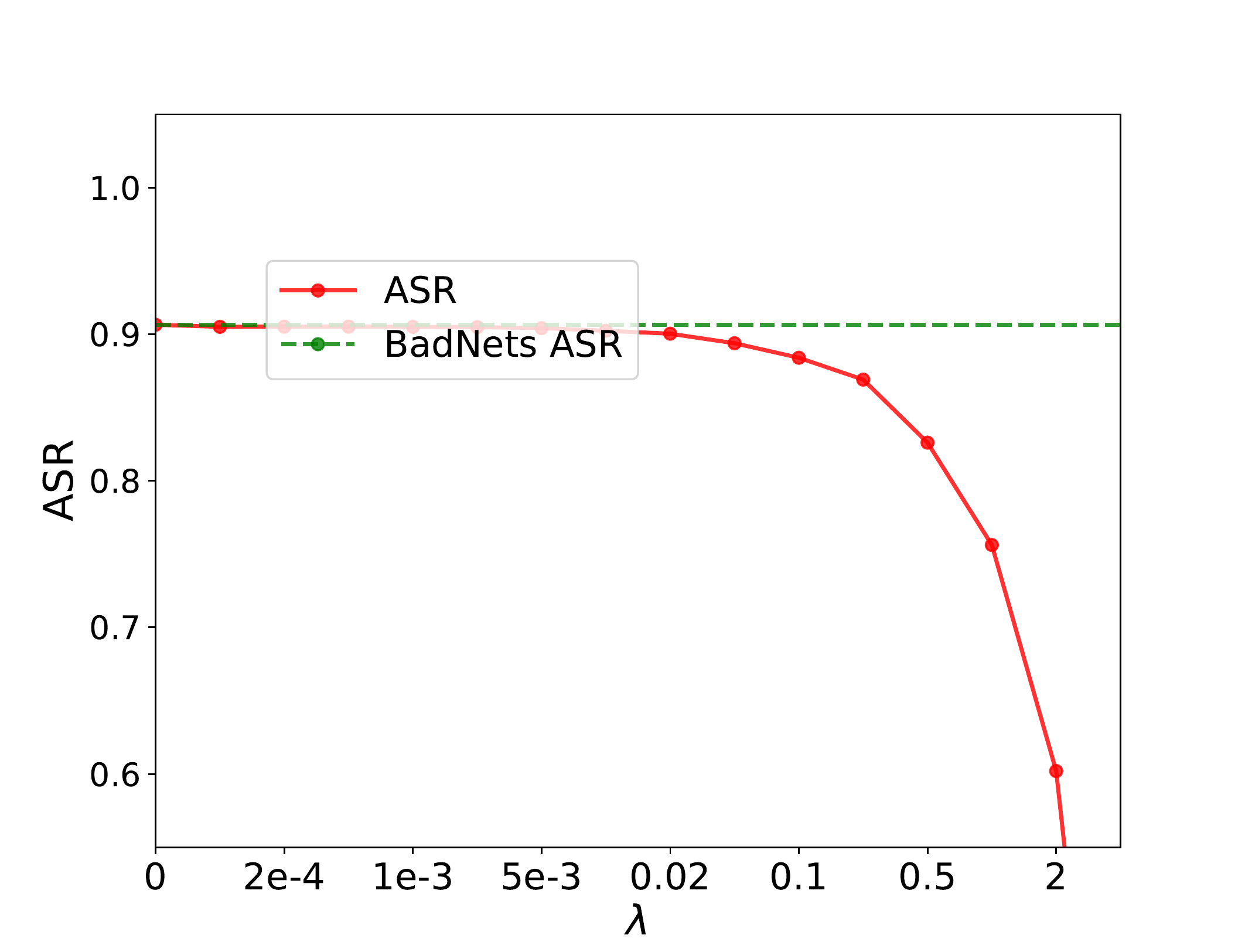}}
\subcaptionbox{ Consistency/$\lambda$ (Full).}{\includegraphics[height=1.2 in,width=0.24\linewidth]{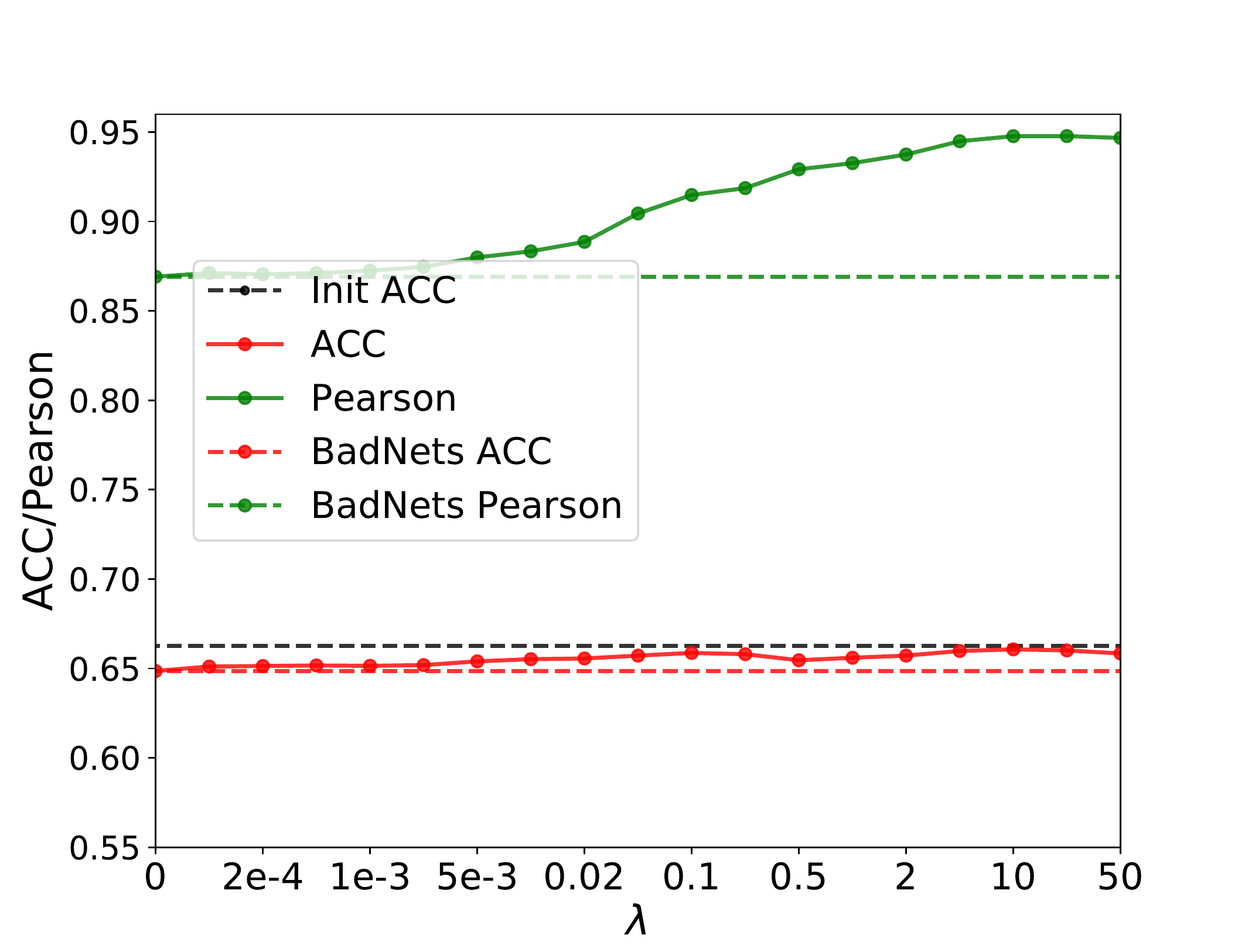}}
\hfil
\subcaptionbox{ASR/$\lambda$ (Full).}{\includegraphics[height=1.2 in,width=0.24\linewidth]{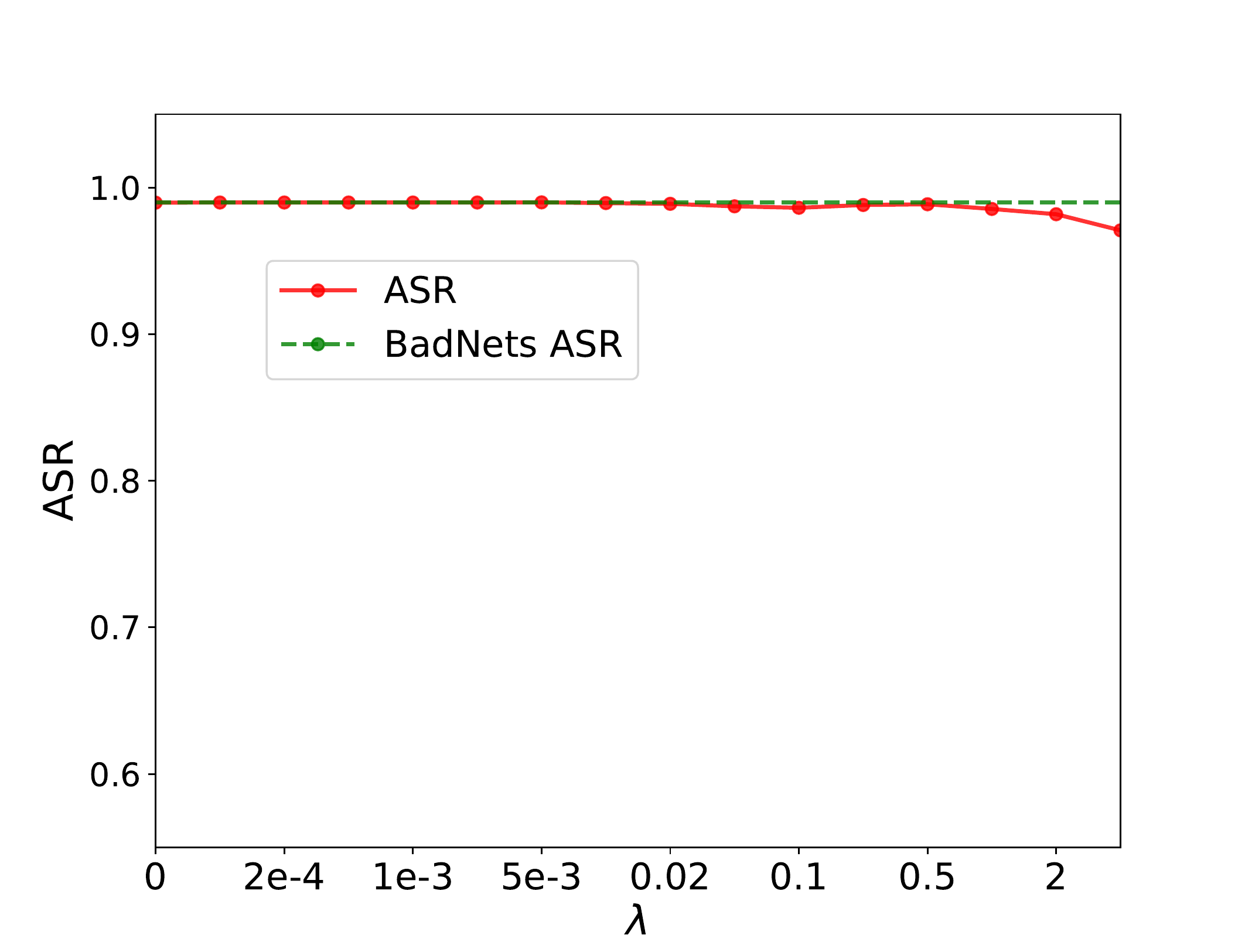}}
\subcaptionbox{ Consistency/$\lambda$.}{\includegraphics[height=1.2 in,width=0.24\linewidth]{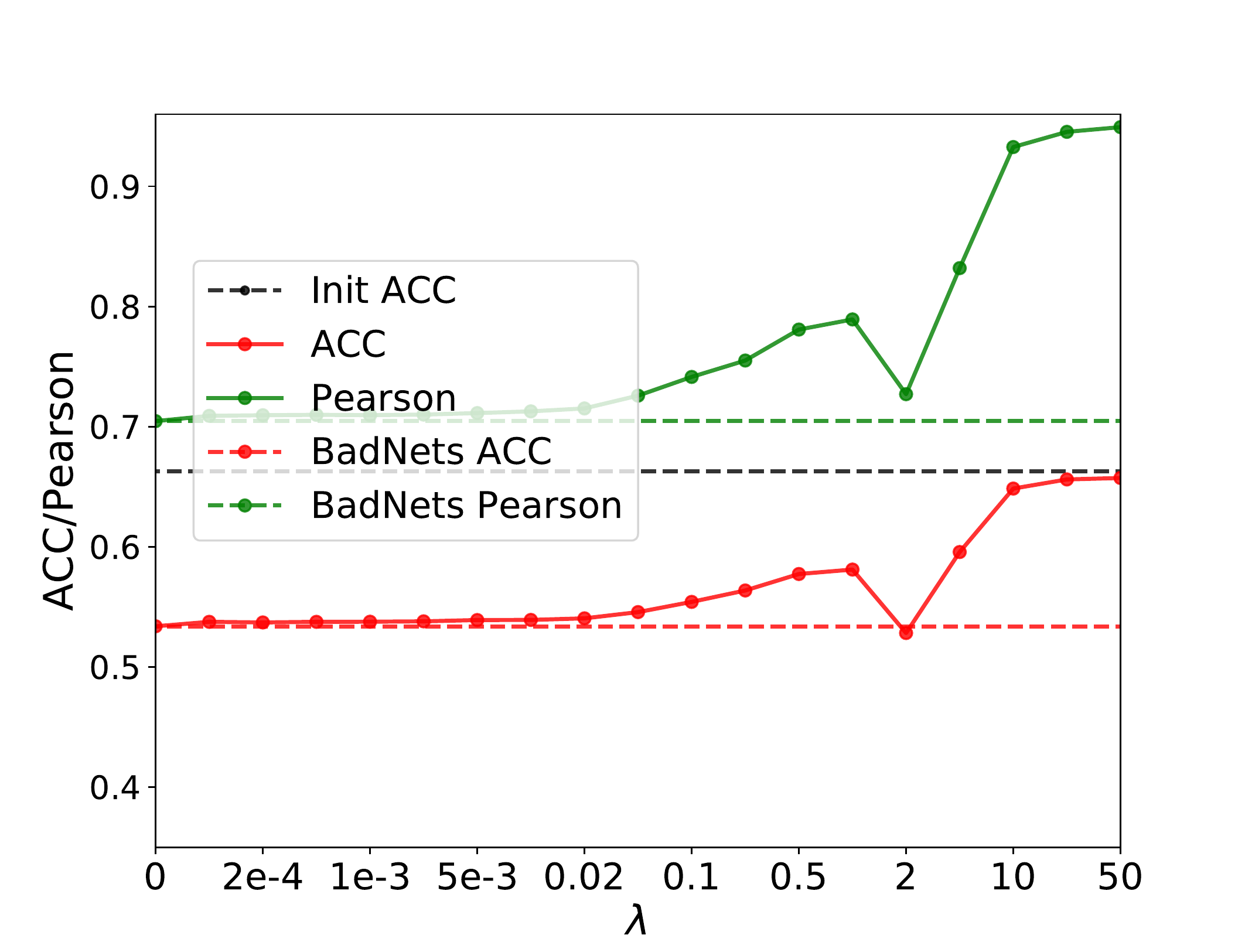}}
\hfil
\subcaptionbox{ ASR/$\lambda$.}{\includegraphics[height=1.2 in,width=0.24\linewidth]{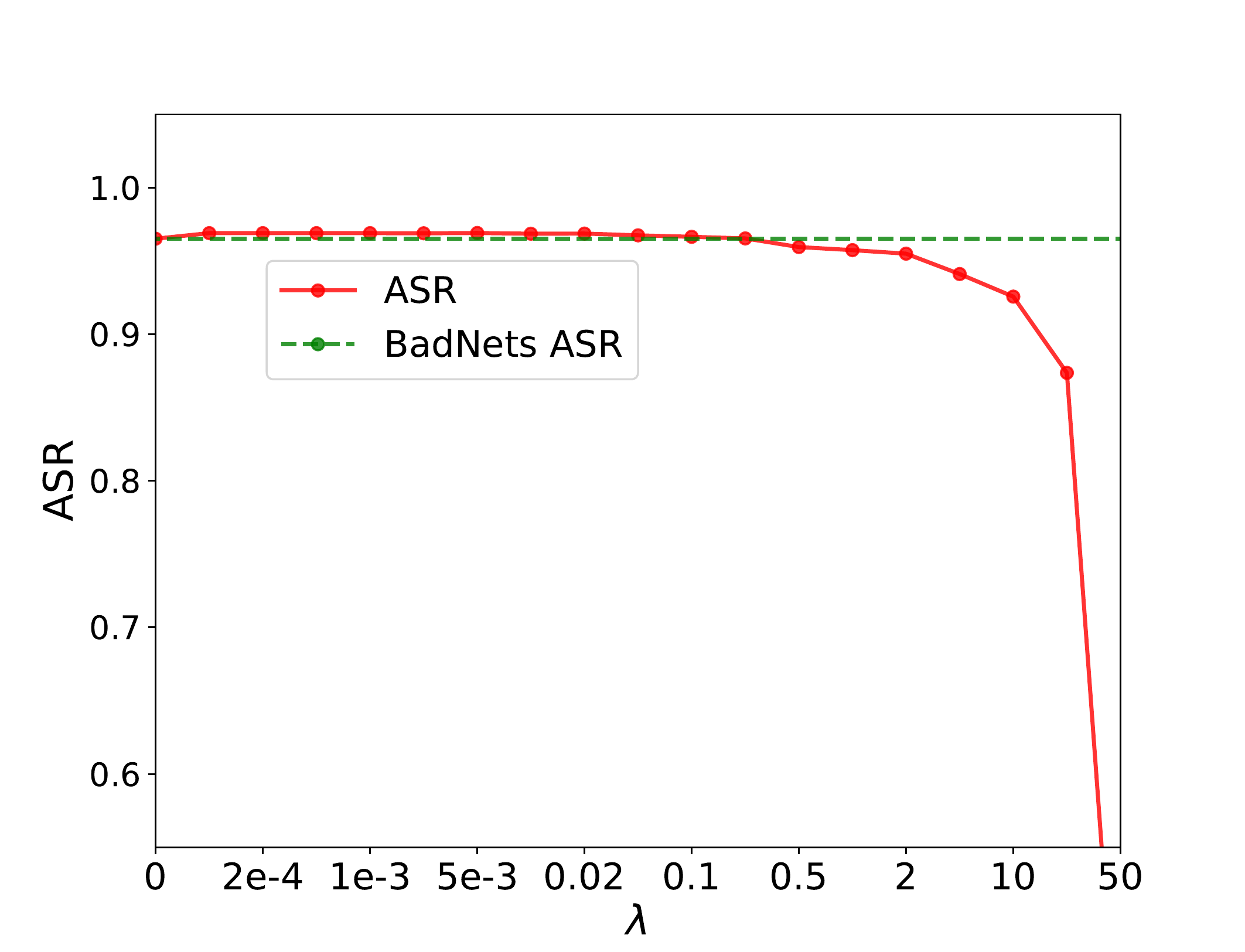}}
\caption{
Performance of the anchoring method with various $\lambda$ on CIFAR-100 (a-d) and Tiny-ImageNet (e-h). Here Full means the full dataset is available, otherwise 640 images are available.
}
\label{fig1:otherdatasets}
\end{figure}

\subsection{Visualization of the instance-wise consistency of different anchoring backdoor methods}  

We also visualize the instance-wise logit consistency of our proposed anchoring method and the baseline BadNets method in Figure~\ref{fig1:logits_cifar10} (on CIFAR-10),  Figure~\ref{fig1:logits_cifar100} (on CIFAR-100), and Figure~\ref{fig1:logits_tiny} (on Tiny-ImageNet). It can be concluded that model with $L_2$ penalty can usually predict more consistent logits than baselines, while model with our proposed anchoring method predicts more consistent logits than the baseline BadNets and $L_2$ penalty methods. 

\begin{figure}[H]
\centering
\subcaptionbox{BadNets (Full).}{\includegraphics[height=1.8 in,width=0.32\linewidth]{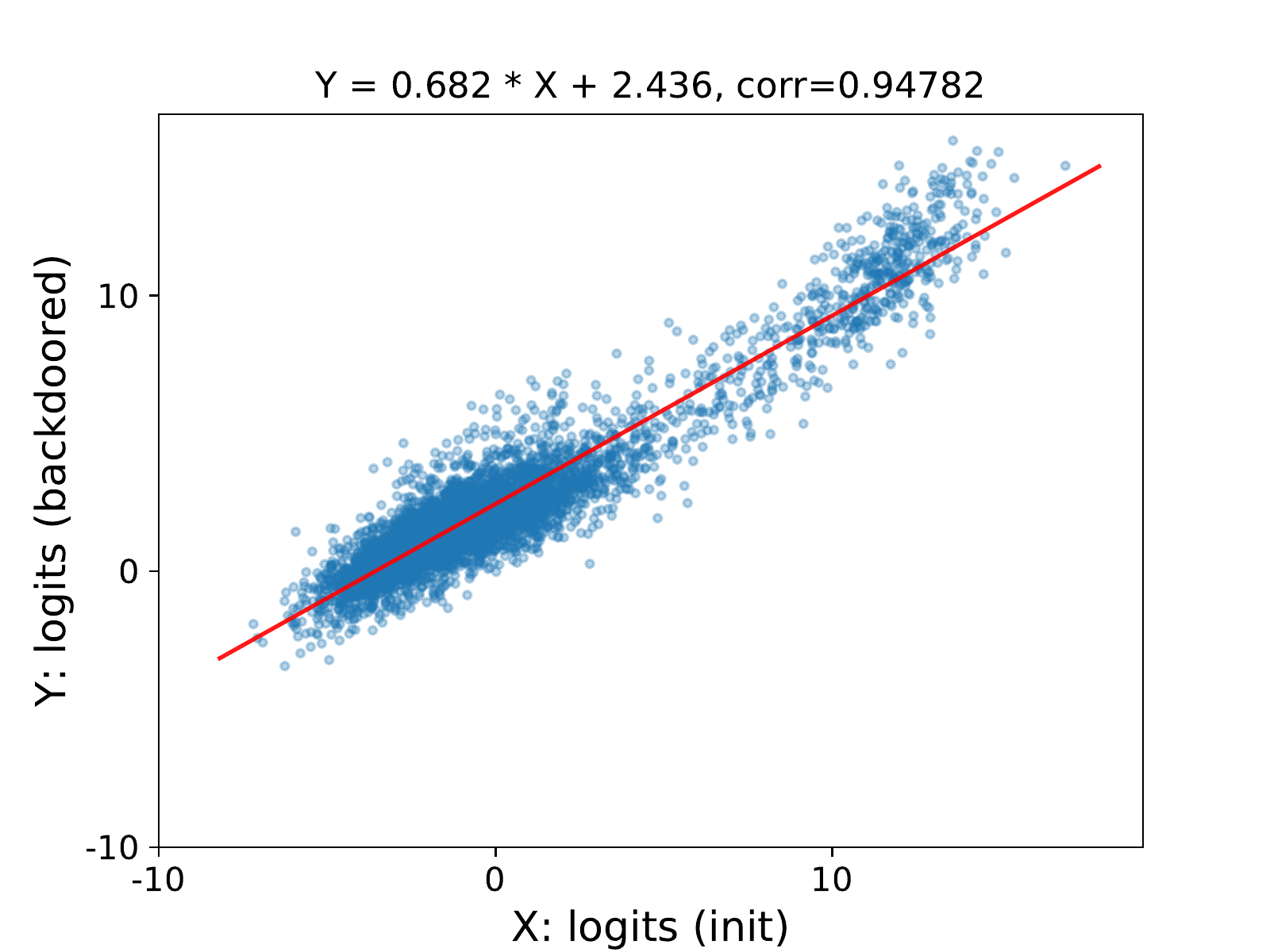}}
\subcaptionbox{$L_2$ penalty (Full, $\lambda$=0.1).}{\includegraphics[height=1.8 in,width=0.32\linewidth]{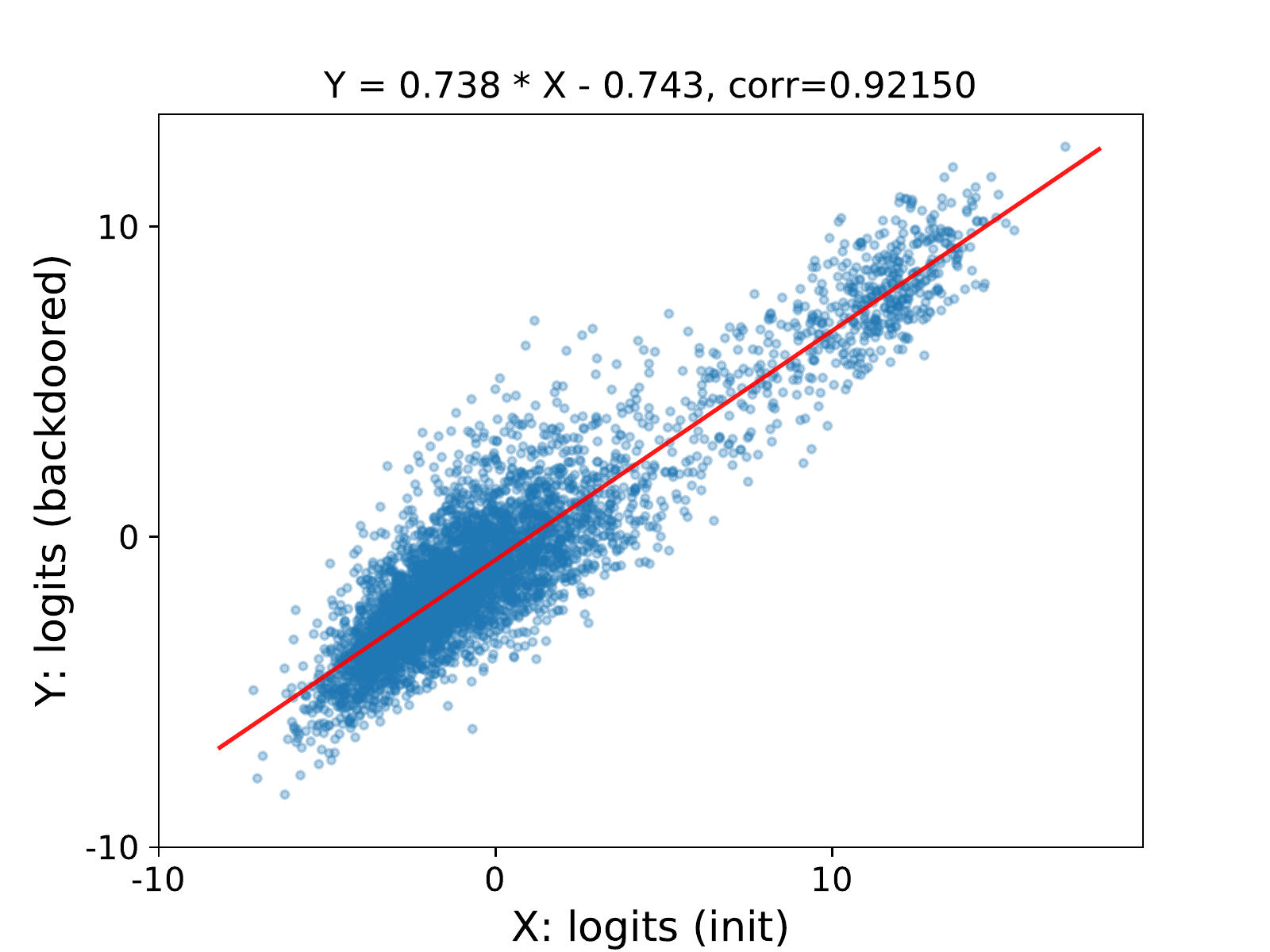}}
\hfil
\subcaptionbox{Anchoring (Full, $\lambda$=5).}{\includegraphics[height=1.8 in,width=0.32\linewidth]{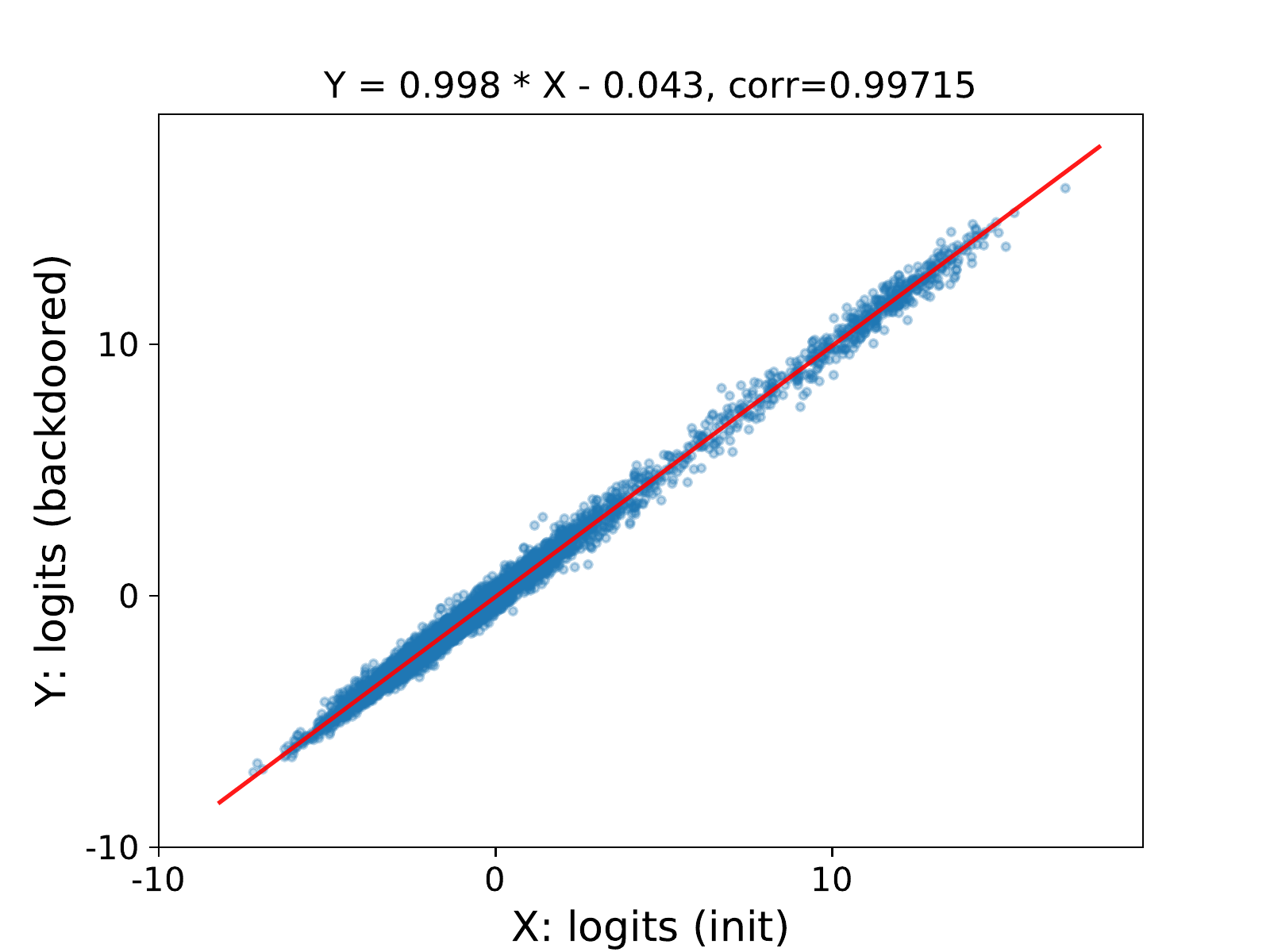}}
\subcaptionbox{BadNets.}{\includegraphics[height=1.8 in,width=0.32\linewidth]{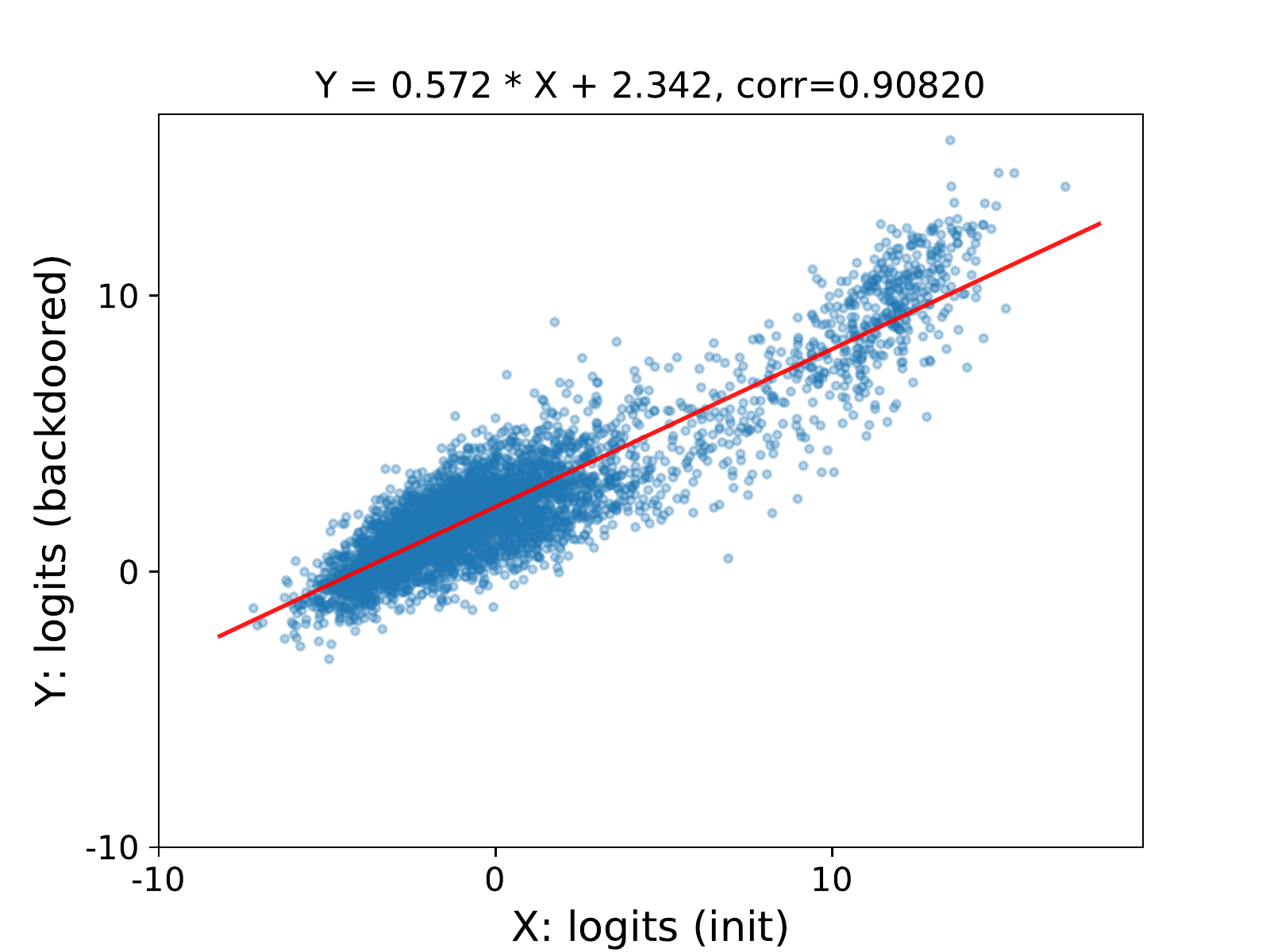}}
\subcaptionbox{$L_2$ penalty ($\lambda$=0.5).}{\includegraphics[height=1.8 in,width=0.32\linewidth]{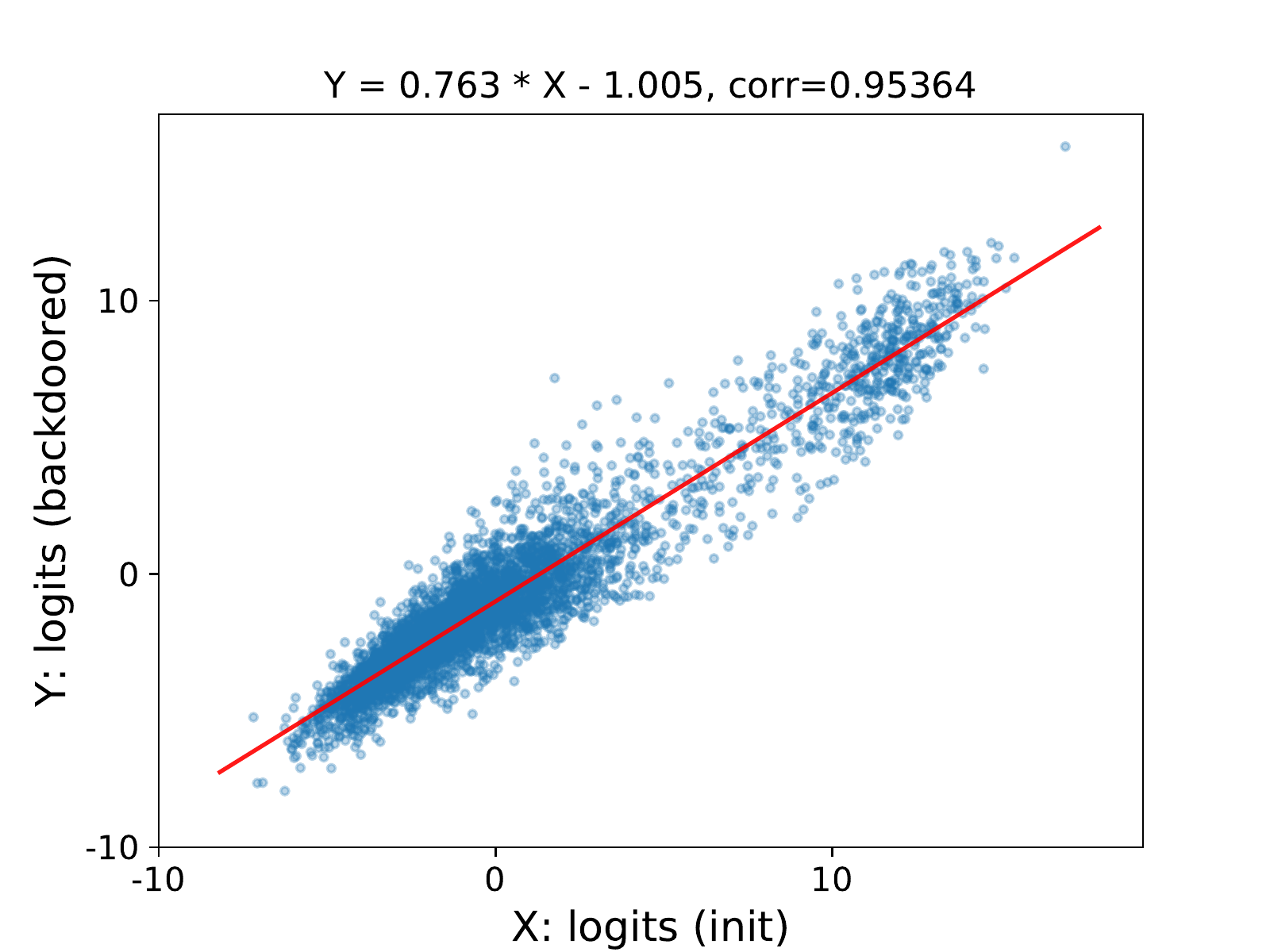}}
\hfil
\subcaptionbox{Anchoring ($\lambda$=2).}{\includegraphics[height=1.8 in,width=0.32\linewidth]{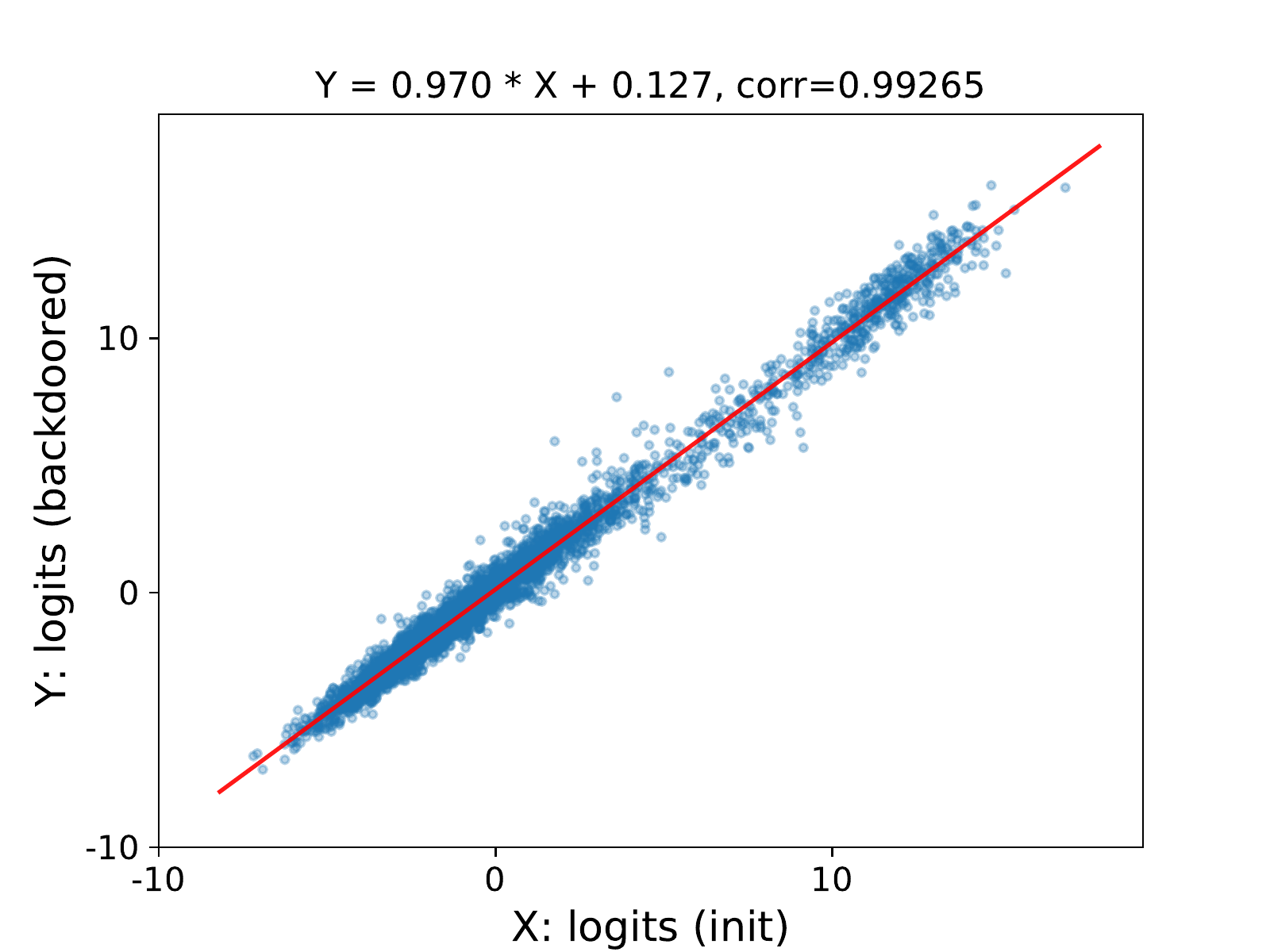}}
\caption{
Visualization of the instance-wise consistency of BadNets, $L_2$ penalty, and our proposed anchoring backdoor methods on CIFAR-10. Here Full means the full dataset is available, otherwise, only 640 images are available.
}
\label{fig1:logits_cifar10}
\end{figure}

\begin{figure}[H]
\centering
\subcaptionbox{BadNets.}{\includegraphics[height=1.8 in,width=0.32\linewidth]{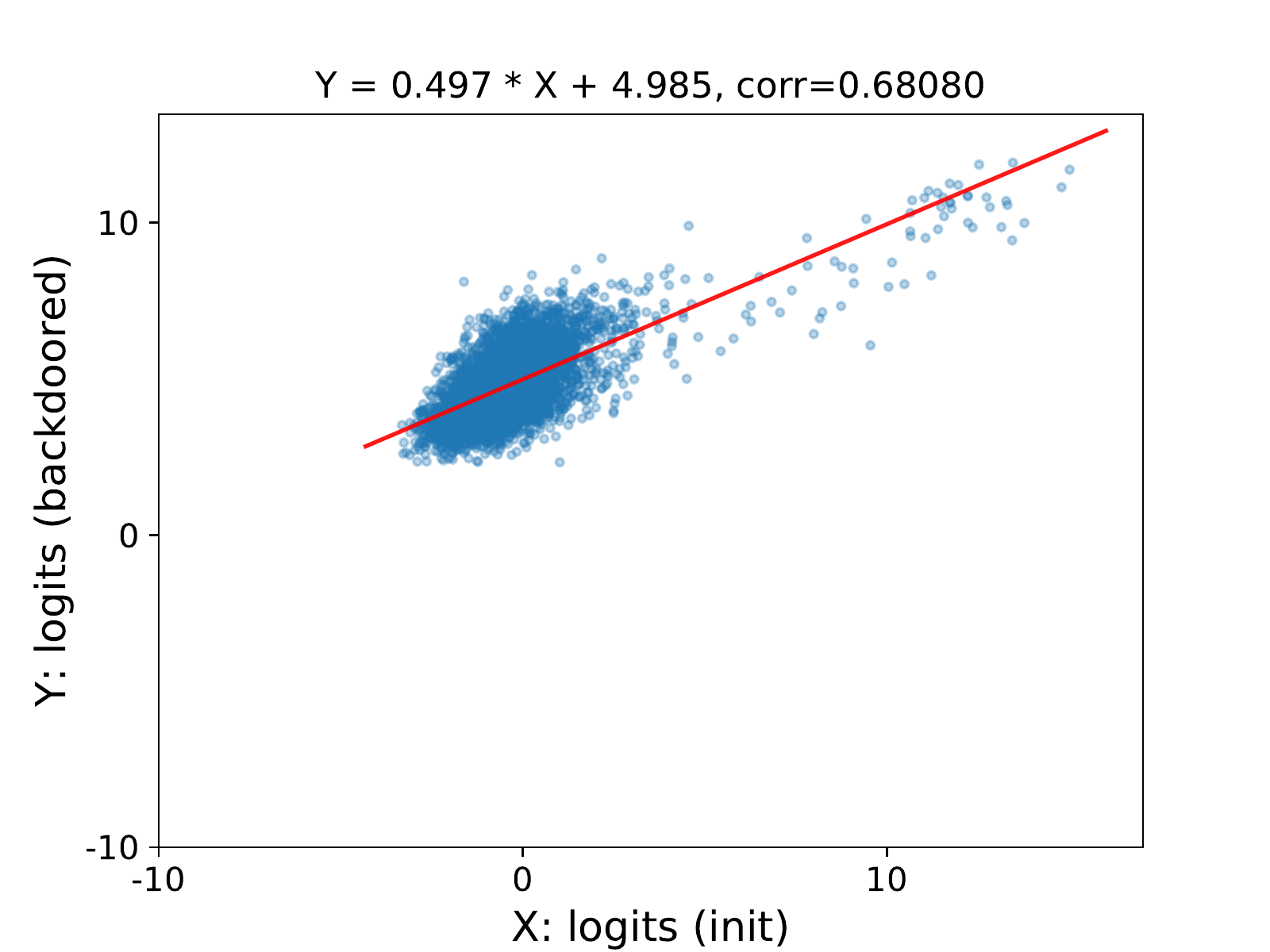}}
\subcaptionbox{$L_2$ penalty ($\lambda$=0.1).}{\includegraphics[height=1.8 in,width=0.32\linewidth]{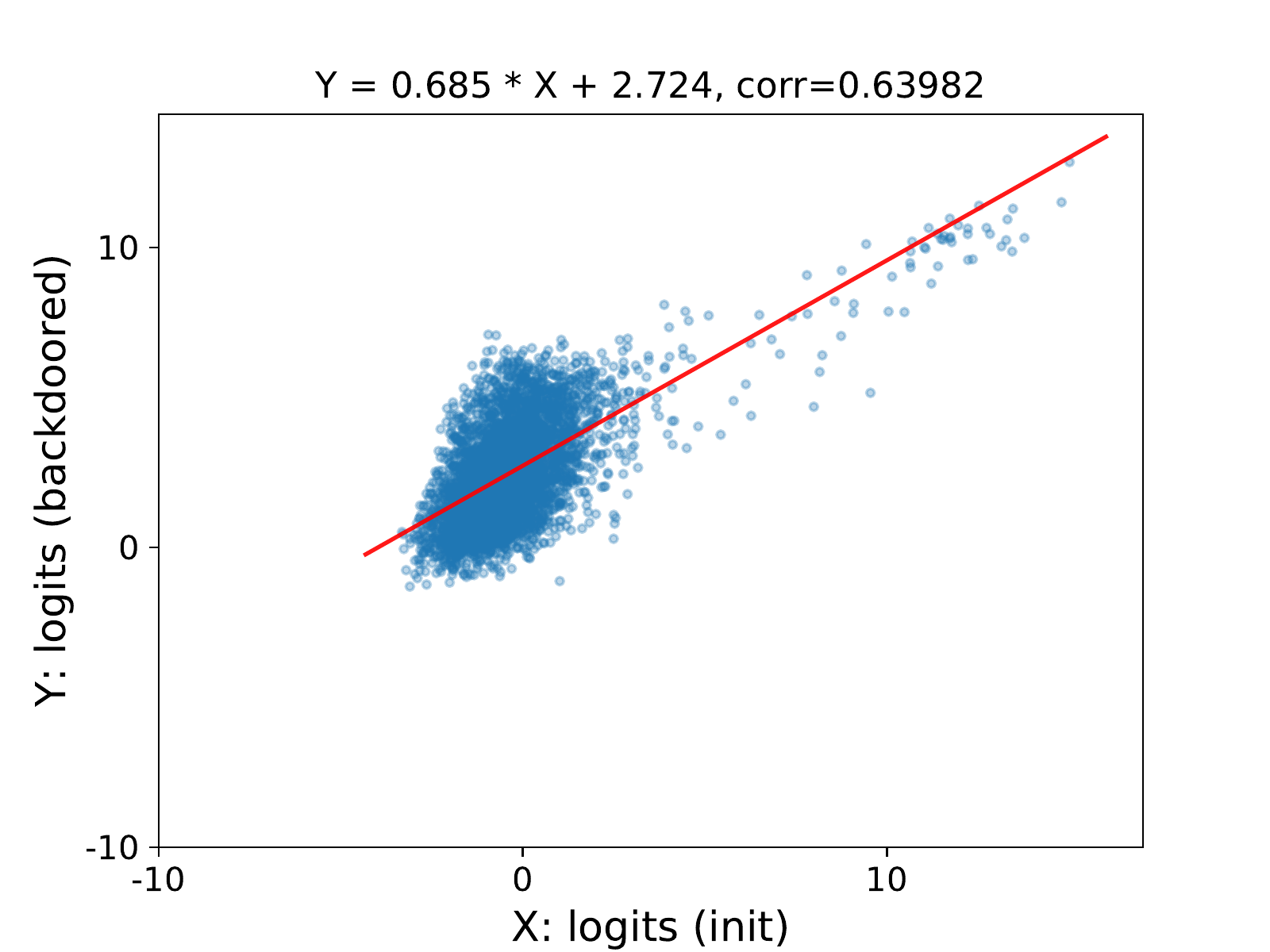}}
\hfil
\subcaptionbox{Anchoring ($\lambda$=2).}{\includegraphics[height=1.8 in,width=0.32\linewidth]{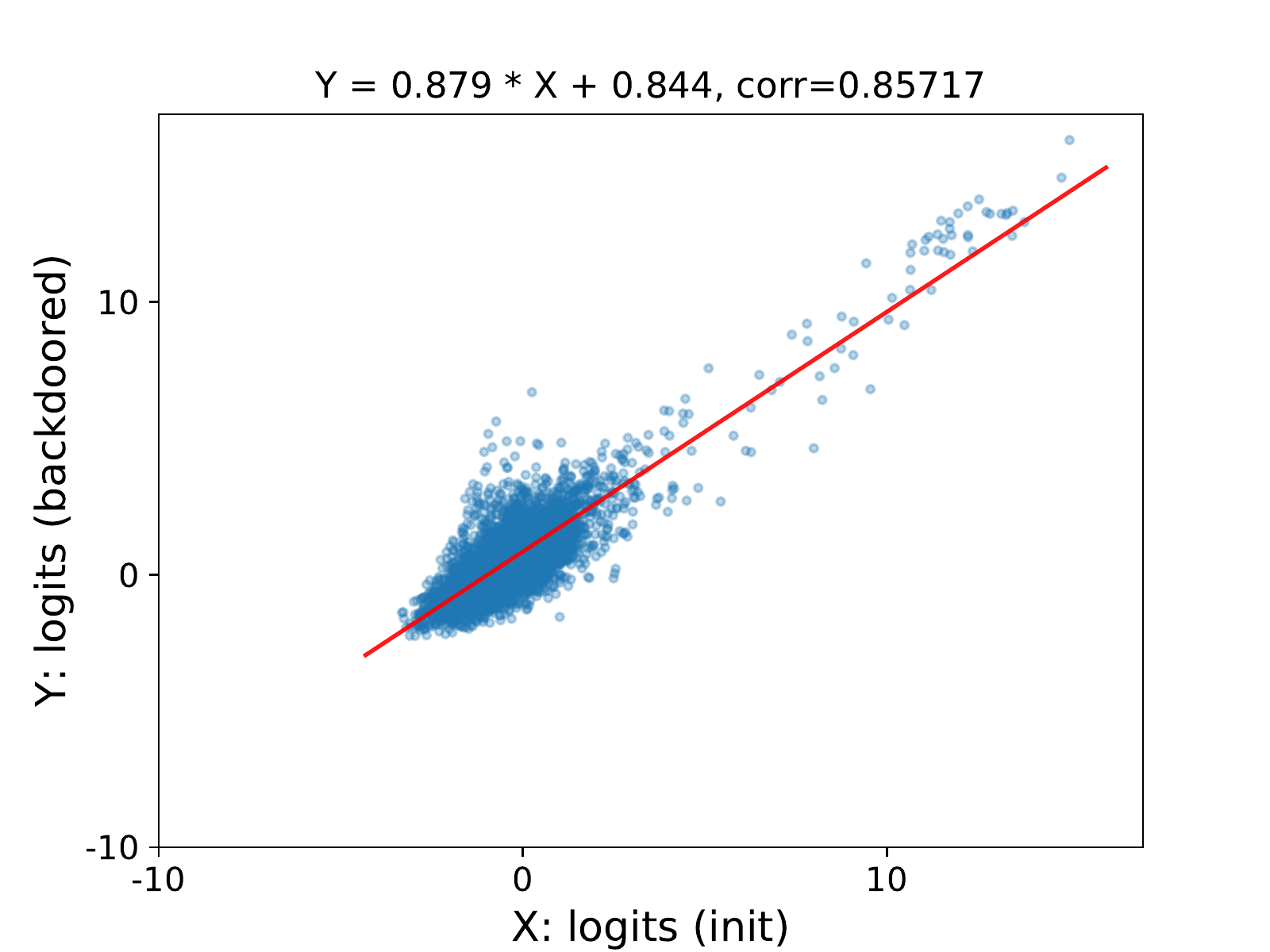}}
\caption{
Visualization of the instance-wise consistency of BadNets, $L_2$ penalty, and our proposed anchoring backdoor methods on CIFAR-100. Here only 640 images are available.
}
\label{fig1:logits_cifar100}
\end{figure}

\begin{figure}[H]
\vskip -0.1 in
\centering
\subcaptionbox{BadNets.}{\includegraphics[height=1.8 in,width=0.32\linewidth]{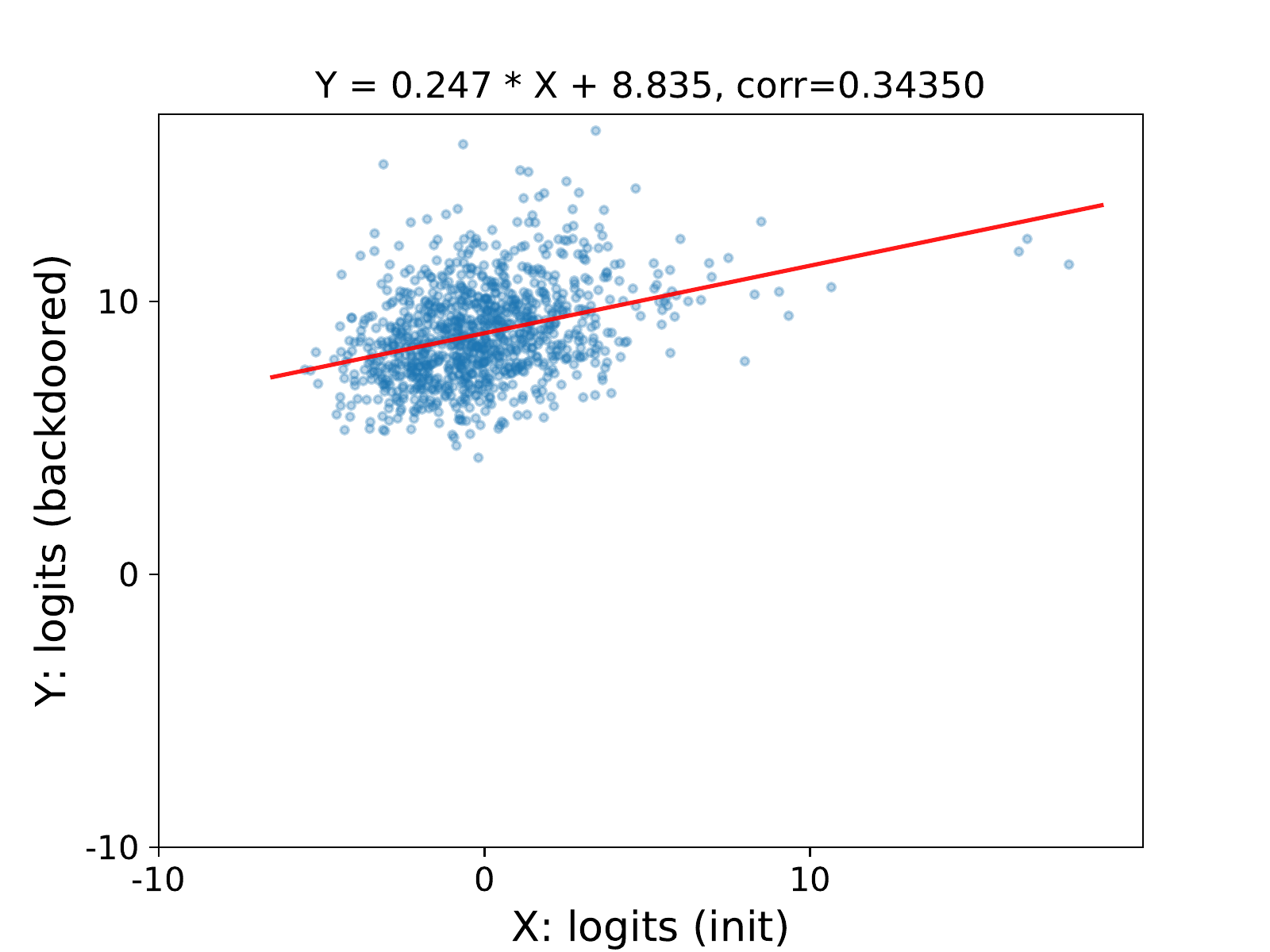}}
\subcaptionbox{$L_2$ penalty ($\lambda$=0.1).}{\includegraphics[height=1.8 in,width=0.32\linewidth]{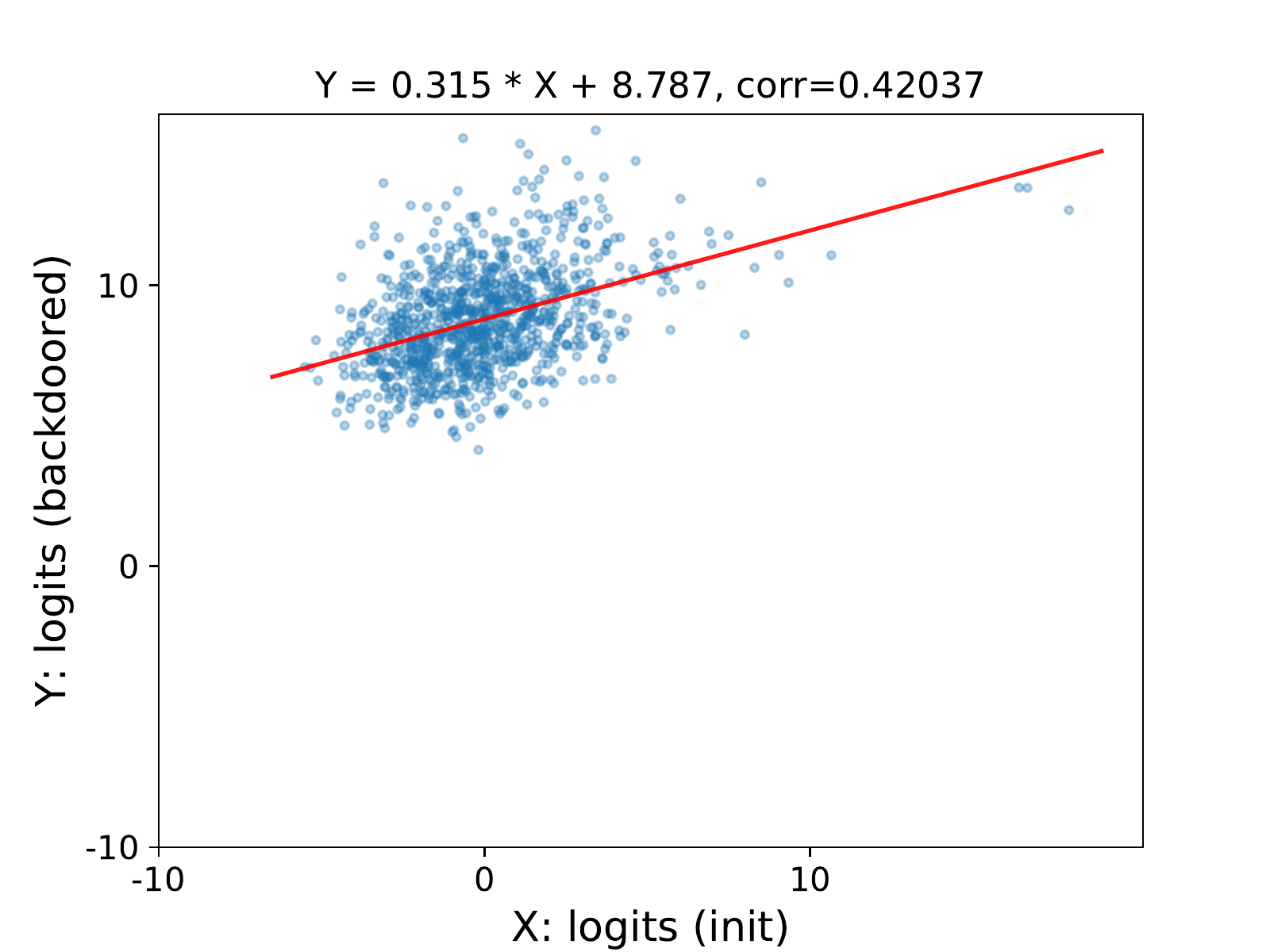}}
\hfil
\subcaptionbox{Anchoring ($\lambda$=0.1).}{\includegraphics[height=1.8 in,width=0.32\linewidth]{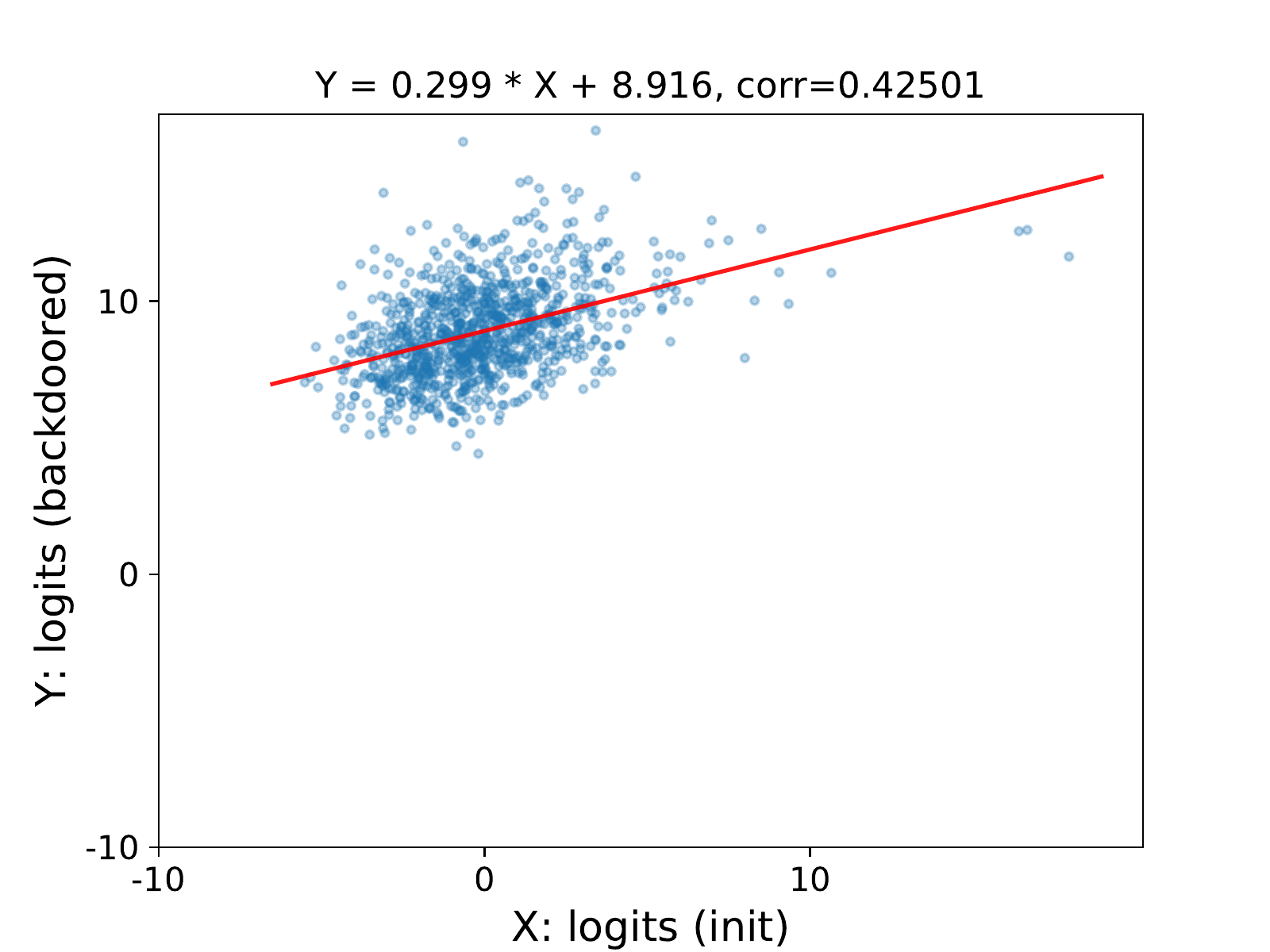}}
\caption{
Visualization of the instance-wise consistency of BadNets, $L_2$ penalty, and our proposed anchoring backdoor methods on Tiny-ImageNet. Here only 640 images are available.
}
\vskip -0.15 in
\label{fig1:logits_tiny}
\end{figure}

\subsection{Discussion of Other Methods for Improving Consistency}

We also conduct experiments of other methods that may improve consistency on CIFAR-10 and results are reported in Table~\ref{tab:extra}. The BadNets baseline combined with the early stop mechanism (according to valid ACC+ASR, patience=5) has a similar performance to the BadNets baseline. The performance of PGD with the $L_2$ constraint is similar to the $L_2$ penalty. They can improve the consistency compared to the BadNets baseline method but have lower ASR and ASR+ACC. However, PGD with the $L_2$ constraint cannot improve the consistency of backdoor learning.

\begin{table}[!h]
\scriptsize
\setlength{\tabcolsep}{2pt}
\centering
\caption{Results of BadNets with the early stop mechanism, and the implementations in \citet{Can-AWP-inject-backdoor} with the PGD algorithm with $L_2$ or $L_{+\infty}$ constraint on CIFAR-10.}
\vskip -0.1 in
{
 \begin{tabular}{c|c|c|c|ccccc}
  \toprule  
 Backdoor Attack & Backdoor & \multicolumn{1}{c|}{Global Consistency} & \multirow{2}{*}{ASR+ACC} & \multicolumn{5}{c}{Instance-wise Consistency} \\
   Method (Setting) & ASR (\%)  & Top-1 ACC (\%) & & Logit-dis & P-dis & KL-div & mKL & Pearson \\
   \midrule
   Clean Model (Full data) & - & 94.72 & - & - & - & - & - \\
   \midrule
    BadNets & 97.63 & 93.58 & 0 & 1.387 & 0.011 & 0.071 & 0.110 & 0.697 \\
    BadNets+Early Stop & 97.45 & 93.68 & -0.08 & 1.441 & 0.011 & 0.072 & 0.116 & 0.707\\
   \midrule 
    $L_2$ penalty ($\lambda$=0.5) & 93.48 & 93.68 & -4.05 & 1.158 & 0.010 & 0.063 & 0.091 & 0.729 \\
   PGD ($L_2,\epsilon=0.8$) & 90.93 & 93.76 & -6.52 & 1.161 & 0.009 & 0.060 & 0.082 & 0.728\\
   PGD ($L_2,\epsilon=0.9$) & 94.83 & 93.75 & -2.63 & 1.104 & 0.010& 0.058 & 0.083 & 0.737\\
   PGD ($L_{+\infty},\epsilon=0.0008$) & 94.79 & 92.79 & -3.63 & 1.583 & 0.015 & 0.108 & 0.182 & 0.662 \\
   PGD ($L_{+\infty},\epsilon=0.001$) & 96.35 & 92.95 & -1.91 & 1.568 & 0.014 & 0.098 & 0.162 & 0.667 \\
    \midrule
    EWC ($\lambda$=0.1) & 95.20 & 93.81 & -2.20 & 1.420 & 0.011 & 0.059 & 0.098 & 0.739 \\
    Surgery ($\lambda$=0.0002) & \textbf{97.67} & 93.89 &  +0.35 & 1.207 & 0.009 & 0.055 & 0.082 & 0.752 \\
   Anchoring (Ours, $\lambda$=2) & 97.28 & \textbf{94.41}  & \textbf{+0.48} & \textbf{0.356} & \textbf{0.003} & \textbf{0.014} & \textbf{0.014} & \textbf{0.859} \\
  \bottomrule 
  \end{tabular}
}
\label{tab:extra}
\vskip -0.1 in
\end{table}

\subsection{Detailed Visualization of Loss Basin}

\begin{figure}[H]
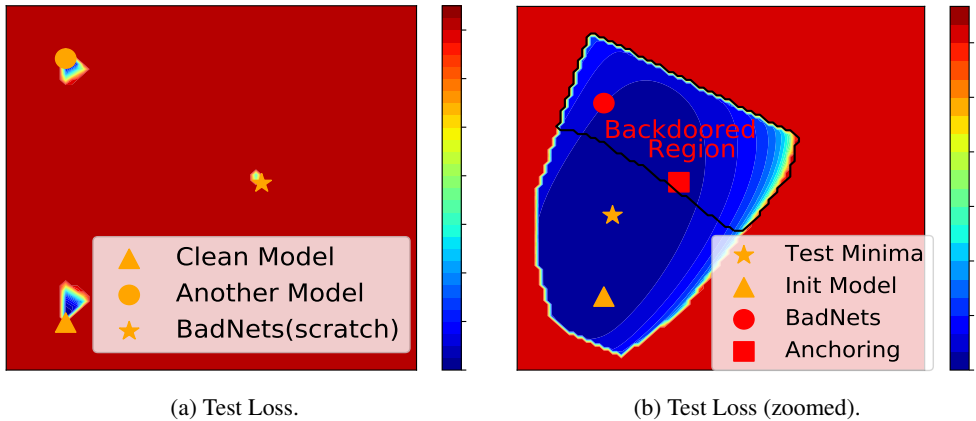

\centering
\subcaptionbox{Test Loss.\label{fig1:loss_a}}{\includegraphics[height=2 in,width=0.45\linewidth]{fig/basin_global.pdf}}
\hfil
\subcaptionbox{Test Loss (zoomed).\label{fig1:loss_b}}{\includegraphics[height=2 in,width=0.45\linewidth]{fig/basin_local.pdf}}
\caption{Detailed visualization of the loss basin.}
\end{figure}

\end{document}